\documentclass[12pt]{article}
\usepackage[margin=1in]{geometry}
\usepackage{setspace}

\usepackage{preamble}
\usepackage{comment}

\providecommand{\coloneqq}{\mathrel{\mathop:}=}

\DeclareRobustCommand{\UCBBQRL}{UCB\textendash BQRL}
\DeclareRobustCommand{\EVIBQ}{EVI\textendash BQ}

\newcommand{\Prob}{\mathbb P}
\newcommand{\Reg}{\operatorname{Reg}}
\newcommand{\eps}{\varepsilon}

\definecolor{algNavy}{HTML}{004488}
\definecolor{algOrange}{HTML}{E69F00}
\definecolor{algTeal}{HTML}{009E73}
\definecolor{algRed}{HTML}{D55E00}
\definecolor{algPurple}{HTML}{AA4499}
\definecolor{algBrown}{HTML}{8C564B}
\definecolor{algCyan}{HTML}{00A6D6}

\tikzset{
    policy legend swatch/.style={
        draw=black!55,
        line width=0.3pt,
        minimum width=0.38cm,
        minimum height=0.16cm,
        inner sep=0pt
    }
}

\pgfplotsset{
    professional legend/.style={
        draw=black!55,
        fill=white,
        fill opacity=0.97,
        text opacity=1,
        rounded corners=1pt,
        font=\scriptsize,
        inner xsep=3pt,
        inner ysep=2pt,
        legend cell align=left,
        row sep=0pt
    }
}

\newcommand{\QMDPDataDir}{.}
\newcommand{\cimult}{1.96}

\newcommand{\addregretseries}[6]{%

    \addplot[
        name path=#1-upper,
        draw=none,
        forget plot
    ]
    table[
        x=episode,
        y expr={
            \thisrow{#2_mean}
            + \cimult*\thisrow{#2_std}/sqrt(3)
        },
        col sep=comma
    ]{#6};

    \addplot[
        name path=#1-lower,
        draw=none,
        forget plot
    ]
    table[
        x=episode,
        y expr={
            \thisrow{#2_mean}
            - \cimult*\thisrow{#2_std}/sqrt(3)
        },
        col sep=comma
    ]{#6};

    \addplot[
        draw=none,
        fill=#3,
        fill opacity=0.06,
        forget plot
    ]
    fill between[
        of=#1-upper and #1-lower
    ];

    \addplot[
        color=#3,
        solid,
        mark=none,
        #4
    ]
    table[
        x=episode,
        y=#2_mean,
        col sep=comma
    ]{#6};

    \addlegendentry{#5}
}

\newcommand{\assetregretplot}[2][%
Cumulative numerical $\tau$-quantile policy gap ($\widehat{\mathcal G}_{\tau}(k)$)]{%

\begin{tikzpicture}

\begin{axis}[
    width=0.97\textwidth,
    height=0.52\textwidth,
    xmin=0,
    xmax=2000,
    xtick={0,250,...,2000},
    xlabel={Episode ($k$)},
    ylabel={#1},
    tick align=outside,
    axis line style={black!75},
    grid=major,
    grid style={
        draw=black!15,
        line width=0.25pt
    },
    tick label style={font=\small},
    label style={font=\small},
    scaled ticks=false,
    legend columns=4,
    legend image post style={xscale=0.70},
    legend style={
        at={(0.5,1.02)},
        anchor=south,
        draw=black!60,
        fill=white,
        rounded corners=1pt,
        font=\scriptsize,
        inner xsep=3pt,
        inner ysep=2pt,
        legend cell align=left,
        row sep=1pt,
        /tikz/every even column/.append style={column sep=2pt},
        /tikz/every odd column/.append style={column sep=7pt}
        }
]

\addregretseries
    {bq}
    {ucb_bqrl}
    {algNavy}
    {very thick}
    {\UCBBQRL}
    {#2}

\addregretseries
    {vi}
    {ucbvi}
    {algOrange}
    {thick}
    {\textsc{UCBVI}}
    {#2}

\addregretseries
    {eg}
    {eps_q}
    {algTeal}
    {thick}
    {$\varepsilon$-greedy Q-learning}
    {#2}

\addregretseries
    {sa}
    {sarsa}
    {algRed}
    {thick}
    {\textsc{SARSA}}
    {#2}

\addregretseries
    {th}
    {thompson}
    {algPurple}
    {thick}
    {Thompson}
    {#2}

\addregretseries
    {pp}
    {ppo}
    {algBrown}
    {thick}
    {\textsc{PPO}}
    {#2}

\addregretseries
    {tr}
    {trpo}
    {algCyan}
    {thick}
    {\textsc{TRPO}}
    {#2}

\end{axis}

\end{tikzpicture}%
}

\pgfplotstableread[col sep=comma]
{\QMDPDataDir/AssetSelling_tau0p1_UCB_BQRL_final_policy.csv}
\dataTauLow

\pgfplotstableread[col sep=comma]
{\QMDPDataDir/AssetSelling_tau0p5_UCB_BQRL_final_policy.csv}
\dataTauMedian

\pgfplotstableread[col sep=comma]
{\QMDPDataDir/AssetSelling_tau0p9_UCB_BQRL_final_policy.csv}
\dataTauHigh

\newcommand{\policyplot}[3][]{%

    \nextgroupplot[
        title={$#3$},
        #1
    ]

    \foreach \s in {0,...,24}{%
        \foreach \h in {1,...,10}{%

            \pgfplotstablegetelem{\s}{[index]\h}\of{#2}%
            \edef\cellvalue{\pgfplotsretval}%

            \pgfmathparse{\h-0.5}%
            \let\xleft\pgfmathresult

            \pgfmathparse{\h+0.5}%
            \let\xright\pgfmathresult

            \pgfmathparse{\s-0.5}%
            \let\ybottom\pgfmathresult

            \pgfmathparse{\s+0.5}%
            \let\ytop\pgfmathresult

            \ifnum\cellvalue=1\relax
                \def\cellcolor{algNavy}%
            \else
                \def\cellcolor{black!18}%
            \fi

            \edef\drawcell{%
                \noexpand\path[
                    fill=\cellcolor,
                    draw=white,
                    line width=0.12pt
                ]
                (axis cs:\xleft,\ybottom)
                rectangle
                (axis cs:\xright,\ytop);%
            }%

            \drawcell
        }%
    }%
}

\makeatletter

\newenvironment{breakablealgorithm}
  {%
   \begin{center}
   \refstepcounter{algorithm}
   \hrule height.8pt depth0pt
   \kern2pt

   \renewcommand{\caption}[2][\relax]{%
     {\raggedright
      \textbf{\ALG@name~\thealgorithm} ##2\par}%

     \ifx\relax##1\relax
       \addcontentsline{loa}{algorithm}{%
         \protect\numberline{\thealgorithm}##2}%
     \else
       \addcontentsline{loa}{algorithm}{%
         \protect\numberline{\thealgorithm}##1}%
     \fi

     \kern2pt
     \hrule
     \kern2pt
   }}
  {%
   \kern2pt
   \hrule
   \relax
   \end{center}
  }

\makeatother
\usepackage{microtype}
\newcommand{\myqed}{\unskip\nobreak\hfill\ensuremath{\square}}

\begin{document}

\title{\fontsize{18pt}{20pt}\selectfont \bf
   Risk-Sensitive Reinforcement Learning with Smoothed Quantile Objectives
}

\author{\fontsize{12pt}{16pt}
    Mohammad Alipour-Vaezi, Huaiyang Zhong, Sajad Khodadadian\thanks{Corresponding Author, 
    \\Email Addresses: alipourvaezi@vt.edu (M. Alipour-Vaezi); hzhong@vt.edu (H. Zhong); sajadk@vt.edu (S. Khodadadian)\\
    ORCID IDs: 0000-0002-7529-1848 (M. Alipour-Vaezi); 0000-0002-2902-1644 (H. Zhong); 0000-0002-5197-4652 (S. Khodadadian)
.}
}

\date{
    \textsuperscript{}\textit{Grado Department of Industrial \& Systems Engineering, Virginia Tech, Blacksburg, VA 24061, USA}
}

\maketitle
\vspace{-5mm}
\begin{abstract}
\noindent
Reinforcement Learning (RL) has achieved tremendous success in recent years. However, the classical foundations of RL do not account for the risk sensitivity of the objective function, which is critical in various fields, including healthcare, finance, etc. A popular approach to incorporate risk sensitivity is to optimize a specific quantile of the cumulative reward distribution. However, exact quantile objectives are non-smooth and can change abruptly under small perturbations of the return distribution, making them difficult to optimize reliably when the transition model must be learned from data. Motivated by this instability, we develop \UCBBQRL, a model-based optimistic learning algorithm that maintains confidence sets for the transition kernel and plans using a lower-buffered quantile criterion. The buffered criterion smooths the exact quantile objective by averaging nearby lower quantiles, thereby improving stability under transition-estimation error. To compute the buffered-quantile policy at each episode, we introduce \EVIBQ, an exact dynamic-programming procedure. We establish a high-probability regret bound for \UCBBQRL, which up to logarithmic factors scales as $\mathcal{O}(\mathrm{e}^{\tau/\rho_\tau}+H^2\sqrt{SAT})$, where $\rho_\tau$ is denoted as the root-level left-plateau threshold, which is a problem-dependent constant. Further, we establish an information-theoretic lower bound of $\Omega(H/\rho_\tau\sqrt{AT})$ for the regret of any algorithm dealing with a quantile objective function. Finally, we prove that the exact point-quantile
evaluation and exact lower-buffered quantile evaluation are PP-hard under polynomial-time Turing reductions, even for a fixed policy in a two-state, one-action finite-horizon MDP.
\end{abstract}

\noindent\textbf{Keywords:} Reinforcement Learning; Risk-Sensitive Control; Quantile Markov Decision Process; Optimism in Face of Uncertainty; Regret Analysis.

\doublespacing
\newpage
\section{Introduction}

\par Reinforcement learning (RL) provides a general framework for sequential decision
making, where a learner interacts with an unknown environment and
improves its policy from observed trajectories \cite{sutton1998reinforcement}.
The dominant theoretical and algorithmic foundation of RL is built around the
maximization of expected cumulative reward. This expectation-based criterion has led to algorithms with strong regret guarantees \cite{jaksch2010near,azar2017minimax} and broad empirical success in applications such as video games, robotic manipulation, news recommendation, and inventory management \cite{mnih2015human,levine2016end,shen2017deep,gijsbrechts2022can}.
However, expectation alone can be an inadequate performance criterion in settings
where the reliability, downside behavior, or tail performance of the return
distribution is central to decision quality. In safety-critical control, healthcare, finance, and service systems, decision makers often care not only about expected performance but also about satisfying safety constraints or achieving specified tail-performance and service-level guarantees \cite{garcia2015comprehensive,gottesman2019guidelines,li2022quantile}.

\par A natural way to encode such distributional preferences is through a quantile
objective. Rather than maximizing the expected return, the learner seeks a policy
that maximizes a fixed $\tau$-quantile of the cumulative reward distribution, where the quantile operator is defined as
$Q_\tau(X)\coloneqq \inf\{x\in\mathbb R:\ \Prob(X\le x)\ge \tau\}$.
This criterion directly captures tail-sensitive behavior: small values of
$\tau$ emphasize conservative or downside performance, while larger values of
$\tau$ emphasize more aggressive upper-tail outcomes. Quantile objectives are
closely related to Value-at-Risk (VaR) and have long been used in finance,
service-level analysis, and risk-sensitive decision making \cite{li2022quantile}. 
Under a known transition kernel, \cite{li2022quantile} introduced the Quantile Markov Decision Process (QMDP), which provides a backward dynamic
programming framework for optimizing the \(\tau\)-quantile value of the cumulative reward.
 A key feature of QMDP is that the optimal policy is
generally not Markovian in the state alone; instead, it depends on evolving quantile levels that must be tracked as part of the state.
This augmentation makes it possible to solve a fundamentally non-Markovian
objective through a backward recursion over state--quantile pairs.

\par Despite this progress in model-based planning with known dynamics, the online learning problem for quantile objectives remains substantially less understood. In the classical expectation-based setting, a successful approach to exploration is the principle of optimism in the face of uncertainty, which underlies algorithms such as UCRL2~\cite{jaksch2010near} and UCBVI~\cite{azar2017minimax}. These methods maintain a set of statistically plausible transition models and select a policy that performs optimistically for some model in this set, thereby encouraging exploration of uncertain parts of the system while exploiting the information gathered so far. Extending this idea to quantile objectives is challenging because quantiles are nonlinear and can be highly sensitive to small changes in the return distribution. A small transition-estimation error may move probability mass across the target quantile level and substantially change the value of a policy. Therefore, online learning with quantile objectives requires new algorithmic and analytical tools beyond those used for expectation-based reinforcement learning.

\par  This paper develops a learning framework for finite-horizon MDPs with a fixed quantile objective and unknown transition probabilities. We consider an episodic setting in which the learner repeatedly interacts with the environment, observes sample trajectories, updates its estimate of the transition law, and then selects a new policy for the next episode. The proposed algorithm, \UCBBQRL, follows a simple high-level principle: among the transition models that remain consistent with the observed data, choose a policy whose risk-sensitive performance is most favorable. To make this method stable for quantile objectives, the algorithm plans with a lower-buffered quantile function, which averages ordinary quantiles over a short interval below the target level. This buffered function smooths the quantile criterion while preserving a direct connection to the exact $\tau$-quantile objective used to define regret.

\par Our main results provide both a finite-sample high-probability upper bound and an information-theoretic lower bound for online learning with quantile objectives. For \UCBBQRL, the high-probability regret upper bound consists of a cumulative buffering term and a statistical learning term whose dominant order is  $\mathcal{O}(\mathrm{e}^{\tau/\rho_\tau}+H^2\sqrt{SAT})$,  where $S$ is the number of states, $A$ is the number of actions, $H$ is the horizon length, $T$ is the number of episodes, and $\rho_\tau$ is denoted as the \textit{root-level left-plateau threshold}, which is a problem-dependent constant. This yields sublinear exact quantile regret under a logarithmically decreasing buffer schedule, up to a finite instance-dependent transient term. Complementing this result, we prove that for every learning algorithm there exist finite-horizon MDP instances for which the expected exact quantile regret is at least $\Omega(H/\rho_\tau\sqrt{AT})$. This lower bound shows that the dependence on $1/\rho_\tau$ is unavoidable and identifies a fundamental statistical barrier. We further show that exact point-quantile evaluation and exact lower-buffered
quantile evaluation are PP-hard under polynomial-time Turing reductions, even for
a fixed policy in a two-state, one-action finite-horizon MDP. This separates the
statistical role of the exact planning oracle from its computational
tractability in general instances.

\paragraph{Contributions.} \begin{itemize} \item \textbf{Online learning for quantile-sensitive control.} We formulate an episodic finite-horizon reinforcement learning problem in which the objective is to maximize a fixed $\tau$-quantile of the cumulative reward distribution under unknown transition dynamics. \item \textbf{A buffered optimistic learning algorithm.} We introduce \UCBBQRL, a model-based learning algorithm that combines transition confidence sets with a lower-buffered smooth approximation of quantile objective. 
\item \textbf{An exact planning procedure for the buffered objective.} We provide \EVIBQ, a dynamic-programming procedure for solving the buffered quantile planning problem that arises inside \UCBBQRL. 
\item \textbf{A high-probability regret bound.} We establish a finite-sample high-probability regret bound for \UCBBQRL. 
\item \textbf{An information-theoretic lower bound.} We prove a lower bound showing that, for some finite-horizon MDPs, any learning algorithm must incur expected exact quantile regret of order \(\Omega\!\left(\frac{H}{\rho_\tau}\sqrt{AT}\right)\), where $\rho_\tau$ measures the relevant root-level quantile plateau. 
\item \textbf{Computational hardness of exact quantile evaluation.} We prove that
exact point-quantile evaluation and exact lower-buffered quantile evaluation are
PP-hard under polynomial-time Turing reductions, even for a fixed policy in a
two-state, one-action finite-horizon MDP.
\end{itemize}

\section{Related Work}
\subsection{Risk-Sensitive MDPs}\label{subsec:rsmdp} 
Risk-sensitive objectives in sequential decision-making have been examined across several methodological paradigms. Foundational studies on percentile and quantile criteria investigated the existence, structural properties, and computational aspects of such objectives in controlled Markov processes with known dynamics, including shortest-path and service-level formulations \cite{filar1995percentile,delage2010percentile}. The Quantile MDP (QMDP) framework further formalizes dynamic programming for fixed quantile levels and establishes backward-recursion procedures and planning algorithms under known transition kernels \cite{li2022quantile}. More recently, \cite{alipour-vaezi2026optimistic} incorporated the quantile risk measure into the RL setting under unknown kernels. A closely related line of work, distributional reinforcement learning, propagates the full return distribution and has led to practical quantile-based parameterizations, including QR-DQN and IQN \cite{bellemare2017distributional,dabney2018distributional,dabney2018implicit,rowland2018analysis,yang2019fully}. Although distributional reinforcement learning methods typically optimize the \emph{expected} return, their estimators offer useful tools for learning quantile slices.

Beyond quantile-based objectives, classical risk-sensitive control has traditionally focused on exponential-utility, or entropic, criteria, which give rise to modified Bellman equations and preserve dynamic consistency \cite{howard1972risk}. Mean--variance MDPs examine trade-offs between expected return and return variance, although they generally suffer from time inconsistency in the absence of special structural conditions \cite{sobel1982variance,mannor2011mean,guo2012mean}. Coherent risk measures, particularly Conditional Value-at-Risk (CVaR) \cite{rockafellar2000optimization,rockafellar2002conditional}, provide convex surrogate formulations and have been extensively studied in reinforcement learning through value-based, policy-gradient, and actor--critic approaches, both as optimization objectives and as constraints \cite{chow2014algorithms,tamar2015optimizing,prashanth2014policy}. Constrained MDPs (CMDPs) and safe reinforcement learning further incorporate chance-type or CVaR-type constraints through Lagrangian, primal--dual, and Lyapunov-based methods \cite{altman2021constrained,Chow2015Risk-Constrained,Zhang2024CVaR-Constrained,M2022Approximate,Ahmadi2020Constrained}. These research directions are largely complementary to our framework, which \emph{maximizes a fixed quantile objective} rather than imposing it as a constraint, thereby requiring direct treatment of the non-smooth and set-valued nature of the quantile backup operator. From a methodological perspective, quantile regression \cite{koenker1978regression} provides the foundation for many practical estimators employed in distributional and quantile reinforcement learning; however, most of this literature does not study online regret under unknown transition dynamics \cite{dabney2018distributional,dabney2018implicit,yang2019fully}.

\subsection{Optimism and Upper Confidence Bounds (UCB)}\label{subsec:ucb}
Optimism in the face of uncertainty has been central to deriving near-minimax regret guarantees in expectation-maximizing reinforcement learning, primarily through planning over confidence sets constructed around empirical transition models. In average-reward communicating MDPs, the UCRL2 algorithm obtains $\tilde{\mathcal O}(D S \sqrt{A T})$-type regret guarantees by employing $\ell_1$ confidence sets together with Extended Value Iteration (EVI) \cite{jaksch2010near}. For finite-horizon settings, the UCBVI algorithm achieves regret bounds of order $\tilde{\mathcal O}(H\sqrt{S A T})$ using Hoeffding-style bonuses and $\tilde{\mathcal O}(\sqrt{H S A T})$ using Bernstein-style bonuses \cite{azar2017minimax}. Robust and distributionally robust MDP frameworks similarly conduct planning against uncertainty sets at decision time, leading to max--min or ambiguity-aware backup operators that are algorithmically related to optimistic EVI-type subroutines \cite{Yu2015Distributionally,Goyal2022Robust,Xu2016Quantile,deo2025design}.

Adapting optimism to \emph{nonlinear} and tail-oriented criteria introduces additional technical challenges. Quantile objectives are inherently non-smooth and may vary discontinuously in response to small perturbations of the return distribution; consequently, the linear value-difference decompositions commonly used in expectation-based analyses do not directly extend to this setting. In one-step decision problems, the bandit literature has developed risk-aware index policies for VaR, CVaR, and more general risk measures \cite{sani2012risk,galichet2013exploration,cassel2023general}, thereby illustrating that confidence-set design must explicitly account for tail sensitivity. Extending these principles to MDPs requires new contraction and sensitivity arguments for the corresponding backup operator.

\subsection{Function Approximation and Model-Based Deep RL}
\label{subsec:function-approx-related}

Function approximation is central to reinforcement learning in large or continuous
state spaces, where tabular representations of value functions, policies, or
transition models are infeasible \cite{guo2021survey,
santamaria1997experiments,
jin2023provably}. Classical approximate dynamic programming and
fitted value-iteration methods approximate value functions using linear models,
basis functions, kernels, or supervised learning techniques
\cite{bertsekas2025neuro,
powell2007approximate,
munos2008finite}. Modern deep
RL extends this idea by using neural networks to parameterize value functions,
policies, and distributional return models
\cite{mnih2015human,
schulman2017proximal}.

Model-based deep RL instead learns an approximation of the transition dynamics
and uses the learned model for planning or policy improvement. Representative
methods combine neural dynamics models with trajectory optimization, model
predictive control, or synthetic rollouts to improve sample efficiency
\cite{deisenroth2011pilco,
chua2018deep,
kurutach2018model,
hafner2019learning}.
Because learned models may be inaccurate outside the observed data distribution,
ensembles, bootstrap methods, and Bayesian approximations are commonly used to
quantify epistemic uncertainty and support exploration
\cite{osband2016deep,lakshminarayanan2017simple,chua2018deep}. However,
most model-based deep RL methods are designed for expected-return optimization,
and their uncertainty mechanisms are typically heuristic or approximate
surrogates for confidence-set planning. Regret analysis for nonlinear,
tail-sensitive objectives under function approximation remains substantially
less developed.

\section{Preliminaries}\label{sec:prelims}
We study a finite-horizon MDP, denoted by
\(\mathcal{M} = (\mathcal{S}, \mathcal{A}, H, P^\star, r)\), \(r=\{r_h\}_{h=0}^{H-1}\), where
\(\mathcal{S}\) represents the state space, \(\mathcal{A}\) denotes the action space,
\(H\) is the planning horizon, \(P^\star_h(\cdot \mid s,a)\) is the true transition
kernel at step \(h\), and \(r_h(s,a)\in[0,1]\) denotes the deterministic reward at step \(h\) for every
state-action pair \((s,a)\). We assume that the state-action space is finite, and we
write \(S = |\mathcal{S}|\) and \(A = |\mathcal{A}|\).

\par For \(h=0,\ldots,H-1\), let $\mathcal H_h
\coloneqq \mathcal S\times(\mathcal A\times\mathcal S)^h$ denote the set of length-\(h\) physical histories. A deterministic
history-dependent policy is a sequence $\pi=\{\pi_h\}_{h=0}^{H-1},
\pi_h:\mathcal H_h\to\mathcal A$. We denote the class of deterministic history-dependent policies by \(\Pi_{\mathrm{det}}\). Since \(\mathcal S\), \(\mathcal A\), and \(H\) are
finite, the class \(\Pi_{\mathrm{det}}\) is finite.

\par In ordinary point-quantile QMDP, a deterministic history-dependent policy
may be implemented through a scalar state--quantile representation
\(\pi=\pi[\mu,\Gamma]\), where $\mu_h:\mathcal S\times[0,1]\to\mathcal A$, and 
$\Gamma_h:\mathcal S\times[0,1]\times\mathcal A\times\mathcal S\to[0,1]$. In that representation, \(q_h\) is a scalar quantile level. This scalar
representation is useful for ordinary QMDP, but it is not the representation
used by the exact lower-buffered frontier algorithms below.

\begin{definition}[Lower-buffered quantile]
\label{def:buffered-quantile}
For \(\beta\in(0,1)\) and \(q\in(0,1]\), define \(\ell_\beta(q)\coloneqq \min\{\beta,q\}.\) For a real-valued random variable \(X\), its lower-buffered \(q\)-quantile is $Q_q^\beta(X)\coloneqq\frac{1}{\ell_\beta(q)}
\int_{q-\ell_\beta(q)}^{q} Q_u(X)\,du$.
For the endpoint cases, we use the conventions \( Q_0^\beta(X) = \inf\operatorname{supp}(X)\) and \( Q_0^\beta(X) =\sup\operatorname{supp}(X)\).
\end{definition}
\par Quantile backups are non-smooth and may change discontinuously under small distributional perturbations. To obtain a tractable stability estimate, we stabilize the objective by using a lower-buffered quantile function as described in Definition~\ref{def:buffered-quantile}. This operator averages ordinary quantiles over a short interval immediately below the target quantile level, a form of local smoothing related to approaches studied in the risk-measure literature \cite{fissler2021elicitability,embrechts2018quantile}. The resulting functional is globally Lipschitz in the Wasserstein distance, which allows us to derive UCB-style regret bounds under the standard finite tabular assumptions.

\par For a deterministic policy \(\pi\in\Pi_{\mathrm{det}}\), a kernel \(P\), a stage
\(h\), and a state \(s\), define the remaining-return law $G_{h,s}^{\pi,P}
\coloneqq\mathcal L\!\left(\sum_{k=h}^{H-1} r_k(S_k,A_k)\,\middle|\,S_h=s,\pi, P\right)$, where the trajectory evolves under policy \(\pi\) and transition kernel \(P\) from stage \(h\) onward. The corresponding exact and buffered state--quantile values are $V_{q,h}^{\pi,P}(s)
\coloneqq
Q_q\!\left(G_{h,s}^{\pi,P}\right)
$, and$
V_{q,h}^{\pi,P,\beta}(s)
\coloneqq
Q_q^\beta\!\left(G_{h,s}^{\pi,P}\right)$.

\par Throughout this paper, we aim to maximize a quantile objective, defined as
the \(\tau\)-quantile of the return distribution for a fixed target level
\(\tau\in(0,1)\). Specifically, we consider
\begin{equation}
\label{eq:opti-goal}
\max_{\pi\in\Pi_{\mathrm{det}}}
V_{\tau,0}^{\pi,P^\star}(\bar s).
\end{equation}
where \(\bar s\in\mathcal S\) is a fixed designated initial state from which each episod starts. Let \(\pi^\star\) denote a maximizer of Equation~\eqref{eq:opti-goal}, and define
\(V_{\tau,0}^{\star}(\bar s)
\coloneqq V_{\tau,0}^{\pi^\star,P^\star}(\bar s) = \max_{\pi\in\Pi_{\mathrm{det}}}
V_{\tau,0}^{\pi,P^\star}(\bar s)\). 

The lower-buffered objective averages the quantile function over levels
immediately below the target level $\tau$. Consequently, whether the buffered
value agrees with the exact $\tau$-quantile depends on the local behavior of the
root quantile function to the left of $\tau$. For any fixed deterministic policy
in the finite-horizon tabular setting, the root return law has finite support,
and hence its quantile function is constant on a nonzero interval ending at
$\tau$. We quantify the smallest width of this left-side plateau uniformly over
all deterministic policies. This quantity determines how small the buffer must
be for the lower-buffered and exact root values to coincide and subsequently
appears in both the regret upper bound and the information-theoretic lower bound.

\begin{definition}[Root-level left-plateau threshold]
\label{def:root-plateau-threshold}
For each deterministic policy \(\pi\in\Pi_{\mathrm{det}}\) and the root return law
\(G_{0,\bar s}^{\pi,P^\star}\), the root-level left-plateau threshold at the target quantile \(\tau\) is defined
as $\rho_\tau\coloneqq\min_{\pi\in\Pi_{\mathrm{det}}}\big\{\tau-\lim_{y\uparrow V_{\tau,0}^{\pi,P^\star}(\bar s)}
\mathbb P\!\big(G_{0,\bar s}^{\pi,P^\star} \allowbreak \le y\big)\big\}$.
\end{definition}

\section{Optimistic Buffered Quantile Learning}\label{sec:UCB--BQRL}
This section develops the proposed optimistic learning framework and establishes
its main theoretical properties. We first introduce \UCBBQRL, which learns the
unknown transition model through confidence sets and plans using the
lower-buffered quantile objective, and then present \EVIBQ, the planning
procedure used to carry out the resulting optimistic re-planning step. We then
establish the correctness of this planning procedure and analyze the learning
performance of \UCBBQRL\ through a high-probability regret bound. Finally, we
study the fundamental limits of the framework through an information-theoretic
regret lower bound and the computational hardness of exact quantile evaluation.
\subsection{\UCBBQRL\space Algorithm}
\par First we introduce \UCBBQRL\space (Algorithm~\ref{alg:UCB--BQRL}), an optimistic
model-based algorithm for the finite-horizon \(\tau\)-quantile objective. This algorithm maintains a transition confidence set and stabilizes optimistic planning through a lower-buffered quantile criterion. We begin by fixing a designated start state $\bar s\in\mathcal S$. Each episode $t \in \{0,1,\dots,T-1\}$ starts at $S^t_0 = \bar s$, and within each episode, steps are indexed by $h\in\{0,\ldots,H-1\}$. Moreover, $N_h^t(s,a)$ and $N_h^t(s,a,s')$ denote visit and transition counts up to (but excluding) episode $t$.
\par Next, consider a fixed confidence level $\delta\in(0,1)$. Let $T$ denote the number of episodes, and $H$ the horizon length. We introduce a universal constant
\(
c\ \ge\ \frac{\max\!\{2,\sqrt{2\log\!( SATH(2^S-2) / {\delta})}\}}{\sqrt{\log\!(2SATH/ \delta )}},
\)
and define the confidence radius as $f_\delta(n)=c\sqrt{\frac{\log\!\frac{2SATH}{\delta}}{\max\{1,n\}}}$. Using this radius, we form an empirical confidence set 
\begin{equation}
\label{eq:confset}
\mathcal C^{t}_{\delta}\coloneqq \Bigl\{P:\,
\| P_h(\cdot|s,a)-\widehat P^{t}_h(\cdot|s,a)\|_1 \le f_\delta(N^{t}_h(s,a)), \ \forall s,a,h\Bigr\}.
\end{equation}

\par This set contains all transition kernels that are statistically plausible given the data observed up to episode $t$. On the global confidence event $\mathcal{E}_\delta$, which is defined as 
$\mathcal E_\delta \coloneqq \big\{ \big\|P_h^\star(\cdot\mid s,a)-\widehat P_h^t(\cdot\mid s,a)\big\|_1 \le f_\delta\!\big(N_h^t(s,a)\big), \forall t,h,s,a \big\} \allowbreak$, we have $P^\star\in\mathcal{C}^t_\delta$ simultaneously for all \(t\), \(h\), \(s\), and \(a\). 

\par The policy \(\pi^t\) is executed in the true MDP during episode \(t\). After observing episode \(t\), the learner updates the counts and empirical kernel to obtain \(\widehat P^{t+1}\), and forms the confidence region \(\mathcal C_\delta^{t+1}\). Given a buffer parameter \(\beta_{t+1}\in(0,\tau]\), it then plans optimistically over \(\mathcal C_\delta^{t+1}\) with respect to the lower-buffered root quantile objective:\((P^{t+1},\pi^{t+1})
\in
\arg\max_{P\in\mathcal C_\delta^{t+1}}
\max_{\pi\in\Pi_{\mathrm{det}}}
V_{\tau,0}^{\pi,P,\beta_{t+1}}(\bar s).\) This ``estimate \(\to\) buffer \(\to\) plan'' structure preserves optimism while replacing the
unstable point quantile by a globally Lipschitz buffered objective.

The exact planning oracle used by Algorithm~\ref{alg:UCB--BQRL} returns a deterministic policy together with a recursive policy label. Fix the ordering \(\mathcal S=\{s_1,\ldots,s_S\}\). The terminal label is denoted by \(L_H\) and contains no action. A nonterminal label has the form $L=(h,s,a,L_1,\ldots,L_S)$, where \(h\) is the stage, \(s\) is the current state, \(a\in\mathcal A\) is the action selected at \((h,s)\), and \(L_i\) is the continuation label used if the next state is \(s_i\). The label is constructed in the same backward recursion that constructs the
return law: whenever a law \(D\) at \((h,s)\) is generated from action \(a\) and
child frontier elements $(D_1,L_1),\ldots,(D_S,L_S) \in \mathfrak D_{h+1}^P(s_1)\times\cdots\times\mathfrak D_{h+1}^P(s_S)$, the associated label is set to $L=(h,s,a,L_1,\ldots,L_S)$. Thus \(L\) is not a new optimization variable; it is a recursive implementation record for the
policy generating \(D\). In implementation, \(L_1,\ldots,L_S\) are references to
previously constructed continuation labels, rather than copied subtrees. When a root label \(L^t\) is selected for episode \(t\), execution begins with \(L_0^t=L^t\). The symbol \(L_h^t\) denotes the continuation label reached at
stage \(h\) along the realized trajectory; in general, \(L_h^t\) is a descendant
of \(L^t\), not a new policy.
\vspace{5mm}
\begin{breakablealgorithm}
\caption{\UCBBQRL}
\label{alg:UCB--BQRL}
\begin{algorithmic}[1]
\State \textbf{Input:} target quantile level $\tau\!\in\!(0,1)$, confidence level $\delta\!\in\!(0,1)$, non-increasing buffer sequence \(\{\beta_t\}_{t\ge0}\subset(0,\tau]\). 
\State \textbf{Initialize:} for all \(h,s,a,s'\), set $N^0_{h}(s,a,s')\!\gets\!0$, and \(N_h^0(s,a)\gets0\), $\widehat P_h^0(s'\mid s,a)\gets \frac{1}{S}$, and complete simplex \(\mathcal C_\delta^0\).
\State Run Algorithm~\ref{alg:EVI-BQ} with
\((\mathcal C_\delta^0,\beta_0)\) to obtain
\((P^0,\pi^0,L^0,\widehat V_{\tau,0}^{0,\beta_0}(\bar s)),\) where \(L^0\) is the selected recursive policy label encoding \(\pi^0\).

\For{$t=0,1,\dots,T-1$} 
  \State \textit{Start:} $S_0^{t}\gets \bar s$ \label{alg:UCB--BQRL:L1}
  \State \textit{Roll out under the recursive policy label \(L^t\)}: set \(L_0^t\gets L^t\). For \(h=0,\ldots,H-1\), write the current label as $L_h^t=(h,S_h^t,a,L_1,\ldots,L_S)$. Set $A_h^t=a$, $S_{h+1}^t\sim P_h^\star(\cdot\mid S_h^t,A_h^t)$. If \(S_{h+1}^t=s_i\), set \(L_{h+1}^t=L_i\).

  \State \textit{Update per–step counts:} for each $h$, $N_h^{t+1}(s,a,s')=\sum_{i=0}^{t} \mathbf{1}\!\left\{\, S^i_{h}=s, A^i_{h}=a, S^i_{h+1}=s' \right\}$, and $N_h^{t+1}(s,a)=\sum_{i=0}^{t} \mathbf{1}\!\left\{\, S^i_{h}=s, A^i_{h}=a \right\}$.
  \State \textit{Update empirical model:} $\widehat P^{t+1}_h(s'|s,a)\gets\dfrac{N^{t+1}_h(s,a,s')}{\max\{1,N^{t+1}_h(s,a)\}}$
  
  \State \textit{Build confidence sets:} form $\mathcal C_\delta^{t+1}$ using Equation~\eqref{eq:confset}.
  \State \textit{Optimistic buffered re-planning:} run Algorithm~\ref{alg:EVI-BQ} with \((\mathcal C_\delta^{t+1},\beta_{t+1})\) to obtain $\big(P^{t+1},\pi^{t+1},L^{t+1},$ $\widehat V_{\tau,0}^{t+1,\beta_{t+1}}(\bar s)\big) \allowbreak$,
   where \(L^{t+1}\) is the selected recursive policy label encoding \(\pi^{t+1}\).\label{alg:UCB--BQRL-L10}
\EndFor
\end{algorithmic}
\end{breakablealgorithm}
\vspace{5mm}

The optimistic re-planning step in Line~\ref{alg:UCB--BQRL-L10} of
Algorithm~\ref{alg:UCB--BQRL} requires selecting a transition kernel
\(P^t\in\mathcal C_\delta^t\) together with a deterministic policy to be executed
in the next episode to solve the problem 
\(\max_{P\in\mathcal C_\delta^t}
\max_{\pi\in\Pi_{\mathrm{det}}}
V_{\tau,0}^{\pi,P,\beta_t}(\bar s)
\). Algorithm~\ref{alg:EVI-BQ} (EVI-BQ) implements this step by dynamic programming over
finite return laws. It is written as an extended value-iteration procedure, and for each fixed candidate model \(P\), it solves the corresponding fixed-model buffered planning problem by working backward from stage \(H\) to stage \(0\).

The main distinction from an ordinary scalar value iteration is that the
lower-buffered value of a policy is computed from the full return law generated
by that policy. Therefore, the recursion cannot store only one scalar value at
each state and stage. Instead, it stores a frontier of achievable return laws.
For a fixed model \(P\), stage \(h\), and state \(s\), consider starting the
process from \(S_h=s\) and then following a deterministic continuation policy
until the horizon. Each such continuation policy induces a probability
distribution over the remaining cumulative reward \(\sum_{k=h}^{H-1} r_k(S_k,A_k)
\). Algorithm~\ref{alg:EVI-BQ} stores these distributions in
\(\mathfrak D_h^P(s)\). Thus, \(\mathfrak D_h^P(s)\) is the collection of all
remaining-return laws that are achievable from \((h,s)\) under model \(P\). At the terminal stage \(H\), no rewards remain to be collected. Hence, the
remaining cumulative reward is the empty sum, which is equal to zero with
probability one. In the finite-law representation, this terminal return law is
written as \(\delta_0=\{(1,0)\}\), where the first coordinate \(1\) denotes probability mass one and the second
coordinate \(0\) denotes the return value. Thus, \(\delta_0\) is the point mass at zero.

We represent a finite return law \(D\) as a finite set of probability--value
pairs $D=\{(\lambda_m,x_m)\}_{m=1}^{M_D}$, where $\lambda_m\ge 0$, and $\sum_{m=1}^{M_D}\lambda_m=1$. Pairs with zero probability are removed, and pairs with the same value are merged
by summing their probabilities. We denote this operation by
\(\operatorname{Merge}(\cdot)\). For a finite law \(D\), let
\(x_{(1)}<\cdots<x_{(M)}\) be the sorted support values after merging, let
\(\lambda_{(j)}\) be the corresponding probabilities, and define $c_{j\ge1}=\sum_{\ell=1}^{j}\lambda_{(\ell)}$ and $c_{j=0}= 0$. For a finite law \(D\), the ordinary quantile function is a step function. In
particular, after sorting and merging the support values, for $j=1,\ldots,M$ the value
\(x_{(j)}\) is active for quantile levels \(u\) satisfying $c_{j-1}<u\le c_j$. Therefore, the lower-buffered quantile $Q_q^\beta(D)
=
\frac{1}{\ell_\beta(q)}
\int_{q-\ell_\beta(q)}^q Q_u(D)\,du$ can be computed by summing the contribution of each support value
\(x_{(j)}\) over the portion of the buffer interval
\([q-\ell_\beta(q),q]\) for which \(Q_u(D)=x_{(j)}\). The length of the overlap
between the buffer interval \([q-\ell_\beta(q),q]\) and the active quantile
interval \((c_{j-1},c_j]\) is $\left[
\min\{q,c_j\}
-
\max\{q-\ell_\beta(q),c_{j-1}\}
\right]_+$. Thus, for \(q\in(0,1]\), the lower-buffered quantile of \(D\) is evaluated as $Q_q^\beta(D)
=
\frac{1}{\ell_\beta(q)}
\sum_{j=1}^{M}
x_{(j)}
\left[
\min\{q,c_j\}
-
\max\{q-\ell_\beta(q),c_{j-1}\}
\right]_+$, where \(\ell_\beta(q)=\min\{\beta,q\}\) and \([z]_+=\max\{z,0\}\). As defined above, every frontier element has the form \((D,L)\), where \(D\) is
an achievable return law and \(L\) is a recursive label encoding one
deterministic continuation policy that generates \(D\). When a duplicate return
law is removed, one associated label is retained, so the duplicate-removal step
does not change the set of achievable return laws.

The backward recursion constructs return laws and recursive labels simultaneously. Suppose the frontiers \(\mathfrak D_{h+1}^P(s_1), \allowbreak \ldots,\mathfrak D_{h+1}^P(s_S)\) have already been constructed. To form a frontier element at \((h,s)\), the algorithm chooses an action \(a\) and one child frontier element \((D_i,L_i)\) for each possible next state \(s_i\). The one-step return law is then the mixture over next states:
with probability \(P_h(s_i\mid s,a)\), the process moves to \(s_i\), receives
the immediate reward \(r_h(s,a)\), and then follows the continuation law \(D_i\). Repeating this
operation for all actions and all continuation-law tuples gives the full
frontier \(\mathfrak D_h^P(s)\). The associated recursive label is $L=(h,s,a,L_1,\ldots,L_S)$.

After the backward recursion is complete for a fixed model \(P\), the root
frontier \(\mathfrak D_0^P(\bar s)\) contains all return laws achievable from the
initial state under deterministic policies. The algorithm selects the law
\(D_P^\star\) in this frontier with the largest lower-buffered value
\(Q_\tau^{\beta_t}(D_P^\star)\). Finally, after performing this fixed-model
optimization for every \(P\in\mathcal C_\delta^t\), the algorithm selects the
optimistic model \(P^t\), namely the model whose best achievable root law has the
largest lower-buffered value.

In Algorithm~\ref{alg:EVI-BQ}, Line~\ref{alg:evibq-terminal} initializes the
terminal frontiers with the zero-return law. Lines~\ref{alg:evibq-law}--
\ref{alg:evibq-label} perform the one-step backward extension described above. The duplicate-removal step in Line~\ref{alg:evibq-duplicate} is only a bookkeeping step; it does not change the set of achievable return laws, because at least one recursive label is retained for
each distinct law. Line~\ref{alg:evibq-root-law} solves the fixed-model buffered planning problem
for a given \(P\), and Line~\ref{alg:evibq-model} selects the optimistic model
from the confidence set.

Algorithm~\ref{alg:EVI-BQ} is an exact planning oracle. When
\(\mathcal C_\delta^t\) is infinite, the outer maximization over
\(P\in\mathcal C_\delta^t\) is understood as an exact optimization step, not as a
finite enumeration.
\vspace{5mm}
\begin{breakablealgorithm}
\caption{\EVIBQ: extended value iteration for buffered optimistic re-planning}
\label{alg:EVI-BQ}
\begin{algorithmic}[1]
\State \textbf{Input:} confidence set \(\mathcal C_\delta^t\), reward function
\(r\), target quantile \(\tau\in(0,1)\), buffer level
\(\beta_t\in(0,\tau]\), initial state \(\bar s\).
\State Fix an ordering \(\mathcal S=\{s_1,\ldots,s_S\}\).

\For{each fixed model \(P\in\mathcal C_\delta^t\)}
    \State Initialize the terminal frontiers:
    \(
    \mathfrak D_H^P(s)
    \gets
    \{(\delta_0,L_H)\},
    \qquad
    \forall s\in\mathcal S.
    \)
    \label{alg:evibq-terminal}

    \For{\(h=H-1,H-2,\ldots,0\)}
        \For{each state \(s\in\mathcal S\)}
            \State Set \(\mathfrak D_h^P(s)\gets\emptyset\).

            \For{each action \(a\in\mathcal A\)}
                \For{each tuple $\big((D_1,L_1),\ldots,(D_S,L_S)\big)
                \in
                \mathfrak D_{h+1}^P(s_1)
                \times\cdots\times
                \mathfrak D_{h+1}^P(s_S)$}
                    \State Construct
                    \(D \gets \operatorname{Merge} \left(\bigcup_{i=1}^{S}\left\{
                    \left(P_h(s_i\mid s,a)\lambda,\,r_h(s,a)+x \right):(\lambda,x)\in D_i \right\} \right).\)
                    \label{alg:evibq-law}

                    \State Create the recursive policy label
                    \(L\gets (h,s,a,L_1,\ldots,L_S).
                    \)
                    \label{alg:evibq-label}

                    \State Add \((D,L)\) to \(\mathfrak D_h^P(s)\).
                \EndFor
            \EndFor

            \State Remove duplicate laws from \(\mathfrak D_h^P(s)\), keeping
            one associated recursive label for each distinct law.
            \label{alg:evibq-duplicate}
        \EndFor
    \EndFor

    \State Select a best buffered root law under model \(P\):
    \((D_P^\star,L_P^\star)
    \in
    \arg\max_{(D,L)\in\mathfrak D_0^P(\bar s)}
    Q_\tau^{\beta_t}(D).
    \)
    \label{alg:evibq-root-law}

    \State Let \(\pi^P\) be the deterministic history-dependent policy obtained by traversing the selected root label \(L_P^\star\), and set $\widehat V_{\tau,0}^{P,\beta_t}(\bar s)\gets Q_\tau^{\beta_t}(D_P^\star)$. \label{alg:evibq-fixed-model-value}

\EndFor

\State Select the optimistic model: $P^t
\in
\arg\max_{P\in\mathcal C_\delta^t}
\widehat V_{\tau,0}^{P,\beta_t}(\bar s)$. \label{alg:evibq-model}

\State Set $L^t\gets L_{P^t}^\star,
\qquad
\pi^t\gets \pi^{P^t},
\qquad
\widehat V_{\tau,0}^{\,t,\beta_t}(\bar s)
\gets
\widehat V_{\tau,0}^{P^t,\beta_t}(\bar s)$.
\label{alg:evibq-policy}
\State \textbf{Output:} optimistic model \(P^t\), deterministic policy
\(\pi^t\), recursive label \(L^t\), and buffered optimistic
root value \(\widehat V_{\tau,0}^{\,t,\beta_t}(\bar s)\).
\end{algorithmic}
\end{breakablealgorithm}
\vspace{5mm}
The construction above explains how \EVIBQ\ builds return-law frontiers and
selects a root law, but it remains to verify that this procedure indeed solves
the optimistic planning problem used by \UCBBQRL. In particular, we need to
establish that the backward recursion captures every return law that can be
generated by a deterministic history-dependent policy under a given transition
model. If this is true, then maximizing the lower-buffered quantile over the
root frontier is equivalent to maximizing over all such policies, and the final
maximization over the confidence set yields the desired optimistic model--policy
pair. The following proposition establishes precisely this connection between
the frontier recursion and the optimistic re-planning problem.
\begin{proposition}[Correctness of \EVIBQ]
\label{prop:evibq}
Fix an episode \(t\), a target quantile level \(\tau\in(0,1)\), and a buffer
\(\beta_t\in(0,\tau]\). Suppose that \(\mathcal C_\delta^t\) is nonempty and
that the maximizers in Algorithm~\ref{alg:EVI-BQ} exist. Then the returned pair
\((P^t,\pi^t)\) satisfies $P^t\in\mathcal C_\delta^t$ and $V_{\tau,0}^{\pi^t,P^t,\beta_t}(\bar s)
= \max_{P\in\mathcal C_\delta^t}
\max_{\pi\in\Pi_{\mathrm{det}}}
V_{\tau,0}^{\pi,P,\beta_t}(\bar s)$. Consequently, Algorithm~\ref{alg:EVI-BQ} exactly implements the optimistic
buffered re-planning step required by Algorithm~\ref{alg:UCB--BQRL}.
\end{proposition}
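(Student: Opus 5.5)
The plan is to prove, for each fixed model $P\in\mathcal C_\delta^t$, a frontier characterization lemma by backward induction on $h$. The lemma will state that the set of laws in $\mathfrak D_h^P(s)$ equals
\[
\mathcal G_h^P(s)\coloneqq\{G_{h,s}^{\pi,P}:\pi\in\Pi_{\mathrm{det}}\}.
\]
It will also state that every pair $(D,L)\in\mathfrak D_h^P(s)$ has a label $L$ whose traversal from $(h,s)$ defines a deterministic continuation policy generating exactly $D$. Here a "continuation policy from $(h,s)$" means the restriction of some $\pi\in\Pi_{\mathrm{det}}$ to histories passing through $S_h=s$. Only the conditional behaviour after stage $h$ matters, so $G_{h,s}^{\pi,P}$ depends on $\pi$ only through this restriction.

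The base case $h=H$ is immediate. The remaining return is the empty sum, so every policy gives $\delta_0$, and the terminal label $L_H$ generates it.

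For the inductive step I would prove two inclusions.

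\textbf{Soundness ($\subseteq$).} Take an element built from action $a$ and child elements $(D_i,L_i)$. Each $L_i$ generates $D_i$ from $(h+1,s_i)$. Define the policy that plays $a$ at $(h,s)$ and then, on moving to $s_i$, follows $L_i$. The strong Markov/conditioning property under $P$ gives
\[
G_{h,s}=\sum_i P_h(s_i\mid s,a)\,\big(r_h(s,a)+D_i\big),
\]
a mixture of shifted laws. This is exactly the output of Line~\ref{alg:evibq-law}, since $\operatorname{Merge}$ does not change the law. The label $(h,s,a,L_1,\ldots,L_S)$ encodes this policy.

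\textbf{Completeness ($\supseteq$).} Fix any $\pi$ and let $a=\pi_h(\text{history ending at } s)$. Conditional on $S_{h+1}=s_i$, the continuation of $\pi$ is a deterministic policy from $(h+1,s_i)$ with law $G^{\pi,P}_{h+1,s_i}$. By the induction hypothesis that law lies in $\mathfrak D_{h+1}^P(s_i)$. The tuple of these children together with $a$ is enumerated by the loops, so $G_{h,s}^{\pi,P}\in\mathfrak D_h^P(s)$.

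Duplicate removal keeps at least one label per distinct law, and every retained label is valid by soundness, so both properties survive Line~\ref{alg:evibq-duplicate}. Histories not reachable under $\pi$ with positive probability do not affect the law. Hence the fact that labels ignore off-path history is harmless.

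With the lemma at $h=0$, $s=\bar s$, and since $V^{\pi,P,\beta_t}_{\tau,0}(\bar s)=Q^{\beta_t}_\tau(G^{\pi,P}_{0,\bar s})$ is a functional of the law alone, Line~\ref{alg:evibq-root-law} yields
\[
\widehat V^{P,\beta_t}_{\tau,0}(\bar s)=\max_{\pi}V^{\pi,P,\beta_t}_{\tau,0}(\bar s).
\]
The maximum is attained because the frontier is finite, as $\Pi_{\mathrm{det}}$ is finite. By soundness, the traversed policy $\pi^P$ generates $D_P^\star$, so $V^{\pi^P,P,\beta_t}_{\tau,0}(\bar s)=\widehat V^{P,\beta_t}_{\tau,0}(\bar s)$. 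Finally, Line~\ref{alg:evibq-model} takes the assumed maximizer over $P\in\mathcal C_\delta^t$, so $P^t\in\mathcal C_\delta^t$. The value identity follows by exchanging the two maxima,
\[
\max_P\max_\pi=\max_P \widehat V^{P,\beta_t}_{\tau,0}(\bar s),
\]
and evaluating at $(P^t,\pi^t)$.

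The main obstacle I expect is making the completeness step rigorous for history-dependent policies. The continuation of $\pi$ after $(h,s,a,s_i)$ must be shown to be a legitimate element of $\Pi_{\mathrm{det}}$ when restarted at stage $h+1$. I would handle this by prefixing the fixed history and by using the fact that the remaining-return law depends only on the policy's values on histories extending that prefix. The soundness direction needs the analogous care: the glued label-policy has to be extended arbitrarily off-path to a full element of $\Pi_{\mathrm{det}}$.
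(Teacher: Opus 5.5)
Your proposal is correct and follows essentially the same route as the paper's proof. Both use a backward induction showing that $\mathfrak D_h^P(s)$ is exactly the set of return laws achievable by deterministic continuation policies from $(h,s)$, then maximize $Q_\tau^{\beta_t}$ over the root frontier and finally over $P\in\mathcal C_\delta^t$. You also make explicit two points the paper leaves implicit: soundness shows that the label-traversed policy $\pi^P$ actually generates $D_P^\star$, so $V_{\tau,0}^{\pi^P,P,\beta_t}(\bar s)=\widehat V_{\tau,0}^{P,\beta_t}(\bar s)$, and continuation policies are made legitimate elements of $\Pi_{\mathrm{det}}$ by fixing a history prefix and extending off-path.
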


% \begin{remark}[Computationally tractable approximations]
% \label{rem:evibq-tractable-approximation}
% Algorithm~\ref{alg:EVI-BQ} is stated as an exact optimistic planning oracle. The
% outer maximization over $\mathcal C_\delta^t$ is not claimed to be computationally efficient in full generality. Its role is
% to specify the exact policy--model pair required by the regret analysis of
% Algorithm~\ref{alg:UCB--BQRL}. A computationally tractable implementation can instead replace
% \(\mathcal C_\delta^t\) by a finite candidate set $\mathcal G^t
% \subseteq
% \mathcal C_\delta^t$, constructed, for example, by discretizing the confidence set, sampling plausible
% transition kernels, or selecting representative extreme-point models. The
% planner then returns the best model in \(\mathcal G^t\). This finite-candidate
% version is an approximate optimistic planner. If its planning error satisfies $\max_{P\in\mathcal C_\delta^t}
% \max_{\pi\in\Pi_{\mathrm{det}}}
% \big(V_{\tau,0}^{\pi,P,\beta_t}(\bar s)
% -
% V_{\tau,0}^{\pi^t,P^t,\beta_t}(\bar s)\big)
% \le
% \varepsilon_{\mathrm{plan}}^t$, then the regret bound acquires an additional additive term $\sum_{t=0}^{T-1}\varepsilon_{\mathrm{plan}}^t$. Consequently, the sublinear regret guarantee of Theorem~\ref{thm:UCB-BQRL-finite}
% is preserved only when this cumulative approximation error is sublinear in
% \(T\). In particular, if \(\varepsilon_{\mathrm{plan}}^t\) is bounded away from
% zero uniformly in \(t\), then the additional term is linear in \(T\), and the
% sublinear guarantee is no longer obtained from the stated analysis.
% \end{remark}

\subsection{Theoretical Properties}
\subsubsection{High-probability Regret Upper Bound}
\par Having specified the algorithm, we now turn to its performance analysis. Our objective is to measure how much reward is lost by following Algorithm~\ref{alg:UCB--BQRL} compared to the optimal $\tau$–quantile policy in the true environment. 
This gap is captured by the notion of \emph{quantile regret}:
\begin{equation}
\label{eq:regret}
\Reg_{\tau}(T)
\;=\;
\sum_{t=0}^{T-1}
\Bigl(
V^{\pi^\star,P^\star}_{\tau,0}(\bar s)-
V^{\pi^{t},P^\star}_{\tau,0}(\bar s)
\Bigr),
\end{equation}
where $\pi^\star$ is the optimal $\tau$–quantile policy under the true kernel $P^\star$. Note that $\tau$ is fixed in the regret definition, while other quantile levels $q \in (0,1)$ appear internally only as evaluation arguments of \(Q_q\) and
\(Q_q^\beta\). Here, for \(\beta\in(0,\tau]\), we also define the root-level buffering gap $\Delta_\beta \coloneqq \sup_{\pi\in\Pi_{\mathrm{det}}}
\left|V_{\tau,0}^{\pi,P^\star}(\bar s)-V_{\tau,0}^{\pi,P^\star,\beta}(\bar s)
\right|$.

\begin{theorem}[High-probability quantile regret of \UCBBQRL]
\label{thm:UCB-BQRL-finite}
Let \(\{\beta_t\}_{t\ge0}\subset(0,\tau]\) be non-increasing. Then, for
\UCBBQRL\space with confidence radii in
Equation~\eqref{eq:confset}, with probability at least \(1-2\delta\), $\Reg_\tau(T)
\le\;
2\sum_{t=0}^{T-1}\Delta_{\beta_t} +
\frac{c}{\beta_{T-1}}
\sqrt{SATH^4\log\!\frac{2SATH}{\delta}}
+
\frac{c}{\beta_{T-1}}
\sqrt{
\frac{TH^4}{2}\log\!\frac{2SATH}{\delta}
\log\!\frac{1}{\delta}
}$.
\end{theorem}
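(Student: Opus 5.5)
The plan is to run the standard optimism-based regret decomposition, carried out on the buffered objective, and to pay for the switch between exact and buffered quantiles through the gap $\Delta_{\beta_t}$.

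\textbf{Step 1 (switching objectives).} On the confidence event $\mathcal E_\delta$ we have $P^\star\in\mathcal C_\delta^t$ for every $t$. Proposition~\ref{prop:evibq} then gives
\[
V_{\tau,0}^{\pi^\star,P^\star,\beta_t}(\bar s)\le V_{\tau,0}^{\pi^t,P^t,\beta_t}(\bar s).
\]
By the definition of $\Delta_\beta$, applied once to $\pi^\star$ and once to $\pi^t$,
\[
V_{\tau,0}^{\pi^\star,P^\star}(\bar s)-V_{\tau,0}^{\pi^t,P^\star}(\bar s)\le 2\Delta_{\beta_t}+\bigl(V_{\tau,0}^{\pi^t,P^t,\beta_t}(\bar s)-V_{\tau,0}^{\pi^t,P^\star,\beta_t}(\bar s)\bigr).
\]
Summing over $t$ produces the first term of the bound. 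It remains to control the per-episode model-mismatch term for the fixed executed policy $\pi^t$.

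\textbf{Step 2 (Lipschitz property of the buffered quantile).} For laws on $[0,H]$, the map $X\mapsto Q_q^\beta(X)$ satisfies
\[
|Q_q^\beta(X)-Q_q^\beta(Y)|\le \frac1{\ell_\beta(q)}\int_0^1|Q_u(X)-Q_u(Y)|\,du=\frac{W_1(X,Y)}{\ell_\beta(\tau)}.
\]
Since $\beta_t\le\tau$, we have $\ell_{\beta_t}(\tau)=\beta_t\ge\beta_{T-1}$, using that the buffer sequence is non-increasing. Next bound $W_1$ by total variation: both laws are supported in $[0,H]$, so $W_1\le H\cdot\mathrm{TV}$.

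\textbf{Step 3 (simulation lemma for the trajectory law).} For a fixed history-dependent $\pi^t$, a telescoping (hybrid) argument over stages gives
\[
\mathrm{TV}\bigl(\text{traj}^{\pi^t,P^t},\text{traj}^{\pi^t,P^\star}\bigr)\le \tfrac12\,\mathbb E_{\pi^t,P^\star}\sum_{h=0}^{H-1}\|P^t_h(\cdot|S_h,A_h)-P^\star_h(\cdot|S_h,A_h)\|_1.
\]
The hybrids switch the kernel one stage at a time. Each one-step swap is bounded by $\|\cdot\|_1$ at the visited pair, with the expectation taken under the true dynamics, because the earlier stages use $P^\star$. On $\mathcal E_\delta$ each $\ell_1$ term is at most $2f_\delta(N_h^t(S_h,A_h))$. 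Combining with Step 2, the per-episode gap is at most
\[
\frac{H}{\beta_{T-1}}\,\mathbb E_{\pi^t,P^\star}\sum_h f_\delta(N_h^t(S_h,A_h)),
\]
up to the constant; the factor $H^2$ under the square root should absorb these constants and the factors of $H$.

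\textbf{Step 4 (from expectations to realized trajectories, then pigeonhole).} Replace the conditional expectation by the realized sum along episode $t$. Each per-episode sum is bounded by $Hc\sqrt{\log(2SATH/\delta)}$, so the differences form a martingale difference sequence. Azuma--Hoeffding with failure probability $\delta$ gives a deviation of order
\[
\frac{H}{\beta_{T-1}}\,Hc\sqrt{\log\tfrac{2SATH}\delta}\,\sqrt{\tfrac T2\log\tfrac1\delta},
\]
which is the third term. The realized sum $\sum_t\sum_h f_\delta(N_h^t(S_h^t,A_h^t))$ is handled by the standard count pigeonhole,
\[
\sum_{h}\sum_{s,a}\sum_{n<N_h^T(s,a)}\frac1{\sqrt{\max\{1,n\}}}\lesssim \sum_h\sqrt{SA\,T}=H\sqrt{SAT}.
\]
Multiplying by $H/\beta_{T-1}$ gives the second term.

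\textbf{Step 5 (probability).} A union bound over $\mathcal E_\delta$, which holds with probability $1-\delta$ by the $\ell_1$ concentration inequality for distributions on $S$ points together with the choice of $c$, and the Azuma event gives probability $1-2\delta$.

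\textbf{Main obstacle.} The main obstacle is Step 3 together with the constant bookkeeping. Because $\pi^t$ is history-dependent, the simulation lemma must be stated on trajectory laws rather than through a Bellman value decomposition. The Lipschitz constant $1/\beta_t$ must be applied only at the root, which is allowed since $Q^\beta_\tau$ is a function of the root law alone. Matching the exact constants $c$ and $H^4$ in the statement may require some care in how the factor $H$ from $W_1\le H\cdot\mathrm{TV}$ and the $\ell_1$-to-TV factor are tracked.
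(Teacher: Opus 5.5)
\textbf{Verdict.} There is a constant-level gap in the second term. Your architecture matches the paper's: the same optimistic decomposition giving $2\Delta_{\beta_t}$, the same $1/\beta_t$ Wasserstein--Lipschitz bound at the root, Azuma--Hoeffding, the pigeonhole count, and the same union bound. The one real departure is Step~3, and that is where the stated constant is lost.

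\textbf{Where the factor $2$ is lost.} Passing through $W_1\le H\cdot\mathrm{TV}$ and the total variation of trajectory laws charges every stage the full weight $H$. Your per-episode bound is therefore $\frac{H}{\beta_t}\mathbb E\sum_h f_\delta(N_h^t(S_h,A_h))$, with stage weights summing to $H^2$. The pigeonhole estimate $\sum_{j<n}1/\sqrt{\max\{1,j\}}\le 2\sqrt n$ is asymptotically sharp, and Cauchy--Schwarz is tight under uniform visits. So your second term is
\[
\frac{2c}{\beta_{T-1}}\sqrt{SATH^4\log\tfrac{2SATH}{\delta}},
\]
which is twice the term in the statement. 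Nothing in the later steps recovers this factor, so your remark that the $H^4$ ``should absorb these constants'' does not hold.

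\textbf{The missing idea.} Bound $W_1$ of the root return laws directly, not through TV. Couple the two trajectories so they agree until the first transition at which they separate, using a maximal coupling at each step. If the first separation occurs at the transition from stage $h$ to $h+1$, only the $H-h-1$ rewards at stages $h+1,\dots,H-1$ can differ. This gives
\[
W_1\bigl(G_{0,\bar s}^{\pi^t,P^t},G_{0,\bar s}^{\pi^t,P^\star}\bigr)\le\sum_{h=0}^{H-1}(H-h-1)\,\mathbb E_{\pi^t,P^\star}\bigl[\mathrm{TV}\bigl(P_h^t(\cdot\mid S_h,A_h),P_h^\star(\cdot\mid S_h,A_h)\bigr)\bigr],
\]
which is Lemma~\ref{lem:return-law-w1}. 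Equivalently, run your hybrid argument on $W_1$ rather than TV: consecutive hybrids $k$ and $k+1$ can be coupled to differ only in the last $H-k-1$ rewards. The weights then sum to $\Lambda_H=H(H-1)/2\le H^2/2$, which halves the second term and gives the stated constant.

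\textbf{Why your third term already matches.} You used the range form of Azuma--Hoeffding: conditional range of length $Hc\sqrt{\log(2SATH/\delta)}$, deviation $\sqrt{(T/2)\log(1/\delta)}$. The paper uses the cruder $|Y_t|\le c\Lambda_H\sqrt{\log(2SATH/\delta)}$ with deviation $\sqrt{2T\log(1/\delta)}$. Your factor-$2$ loss from the stage weights and your factor-$2$ gain from the sharper Azuma step cancel in the martingale term. They do not cancel in the pigeonhole term. With the weighted coupling plus your sharper Azuma step, you would actually get a third term half the stated one.
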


The regret bound in Theorem~\ref{thm:UCB-BQRL-finite} has two components. The first
term, \( 2 \sum_{t=0}^{T-1}\Delta_{\beta_t},\)
is the price of evaluating exact \(\tau\)-quantile regret while planning with a
lower-buffered criterion. By Lemma~\ref{lem:buffer-gap}, each summand \(\Delta_{\beta_t}\) is zero whenever \(\beta_t\le\rho_\tau\). Hence, under any buffer schedule that eventually falls below \(\rho_\tau\), only a finite initial segment of the buffering cost can be nonzero. The remaining two terms are the statistical transition-estimation error. The factor \(1/\beta_{T-1}\) comes from the global Wasserstein Lipschitz constant of the lower-buffered quantile, while the horizon dependence comes from summing the maximum number of future rewards that can be affected by transition-model mismatches across the episode.

\begin{corollary}[Regret under the logarithmic buffer]
\label{cor:buffered-UCB-BQrl-log-rate}
For each $t$ choose \(\beta_t = \frac{\tau} {\log(\mathrm e + t)},\) and define \(K_\tau \coloneqq \left\lceil \exp\!\left(\frac{\tau}{\rho_\tau}\right)-\mathrm e \right\rceil\). Then, under the conditions of Theorem~\ref{thm:UCB-BQRL-finite}, with probability at least \(1-2\delta\), $\Reg_\tau(T)\le\;2H K_\tau+ \allowbreak \frac{c\log(\mathrm e+T-1)}{\tau}\sqrt{SA TH^4\log\!\frac{2SATH}{\delta}} + \allowbreak\frac{c\log(\mathrm e+T-1)}{\tau}\sqrt{\frac{TH^4}{2}\log\!\frac{2SATH}{\delta}\log\!\frac{1}{\delta}} \allowbreak$. Consequently,
\(\Reg_\tau(T) = \widetilde{\mathcal O}\!\left(
\frac{1}{\tau}\sqrt{SATH^4}
\right) + O(K_\tau H)\).
\end{corollary}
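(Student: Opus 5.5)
The plan is to specialize Theorem~\ref{thm:UCB-BQRL-finite} to the schedule $\beta_t=\tau/\log(\mathrm e+t)$ and bound its two components separately. First I will check that this schedule is admissible. Since $\log(\mathrm e+t)\ge 1$ for all $t\ge 0$ and $t\mapsto\log(\mathrm e+t)$ is increasing, we have $\beta_t\in(0,\tau]$ and $\{\beta_t\}$ is non-increasing. Hence the hypotheses of Theorem~\ref{thm:UCB-BQRL-finite} hold, and its bound is valid with probability at least $1-2\delta$.

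\textbf{Statistical terms.} These follow by direct substitution. We have $1/\beta_{T-1}=\log(\mathrm e+T-1)/\tau$. Inserting this into the second and third terms of Theorem~\ref{thm:UCB-BQRL-finite} gives exactly the two displayed square-root terms of the corollary, with no further estimation needed.

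\textbf{Buffering term: a uniform bound.} I need $2\sum_{t=0}^{T-1}\Delta_{\beta_t}\le 2HK_\tau$, which I will get from two facts. The first is a uniform bound $\Delta_\beta\le H$. Rewards lie in $[0,1]$ over $H$ steps, so every root return law $G_{0,\bar s}^{\pi,P^\star}$ is supported in $[0,H]$. Therefore both $V_{\tau,0}^{\pi,P^\star}(\bar s)$ and its buffered counterpart, which is an average of quantiles of the same law, lie in $[0,H]$. Their difference is thus at most $H$, uniformly over $\pi\in\Pi_{\mathrm{det}}$.

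\textbf{Buffering term: vanishing for small buffers.} The second fact is Lemma~\ref{lem:buffer-gap}: $\Delta_\beta=0$ whenever $\beta\le\rho_\tau$. Intuitively, on $[\tau-\beta,\tau]$ the root quantile function equals the constant $V_{\tau,0}^{\pi,P^\star}(\bar s)$ for every deterministic $\pi$, by the definition of $\rho_\tau$. I will then identify which indices can contribute. The condition $\beta_t\le\rho_\tau$ is equivalent to $\log(\mathrm e+t)\ge\tau/\rho_\tau$, i.e.\ $t\ge \exp(\tau/\rho_\tau)-\mathrm e$. So $\Delta_{\beta_t}\neq 0$ only for integers $0\le t<\exp(\tau/\rho_\tau)-\mathrm e$. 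The number of such integers is at most $\lceil \exp(\tau/\rho_\tau)-\mathrm e\rceil=K_\tau$. This quantity is nonnegative because $\rho_\tau\le\tau$ (the left limit of the CDF is at least $0$), which gives $\exp(\tau/\rho_\tau)\ge \mathrm e$. Combining with the uniform bound, $2\sum_t\Delta_{\beta_t}\le 2HK_\tau$.

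\textbf{Order statement and main obstacle.} The $\widetilde{\mathcal O}$ claim follows by absorbing $\log(\mathrm e+T-1)$, $\sqrt{\log(2SATH/\delta)}$, and $\sqrt{\log(1/\delta)}$ into the tilde. The third term is then dominated by the second, since $\sqrt{T}\le\sqrt{SAT}$. The only delicate point is the off-by-one counting of indices with $\beta_t>\rho_\tau$, together with the edge case $\rho_\tau=\tau$, where $K_\tau=0$ and the buffering cost vanishes entirely.
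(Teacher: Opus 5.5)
Your proposal is correct and follows the paper's proof: you substitute $1/\beta_{T-1}=\log(\mathrm e+T-1)/\tau$ into Theorem~\ref{thm:UCB-BQRL-finite} and bound $2\sum_t\Delta_{\beta_t}\le 2HK_\tau$. For the latter you use Lemma~\ref{lem:buffer-gap}, the uniform bound $\Delta_\beta\le H$, and a count of the indices with $\beta_t>\rho_\tau$; the paper packages exactly this argument as Lemma~\ref{lem:constant-buffering-cost}, which you reproduce inline. Your extra checks, that the schedule is admissible and that $K_\tau\ge 0$ because $\rho_\tau\le\tau$, are correct details the paper leaves implicit.
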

Theorem~\ref{thm:UCB-BQRL-finite} provides a regret bound for any
non-increasing buffer sequence \(\{\beta_t\}_{t\ge0}\). 
Corollary~\ref{cor:buffered-UCB-BQrl-log-rate} specializes this result to the
particular logarithmic choice
\(\beta_t=\tau/\log(\mathrm e+t)\). Under this schedule, the buffer decreases
over time and eventually becomes smaller than the root-level plateau width
\(\rho_\tau\). From that point onward, the buffered and exact
\(\tau\)-quantile values coincide, so the buffering gap vanishes.
The resulting bound therefore consists of a finite instance-dependent
buffering term \(O(HK_\tau)\) and a statistical learning term of order
\(\widetilde{\mathcal O}\!\left(\frac{H^2}{\tau}\sqrt{SAT}\right)\).
In particular, this gives a concrete sublinear regret guarantee for the logarithmic
buffer schedule.
\subsubsection{Information-Theoretic Lower Bound for Quantile Regret}\label{sec:IT-lower-bound}

We complement Theorem~\ref{thm:UCB-BQRL-finite} with the presentation of an information-theoretic lower bound. This lower bound is independent of any algorithmic buffer schedule and is stated in terms of the instance dependent parameter \(\rho_\tau\), the root-level left-plateau length at the target quantile as defined in Definition~\ref{def:root-plateau-threshold}.

\begin{theorem}[Information-theoretic lower bound for exact quantile regret]
\label{thm:UCB-BQ-IT_lower}
Fix a target quantile level \(\tau\in(0,1)\), and assume \(A\ge2\) and \(H\ge2\). Then there exist constants \(c_0,c_1>0\), depending at most on the fixed target level \(\tau\), such that for every \(\rho\in(0,\frac{\min\{\tau,1-\tau\}}{8}]\), every
\(T\ge c_0\,\frac{A}{\rho^2}\), and every possibly randomized learning algorithm that, at each episode \(t\),
outputs a deterministic history-dependent policy
\(\pi^t\in\Pi_{\mathrm{det}}\), there exists a finite-horizon episodic MDP \(\mathcal M=(\mathcal S,\mathcal A,H,P^\star,r)
\) with deterministic rewards \(r_h(s,a)\in[0,1]\),
\(
|\mathcal S|=2\), \(|\mathcal A|=A\), such that the root-level plateau constant of this instance satisfies $\rho_\tau=\rho$, and 
$\mathbb E\!\left[\Reg_\tau(T)\right]
\ge c_1\,\frac{H} {\rho_\tau}\sqrt{AT}$.
\end{theorem}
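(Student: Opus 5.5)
We prove the lower bound by reducing to a multi-armed bandit. We build a family of two-state MDPs in which the first action choice at the root acts as an $A$-armed bandit, and the $\tau$-quantile of the return moves by an amount of order $H$ whenever an arm's success probability crosses the level $\tau$ by $\Theta(\rho)$. We then invoke the standard $\Omega(\sqrt{AT/\varepsilon^2})$-type change-of-measure argument (KL divergence between Bernoulli arms plus the Bretagnolle--Huber inequality or Fano's inequality) to show that any algorithm must frequently pick suboptimal arms.

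\textbf{Construction.} Let $\mathcal S=\{g,b\}$ ("good", "bad") and $\bar s=g$. At stage $0$, action $a$ from $g$ yields reward $0$ and moves to $g$ with probability $p_a$ and to $b$ with probability $1-p_a$. For all stages $h\ge1$, every action gives reward $1$ in $g$ and $0$ in $b$, and both states are absorbing. The return under any policy is then $(H-1)\mathbf 1\{S_1=g\}$, a two-point law with $\mathbb P(\text{return}=0)=1-p_a$, and the policy matters only through the root action.

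Set $\varepsilon=\sqrt{A/(c_0' T)}\le\rho$, which uses $T\ge c_0A/\rho^2$. In the base instance every arm has $1-p_a=\tau-2\rho$, except that a hidden arm $a^\star$ has $1-p_{a^\star}=\tau-2\rho-\varepsilon$.

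We must still make the $\tau$-quantile values differ by $\Theta(H)$ and make the instance's plateau equal exactly $\rho$. A two-point law gives a plateau of $\tau-(1-p_a)$ for every arm, which is not $\rho$. We therefore refine the construction so that the quantile gap per unit of probability perturbation is of order $H/\rho$. The plan is to use a three-point return law: values $0$, $x$, and $H-1$, with the middle atom of mass of order $\rho$ sitting just left of $\tau$. Shifting mass $\varepsilon$ among these atoms then changes the $\tau$-quantile by $\Theta(H\varepsilon/\rho)$ on average over the randomized instance. Two implementation options follow.

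\emph{Graded rewards.} Let the second state $b$ pay a fractional reward, and use the horizon to scale payoffs into $[0,H-1]$.

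\emph{Randomized mid-level.} Draw the middle value uniformly from a grid, so that the expected quantile loss from a misplaced $\varepsilon$ mass is proportional to $\varepsilon/\rho$ times the range $H$. This is the usual smoothing trick for turning a discontinuous quantile into an effectively Lipschitz one with constant $H/\rho$.

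For every instance in the family we then verify that the minimum over deterministic policies in Definition~\ref{def:root-plateau-threshold} equals exactly $\rho$. The constraint $\rho\le\min\{\tau,1-\tau\}/8$ guarantees that all masses remain valid probabilities.

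\textbf{Bandit reduction.} Let $N_a$ be the number of episodes in which arm $a$ is played at the root. Each episode in which the algorithm does not play $a^\star$ contributes expected regret of at least $c\,H\varepsilon/\rho$. Let $\mathbb P_0$ be the law in the reference instance where no arm is special. The KL divergence between $\mathbb P_0$ and the law $\mathbb P_{a^\star}$ is at most $\mathbb E_0[N_{a^\star}]\cdot \mathrm{kl}(\tau-2\rho,\tau-2\rho-\varepsilon)\lesssim \mathbb E_0[N_{a^\star}]\varepsilon^2$. The Bernoulli divergence is $O(\varepsilon^2)$ because $\tau-2\rho$ is bounded away from $0$ and $1$; this bound depends on $\tau$, which is why the constants do. Because transitions at later stages are identical across instances, only the root observations are informative, and randomization of the policy is handled by conditioning on the algorithm's internal randomness. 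Pinsker's inequality gives
\[
\mathbb E_{a^\star}[N_{a^\star}]\le \mathbb E_0[N_{a^\star}]+T\sqrt{\mathbb E_0[N_{a^\star}]\varepsilon^2/2}.
\]
Averaging over $a^\star$ gives $\frac1A\sum_{a^\star}\mathbb E_{a^\star}[N_{a^\star}]\le T/A+T\varepsilon\sqrt{T/(2A)}$. With the chosen $\varepsilon$ this is at most $T/2$, so some instance has expected regret at least
\[
\frac T2\cdot c\,\frac{H\varepsilon}{\rho}=c_1\frac H\rho\sqrt{AT}.
\]

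\textbf{Main obstacle.} The hard step is the instance design. It must simultaneously make the per-episode exact-quantile gap scale as $H\varepsilon/\rho$ rather than jump as $\Theta(H)$ or vanish, keep the plateau constant exactly $\rho$ uniformly over all deterministic policies, and keep the arms statistically indistinguishable at KL rate $\varepsilon^2$. I would first try the randomized-mid-level construction and check the plateau calculation for each root action separately.
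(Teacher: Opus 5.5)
\textbf{There is a genuine gap: the instance design you flag as the main obstacle cannot be built.} Take two deterministic policies in the same instance whose root return CDFs $F,G$ satisfy $x_F\coloneqq Q_\tau(F)>x_G\coloneqq Q_\tau(G)$, and let $d\coloneqq\sup_y|F(y)-G(y)|$. Since $x_G<x_F$, we have $F(x_G)<\tau\le G(x_G)$, hence
\[
F(x_F^-)\;\ge\;F(x_G)\;\ge\;G(x_G)-d\;\ge\;\tau-d .
\]
So the left plateau of the better policy is at most $d$, and $\rho_\tau\le d$. In your design, playing the hidden arm or another arm at the root, followed by the same continuation, gives return laws that differ by only $\varepsilon$ of probability mass. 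Hence $d\le\varepsilon$. Moreover $\varepsilon=\sqrt{A/(c_0'T)}<\rho$ once $T>A/(c_0'\rho^2)$. Requiring $\rho_\tau=\rho$ therefore forces the hidden arm and the other arms to have the same exact $\tau$-quantile, so the per-episode gap is exactly $0$, not $cH\varepsilon/\rho$. A three-point law or a randomized mid-level does not help. Both the constraint $\rho_\tau=\rho$ and the regret are evaluated instance by instance, and in any instance where the $\varepsilon$-shift moves the exact quantile, the plateau is at most $\varepsilon$. Exact quantiles only jump, so the Lipschitz-type smoothing you invoke is not available for exact-quantile regret.

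\textbf{The statement itself fails for large $T$.} When $\rho_\tau=\rho$, every policy's exact $\tau$-quantile equals the $(\tau-\rho/2)$-quantile of any law within Kolmogorov distance $\rho/3$ of its return law. Consider the paper's family, with $1-p=\tau\mp\rho$. A learner that knows $\rho$ (allowed, since $\rho$ is fixed before the algorithm) can play each root action $O(\rho^{-2}\log T)$ times and then commit to the largest empirical $(\tau-\rho/2)$-quantile. Its regret is $\mathbb E[\Reg_\tau(T)]=O(HA\rho^{-2}\log T)$. Estimating the model to trajectory total-variation accuracy $\rho/3$ uniformly over policies should give the same logarithmic behavior, up to $\mathrm{poly}(A,H)$ factors, for any two-state instance with $\rho_\tau=\rho$. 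So $c_1H\sqrt{AT}/\rho$ cannot hold for all $T\ge c_0A/\rho^2$.

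\textbf{The paper's proof breaks at the corresponding step.} It takes the jump route you set aside: $1-p(a^\star)=\tau-\rho$ and $1-p(a)=\tau+\rho$. This correctly gives per-episode gap $H-1$ and $\rho_\tau=\rho$. It then asserts $2\rho\,\mathbb E[\sum_t\mathbf 1\{I_t\neq a^\star\}]\ge c_{\mathrm B}\sqrt{AT}$ for this fixed gap $2\rho$. That minimax bound requires a gap of order $\sqrt{A/T}$. For fixed $\rho$, UCB makes only $O(A\rho^{-2}\log T)$ mistakes on every instance of the family.

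\textbf{What either route actually proves.} Run your change-of-measure computation on the paper's instance, with $\varepsilon$ replaced by $2\rho$ and restricted to the first $\min\{T,cA/\rho^2\}$ episodes. This gives $\mathbb E[\Reg_\tau(T)]\ge c'H\min\{T,A/\rho^2\}$. That matches the claimed order only in the window $T\asymp A/\rho^2$, where it amounts to linear regret.
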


Theorem~\ref{thm:UCB-BQ-IT_lower} identifies an intrinsic statistical difficulty of learning under an exact quantile objective. In particular, the lower bound holds even for a two-state MDP with deterministic rewards and is independent of the buffer sequence or planning procedure used by the learner.
Hence, the dependence on the instance parameter $\rho_\tau$ is not an artifact
of the lower-buffered approximation introduced in \UCBBQRL. By Definition~\ref{def:root-plateau-threshold}, $\rho_\tau$ measures, in
probability, the smallest gap between the target level $\tau$ and the cumulative
probability immediately below the corresponding $\tau$-quantile, taken over all
deterministic policies.When $\rho_\tau$ is small, only a small change in transition probabilities is
needed to move this cumulative probability across the level $\tau$, which can
cause the exact $\tau$-quantile to jump to a different return value. In the hard instances used to prove the theorem, such a change can switch the
$\tau$-quantile between $0$ and $H-1$. Distinguishing such instances
therefore requires greater statistical precision, which is reflected by the
factor $1/\rho_\tau$ in the lower bound
$\Omega\!\left(\frac{H}{\rho_\tau}\sqrt{AT}\right)$. The theorem also shows that the $\sqrt{T}$ dependence is unavoidable in general. Together with
Corollary~\ref{cor:buffered-UCB-BQrl-log-rate}, these results establish that
sublinear $\sqrt{T}$-type learning is achievable while also demonstrating that
the local geometry of the return distribution near the target quantile
fundamentally governs the difficulty of the learning problem.

\subsubsection{Computational hardness of exact quantile evaluation}\label{sec:hardness}
The preceding regret analysis separates statistical learnability from computational
tractability: Algorithm~\ref{alg:EVI-BQ} is used as an exact planning oracle, but
the existence of such an oracle does not imply that exact quantile planning is
efficient in general. Establishing a computational hardness result is therefore
important for clarifying the role of the exact tabular planner.

Exact computation of a scalar objective immediately yields its value-comparison
problem: an exact evaluator for \(Q_\tau(R^\pi)\) or \(Q_\tau^\beta(R^\pi)\)
can decide whether the corresponding value is at most any given threshold \(c\). Thus, the hardness of the associated threshold evaluation problem rules out a generic polynomial-time exact evaluation procedure. For a fixed deterministic policy \(\pi\), let \(R^\pi\) denote the total return
generated from the initial state. The ordinary QMDP evaluation problem asks for
the value $V_{\tau,0}^{\pi,P}(\bar s)=Q_\tau(R^\pi)$, and its threshold version asks whether $Q_\tau(R^\pi)\le c$. Similarly, the lower-buffered evaluation problem asks for $V_{\tau,0}^{\pi,P,\beta}(\bar s)=Q_\tau^\beta(R^\pi)$, and its threshold version asks whether $Q_\tau^\beta(R^\pi)\le c$.
Using this, Theorem~\ref{thm:quantile-eval-hardness} establishes that both exact
point-quantile evaluation and exact lower-buffered quantile evaluation are
PP-hard under polynomial-time Turing reductions, even for a fixed policy in a
two-state, one-action finite-horizon MDP.

\begin{theorem}[Hardness of exact quantile and lower-buffered quantile evaluation]
\label{thm:quantile-eval-hardness}
The following decision problems are PP-hard under polynomial-time Turing
reductions:
\begin{enumerate}
    \item exact finite-horizon QMDP evaluation, i.e., deciding whether
    \(Q_\tau(R^\pi)\le c\) for the return \(R^\pi\) of a fixed deterministic
    policy;
    \item exact lower-buffered quantile evaluation, i.e., deciding whether
    \(Q_\tau^\beta(R^\pi)\le c\) for the return \(R^\pi\) of a fixed
    deterministic policy.
\end{enumerate}
The hardness holds even for finite-horizon MDPs with two states, one action,
binary transition probabilities, and deterministic rational rewards in
\([0,1]\). Consequently, the corresponding exact planning problems are also
PP-hard under polynomial-time Turing
reductions.
\end{theorem}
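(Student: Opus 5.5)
We reduce from a PP-complete counting problem, and we expect the main work to be getting the target problem to work with only two states and one action. Our first choice is \#SubsetSum-style threshold counting (the PP-complete problem: given nonnegative integers $w_1,\dots,w_n$ and a threshold $B$, decide whether at least half of the $2^n$ subsets have weight at most $B$). We would emulate this with a fixed policy in a two-state, one-action MDP with horizon $H=n$. The two states are $s_1,s_2$, and at every stage the transition distribution is $(1/2,1/2)$, independent of the current state. At stage $h$ the reward is $r_h(s_1,a)=0$ in state $s_1$ and $r_h(s_2,a)=w_{h}/W$ in state $s_2$, with $W=\sum_i w_i$ so that rewards lie in $[0,1]$.

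The difficulty is that the state at stage $h$ must encode the $h$-th coin, while the reward at stage $h$ depends on the state visited at stage $h$. Because $S_0=\bar s$ is fixed, we shift indices by one: use horizon $n+1$ and let the reward at stage $h\ge1$ be $w_h/W$ when $S_h=s_2$. Then $R^\pi=\frac{1}{W}\sum_{i\in I}w_i$, where $I$ is a uniformly random subset. ``Binary transition probabilities'' we read as dyadic probabilities $1/2$, which have polynomial bit length. All data are rational and of polynomial size.

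The key identity is $\Prob(R^\pi\le B/W)\ge\tau \iff Q_\tau(R^\pi)\le B/W$, which follows from the definition of $Q_\tau$ as an infimum. Setting $\tau=1/2$ therefore decides the PP question with a single query, which gives item 1 (even a many-one reduction). If one insists on a strict PP threshold, the query can be adjusted by appending an extra fair coin or by perturbing $\tau$ by $2^{-(n+1)}$. Turing reductions also allow binary search over $\tau$, with polynomially many queries, to recover the count $\#\{I:w(I)\le B\}$ exactly; the quantile level at which the value crosses $B/W$ pins the CDF value, which is a dyadic rational with denominator $2^n$.

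For item 2 we would show that lower-buffered threshold queries also recover the CDF. The function $q\mapsto \ell_\beta(q) Q^\beta_q(R)=\int_{q-\ell}^q Q_u\,du$ is piecewise linear, with slope changes occurring exactly at the atoms $c_j$. The cleanest route is to take $\beta$ tiny. Since all atoms of $R^\pi$ have masses that are multiples of $2^{-n}$, choosing $\beta<2^{-n}$ and $\tau=k2^{-n}+\beta$ puts the whole buffer interval $(\tau-\beta,\tau]$ inside a single plateau of the quantile function. Then $Q^\beta_\tau(R)=Q_{\tau}(R)$ exactly, and item 2 reduces to item 1 with the same instance. We need to check that this choice keeps $\beta$ and $\tau$ of polynomial bit length, which holds since $\beta=2^{-n-1}$ works.

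This plateau argument is the step to verify carefully, in particular that $\tau$ lies in $(0,1)$ and that the endpoint conventions do not interfere. Finally, the planning statement follows because, with one action, $\Pi_{\mathrm{det}}$ is a singleton. The planning optimum therefore equals the evaluation value, so any exact planner answers the threshold queries.
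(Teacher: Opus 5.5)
Your proposal is correct and is essentially the paper's proof: the same two-state, one-action MDP with uniform $(1/2,1/2)$ transitions, horizon $n+1$ and rewards $w_h/W$, and the same CDF--quantile equivalence $Q_\tau(R)\le c \iff \mathbb P(R\le c)\ge\tau$. For item 2, your plateau choice $\tau=(2k+1)/2^{n+1}$, $\beta=2^{-(n+1)}$ with $k=M-1$ coincides with the paper's parameters; you obtain $Q_\tau^\beta(R)=Q_\tau(R)$ outright, whereas the paper argues the two directions separately. The one thing to tighten is the source problem: PP-completeness of the ``at least half of the subsets'' variant is not the standard citable result. Reduce instead from the general threshold version, which asks whether $\#\{I:\sum_{i\in I}w_i\le K\}\ge M$ (the Kth largest subset problem, PP-complete under Turing reductions by Haase--Kiefer). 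Choose $\tau$ from $M$; your key identity then gives the reduction with a single query.
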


\begin{remark}[Representation size versus computational hardness]
Theorem~\ref{thm:quantile-eval-hardness} is a computational hardness result
for exact evaluation and does not rely on the explicit probability--value
frontier representation used by Algorithm~\ref{alg:EVI-BQ}. Moreover, the backward recursion of Algorithm~\ref{alg:EVI-BQ}
explicitly constructs and stores distinct achievable return laws, and both the
number of these laws and the sizes of their supports can grow exponentially in
some instances. Theorem \ref{thm:quantile-eval-hardness} shows a stronger point: even fixed-policy
exact quantile evaluation is already PP-hard in general. Thus, a generic
polynomial-time exact planner is not expected without additional structure.
\end{remark}

\section{Numerical Experiment}\label{sec:experiments}
We evaluate a practical implementation of \UCBBQRL\space on a finite-horizon
asset-selling problem. The asset-selling problem is a classical optimal-stopping
problem in which, at each decision period, a seller observes a current offer and
must decide whether to accept it or reject it and wait for a new random offer
\cite{seierstad1992reservation,rosenfield1983optimal}. This transparent
stopping problem allows us to examine two questions. First, how does the quality
of the policies learned by \UCBBQRL\space compare with that of standard
expectation-based and model-free reinforcement-learning methods? Second, how
does the target quantile $\tau$ affect the resulting stopping rule? Following
the QMDP perspective that a quantile-sensitive method should be assessed both
through its return criterion and through the statewise decisions it induces
\cite{li2022quantile}, we report cumulative policy-performance measures and
the final policies. Because the unrestricted \EVIBQ\space oracle can require
rapidly growing return-law frontiers, the numerical study does not execute the
exact oracle analyzed in the theoretical results. Instead, we use a
computational approximation of its return-law recursion, with the
specific candidate-model, frontier-capping, and pruning rules described below.

\subsection{Model Formulation}

We consider a finite-horizon, time-homogeneous asset-selling MDP with
$H=10$ decision periods. The state space consists of $25$ possible offer
states,
$\mathcal S_{\mathrm{off}}=\{0,\ldots,24\}$, together with an absorbing
post-sale state $s_{\mathrm{term}}$. Every episode starts from the fixed initial
offer $\bar s=5$. At each offer state, the decision maker chooses between two
actions,
$\mathcal A=\{\mathrm{Stop},\mathrm{Continue}\}$. If the decision maker chooses $\mathrm{Stop}$ at offer
$s\in\mathcal S_{\mathrm{off}}$, the asset is sold and the process moves to
$s_{\mathrm{term}}$. The corresponding reward is
$r(s,\mathrm{Stop})=s/24$, where the division by $24$ normalizes the offer
values to the interval $[0,1]$. Once the asset is sold, the process remains in
$s_{\mathrm{term}}$ and receives zero reward for the remainder of the horizon. If the decision maker chooses $\mathrm{Continue}$, the current offer is rejected
and the immediate reward is zero. A new offer is then drawn independently from the $25$ offer states. In the true environment, the next offer is uniformly distributed, so $r(s,\mathrm{Continue})=0$, $P^\star(s'\mid s,\mathrm{Continue})=\frac{1}{25}$, $s'\in\mathcal S_{\mathrm{off}}$. Thus, conditional on continuing, the distribution of the next offer is
independent of the current offer. Since the only positive reward is obtained
when the asset is sold, the total episodic return is the normalized accepted
offer; if no offer is accepted within the horizon, the return is zero.

At the final decision period, choosing $\mathrm{Continue}$ yields zero immediate
reward and moves the process to a new offer only after the last available
decision has been made. That new offer therefore cannot be accepted within the
episode. In contrast, choosing $\mathrm{Stop}$ yields the nonnegative reward
$s/24$. Hence, $\mathrm{Stop}$ weakly dominates $\mathrm{Continue}$ in the
final period, and restricting the final-period action to $\mathrm{Stop}$ is
without loss of optimality.

In the learning problem, the reward function and the structural form of the
transitions are known: stopping leads to the absorbing state, continuing leads
to one of the offer states, and the post-sale state is absorbing. The
probabilities associated with the possible continuation transitions are not
provided to the learner and must be estimated from observed trajectories. The
uniform probabilities above specify the true data-generating environment used
in the experiment.

\subsection{Implementation and Experimental Design}

The numerical implementation follows the same confidence-set and buffered
planning principles as \UCBBQRL, but it replaces the exact optimization in
\EVIBQ\ with a computationally tractable planner. Before each episode, the
algorithm updates the empirical transition kernel from all previously observed
transitions and constructs row-wise $\ell_1$ confidence radii using the form in
Equation~\eqref{eq:confset}. We set $\delta=0.05$. In the theoretical
confidence radius, the constant $c$ is chosen to guarantee simultaneous
confidence-set containment. In the numerical implementation, we instead use a
tunable multiplier $c_{\mathrm{conf}}$, whose value is selected by the
validation procedure described below. We replan before every episode and use
the logarithmic buffer schedule
$\beta_t=\tau/\log(\mathrm e+t)$ from
Corollary~\ref{cor:buffered-UCB-BQrl-log-rate}.

The practical planner makes two approximations relative to \EVIBQ. First,
instead of optimizing over every transition kernel in the confidence set
$\mathcal C_\delta^t$, it considers six representative candidate kernels. These
consist of the empirical kernel, one directed perturbation that shifts
probability mass toward a favorable successor while remaining within the
confidence radius, and four randomized perturbations that also satisfy the
row-wise confidence radius and the transition structure described in the
preceding subsection. The planner evaluates all six candidates and selects the
one with the largest lower-buffered value at the initial state. Thus, the first
approximation replaces the full confidence-set optimization by a finite search
over representative models.

Second, for each candidate kernel, the implementation does not construct the
full frontier of achievable return laws used by \EVIBQ. Instead, the backward
recursion retains a single continuation-return law at each stage--state pair.
Given the retained laws at stage $h+1$, the planner constructs the return law
associated with each available action at $(h,s)$ and selects the action with the
largest lower-buffered $\tau$-quantile, using the mean return only to break
ties. The selected action and its return law are then retained for the preceding
stage. Consequently, the implemented policy is a deterministic
stage-dependent Markov policy, whereas \EVIBQ\ searches over the larger class
of deterministic history-dependent policies represented by its complete
return-law frontiers. Algorithm~\ref{alg:practical-bqrl-planner} summarizes this
practical planning procedure.
\vspace{5mm}
\begin{breakablealgorithm}
\caption{Practical buffered optimistic planner used in the numerical experiment}
\label{alg:practical-bqrl-planner}
\begin{algorithmic}[1]

\State \textbf{Input:} empirical kernel $\widehat P^t$, counts
$N_h^t(s,a)$, target quantile $\tau$, buffer $\beta_t$, confidence level
$\delta$, multiplier $c_{\mathrm{conf}}$, reward function $r$, and initial
state $\bar s$.

\State For each $(h,s,a)$, define the practical confidence radius $\varepsilon_h^t(s,a)
=
c_{\mathrm{conf}}
\sqrt{
\frac{
\log(2SATH/\delta)
}{
\max\{1,N_h^t(s,a)\}
}
}$.

\State Construct the candidate-model set $\mathcal G^t
=
\left\{
\widehat P^t,\,
P_{\mathrm{dir}}^t,\,
P_{\mathrm{rand},1}^t,\ldots,
P_{\mathrm{rand},4}^t
\right\}$, where every candidate $P\in\mathcal G^t$ satisfies $\left\|
P_h(\cdot\mid s,a)
-
\widehat P_h^t(\cdot\mid s,a)
\right\|_1
\le
\varepsilon_h^t(s,a)$ for every transition row with unknown probabilities and satisfies the known
transition structure of the asset-selling model.

\For{each $P\in\mathcal G^t$}

    \State Initialize, for every $s\in\mathcal S$, $D_H^P(s)\gets\delta_0$.

    \For{$h=H-1,H-2,\ldots,0$}

        \For{each $s\in\mathcal S$}

            \For{each $a\in\mathcal A$}

                \State Construct the return law generated by taking action
                $a$ at $(h,s)$ and then using the retained continuation law
                $D_{h+1}^P(s')$ after each possible next state $s'$: $D_{h}^{P,a}(s)
                \gets
                \operatorname{Merge}
                \left(
                \bigcup_{s'\in\mathcal S}
                \left\{
                \left(
                P_h(s'\mid s,a)\lambda,\,
                r_h(s,a)+x
                \right):
                (\lambda,x)\in D_{h+1}^P(s')
                \right\}
                \right)$.

                \State Compute its lower-buffered value $B_h^{P,a}(s)
                \gets
                Q_\tau^{\beta_t}
                \left(
                D_h^{P,a}(s)
                \right)$, and its mean return $M_h^{P,a}(s)
                \gets
                \sum_{(\lambda,x)\in D_h^{P,a}(s)}
                \lambda x$.

            \EndFor

            \State Select $a_h^P(s)
            \in
            \arg\max_{a\in\mathcal A}
            B_h^{P,a}(s)$, breaking ties by the largest $M_h^{P,a}(s)$.

            \State Retain only the selected continuation law: $D_h^P(s)
            \gets
            D_h^{P,a_h^P(s)}(s)$.

        \EndFor
    \EndFor

    \State Define the resulting stage-dependent Markov policy by $\pi_h^P(s)\gets a_h^P(s)$, $h=0,\ldots,H-1$, and its buffered root value by $\widehat V_{\tau,0}^{P,\beta_t}(\bar s)
    \gets Q_\tau^{\beta_t}\!\left(D_0^P(\bar s)\right)$.

\EndFor

\State Select the candidate model $P^t
\in
\arg\max_{P\in\mathcal G^t}
\widehat V_{\tau,0}^{P,\beta_t}(\bar s)$, and set $\pi^t\gets\pi^{P^t}$.

\State \textbf{Output:}
$P^t$, $\pi^t$, and
$\widehat V_{\tau,0}^{P^t,\beta_t}(\bar s)$.

\end{algorithmic}
\end{breakablealgorithm}
\vspace{5mm}
We compare \UCBBQRL\ with six established reinforcement-learning benchmarks:
the model-based optimistic algorithm UCBVI \cite{azar2017minimax};
$Q$-learning with an $\varepsilon$-greedy behavior policy
\cite{watkins1992q,sutton1995generalization}; the on-policy
temporal-difference algorithm SARSA \cite{rummery1994line}; Thompson sampling
\cite{william1933likelihood}; and the policy-gradient methods TRPO
\cite{schulman2015trust} and PPO \cite{schulman2017proximal}. This comparison
set spans model-based optimism, off-policy and on-policy temporal-difference
learning, posterior-sampling exploration, and policy-gradient methods.

All methods interact with the same finite-horizon, time-homogeneous
asset-selling environment described above. The deterministic reward function is
available directly to the two model-based methods, \UCBBQRL\ and UCBVI, which
learn only the unknown transition probabilities. The model-free methods are not
given the reward function as a model input; instead, they observe the realized
reward after each interaction and update their decision rules from the observed
state, action, reward, and next state. Their training updates use a discount
factor of $\gamma=0.99$. This discounting is used only in their learning
updates. All reported policy values are evaluated using the undiscounted
finite-horizon return defined in the preceding subsection, so the evaluation
criterion is common across all methods.

The target quantile $\tau$ affects the training objective of \UCBBQRL\ through
its lower-buffered quantile criterion. In contrast, the training objectives of
UCBVI, $Q$-learning, SARSA, Thompson sampling, PPO, and TRPO do not depend on
$\tau$. For these benchmark methods, $\tau$ enters only when a learned policy is
scored using the quantile-based validation and evaluation criteria described
below. Accordingly, the benchmark algorithms are tuned once using
$\tau=0.5$, whereas \UCBBQRL\ is tuned separately for
$\tau\in\{0.1,0.5,0.9\}$.

All structural choices are fixed before hyperparameter tuning. Each method is
assigned a $16$-point logarithmic grid over one dominant parameter:
$c_{\mathrm{conf}}$ for \UCBBQRL, the exploration-bonus multiplier for UCBVI, the learning rate for $Q$-learning and SARSA, the posterior-noise scale for Thompson sampling, the policy learning rate for PPO, and the trust-region radius for TRPO. We use successive halving with $\eta=2$ and the schedule \((16,1000)\rightarrow(8,2000)\rightarrow(4,4000)\rightarrow(2,8000),\) where each pair gives the number of active configurations and the number of training episodes per validation seed. Validation uses seeds $1000$, $1001$, and $1002$. For \UCBBQRL, configurations are ranked using the cumulative $\tau$-quantile policy-evaluation gap at the corresponding target level. For the $\tau$-independent benchmarks, the same criterion is evaluated at
$\tau=0.5$ for hyperparameter selection. Mean reward over the final $10\%$ of episodes is used only as a tie-breaker. Final evaluation uses $T=2000$ episodes and the held-out seeds $42$, $10042$, and $20042$.

To compare the quality of the policies produced during training, we evaluate a
deterministic policy induced by the current state of each method. For methods
that explicitly maintain a deterministic policy, this is the current planned
policy; when training uses exploratory or stochastic action selection, the
exploration randomization is removed for evaluation. For example, an
$\varepsilon$-greedy method is evaluated using its greedy action rather than an
$\varepsilon$-randomized action. We denote the resulting policy at episode $t$
by $\bar\pi^t$. This evaluation therefore measures the quality of the policy
learned by each method and should not be interpreted as the realized online
regret generated by its exploratory behavior during training.

For the quantile-dependent comparisons, we report the cumulative numerical $\tau$-quantile policy gap $\widehat{\mathcal G}_{\tau}(k)
=\sum_{t=1}^{k}\left[\widehat V_{\tau,0}^{\star}(\bar s)-V_{\tau,0}^{\bar\pi^t,P^\star}(\bar s)
\right]_{+}$, $k=1,\ldots,T$.
For each evaluated policy $\bar\pi^t$, the value
$V_{\tau,0}^{\bar\pi^t,P^\star}(\bar s)$ is obtained from its finite-horizon return distribution under the true transition kernel. The reference value $\widehat V_{\tau,0}^{\star}(\bar s)$ is computed once for each $\tau$ using
the true model and is used identically for all algorithms.

Because computing the unrestricted optimal quantile reference is itself
computationally difficult, $\widehat V_{\tau,0}^{\star}(\bar s)$ is obtained
from a approximated version of the return-law frontier recursion. Whenever two
constructed return laws are identical, only one copy is retained. Return
values are rounded to six decimal places, and return laws that are
first-order stochastically dominated are removed when possible. At most $16$
return laws are retained at each frontier. In addition, when combining the
child frontiers would require evaluating more than $4000$ combinations, the
implementation uses the best retained continuation from each child frontier
instead of enumerating the full Cartesian product. These restrictions keep the
reference computation tractable, but they also mean that
$\widehat V_{\tau,0}^{\star}(\bar s)$ is a numerical benchmark rather than the
exact value $V_{\tau,0}^{\star}(\bar s)$.

The positive-part operator $[\,\cdot\,]_+$ in the policy-gap definition is used
for this reason. Because the reference value is numerical, an evaluated policy
can occasionally have a computed quantile value slightly larger than
$\widehat V_{\tau,0}^{\star}(\bar s)$. In such a case, the corresponding gap is
set to zero rather than reported as a negative loss. Thus, the resulting
quantity is a policy-evaluation measure relative to a common numerical
reference and is distinct from the exact quantile regret defined in
Equation~\eqref{eq:regret}.

All reported curves are averaged over the three held-out seeds. At each
episode, the shaded region is
$\overline G_k\pm1.96\,\widehat\sigma_k/\sqrt{3}$, where
$\overline G_k$ and $\widehat\sigma_k$ are the sample mean and sample standard
deviation across the three runs. Because only three held-out seeds are
available, we use these bands only as descriptive summaries of cross-seed
variability and do not interpret them as reliable inferential confidence
intervals.

\subsection{Learning Results}

Figures~\ref{fig:asset-selling-regret-tau01}--%
\ref{fig:asset-selling-regret-tau09} report two complementary measures of
learning performance. Figure~\ref{fig:asset-selling-regret-tau01} evaluates
performance under the conventional expected-return objective, whereas
Figures~\ref{fig:asset-selling-regret-tau05} and
\ref{fig:asset-selling-regret-tau09} evaluate performance under the
quantile-dependent criterion introduced in the preceding subsection. The
figures therefore should not be interpreted as plotting the same performance
measure at three different values of $\tau$.

To define the expected-return measure used in
Figure~\ref{fig:asset-selling-regret-tau01}, let
$J^{\pi,P^\star}(\bar s)$ denote the expected undiscounted finite-horizon
return of policy $\pi$ in the true environment $J^{\pi,P^\star}(\bar s)
\coloneqq
\mathbb E_{\pi,P^\star}
\left[
\sum_{h=0}^{H-1} r_h(S_h,A_h)
\,\middle|\, S_0=\bar s\right]$, and let $J^\star(\bar s)\coloneqq
\max_{\pi}J^{\pi,P^\star}(\bar s)$.
For the evaluated policies $\{\bar\pi^t\}$ defined above, we report the cumulative expected-return regret $\mathcal R_{\mathrm E}(k)
\coloneqq
\sum_{t=1}^{k}
\left[
J^\star(\bar s)
-
J^{\bar\pi^t,P^\star}(\bar s)
\right]$, $k=1,\ldots,T$.

Figure~\ref{fig:asset-selling-regret-tau01} reports
$\mathcal R_{\mathrm E}(k)$ when \UCBBQRL\ is trained with the conservative
target $\tau=0.1$. At episode $2000$, UCBVI has the smallest mean cumulative
expected-return regret, $15.73$, followed by \UCBBQRL\ at $229.00$. SARSA,
which has the smallest value among the remaining methods, reaches $586.19$.
Thus, UCBVI performs best under the expected-return criterion, as expected for
an algorithm designed around optimistic expected-value learning. Nevertheless,
\UCBBQRL\ remains the second-best method on this measure despite being trained
to optimize a lower-tail quantile rather than expected return. This result
suggests that, in this instance, emphasizing conservative quantile performance
does not require \UCBBQRL\ to sacrifice expected-return performance to the
extent observed for the other benchmark methods.

\begin{figure}[t]
\centering
\assetregretplot[
Cumulative expected-return regret
($\mathcal R_{\mathrm E}(k)$)
]
{\QMDPDataDir/AssetSelling_tau0p1_plot.csv}
\caption{Cumulative expected-return regret
$\mathcal R_{\mathrm E}(k)$ in the asset-selling experiment when
\UCBBQRL\ is trained with target quantile $\tau=0.1$. Regret is measured
relative to the finite-horizon expected-return-optimal benchmark. Curves show
the mean over the three held-out seeds, and shaded regions show the descriptive
pointwise variability bands defined in the preceding subsection.}
\label{fig:asset-selling-regret-tau01}
\end{figure}

Figures~\ref{fig:asset-selling-regret-tau05} and
\ref{fig:asset-selling-regret-tau09} address a different question: how well the
learned policies perform with respect to the target quantile itself. Their
vertical axis is therefore the cumulative numerical $\tau$-quantile policy gap
$\widehat{\mathcal G}_{\tau}(k)$, rather than expected-return regret. As
described above, this quantity compares each evaluated policy with the common
approximate-frontier numerical reference and should not be interpreted as the exact
quantile regret $\Reg_\tau(T)$ analyzed theoretically.

For the median target $\tau=0.5$,
Figure~\ref{fig:asset-selling-regret-tau05} shows that \UCBBQRL\ maintains the
smallest cumulative numerical policy gap throughout most of training. At
episode $2000$, its mean cumulative gap is $1.15$, compared with $9.13$ for
UCBVI and $114.43$ for $\varepsilon$-greedy $Q$-learning, which is the
next-smallest value among the remaining benchmarks. Thus, in this
asset-selling instance, explicitly incorporating the buffered quantile
criterion during learning produces a substantially better policy under the
median objective.

\begin{figure}[t]
\centering
\assetregretplot
{\QMDPDataDir/AssetSelling_tau0p5_plot.csv}
\caption{Cumulative numerical $\tau$-quantile policy gap
$\widehat{\mathcal G}_{\tau}(k)$ in the asset-selling experiment for
$\tau=0.5$, measured relative to the approximate-frontier numerical reference.
Curves show the mean over the three held-out seeds, and shaded regions show
the descriptive pointwise variability bands defined in the preceding
subsection.}
\label{fig:asset-selling-regret-tau05}
\end{figure}

A similar pattern appears for the upper-tail target $\tau=0.9$ in
Figure~\ref{fig:asset-selling-regret-tau09}. \UCBBQRL\ again attains the
smallest cumulative numerical policy gap. At episode $2000$, its mean value is
$0.79$, compared with $2.79$ for UCBVI and $56.38$ for the next-smallest
baseline. The advantage of \UCBBQRL\ under this criterion is consistent with
its design: unlike the benchmark methods, its planning objective explicitly
depends on the target quantile.

\begin{figure}[t]
\centering
\assetregretplot
{\QMDPDataDir/AssetSelling_tau0p9_plot.csv}
\caption{Cumulative numerical $\tau$-quantile policy gap
$\widehat{\mathcal G}_{\tau}(k)$ in the asset-selling experiment for
$\tau=0.9$, measured relative to the capped-frontier numerical reference.
Curves show the mean over the three held-out seeds, and shaded regions show
the descriptive pointwise variability bands defined in the preceding
subsection.}
\label{fig:asset-selling-regret-tau09}
\end{figure}

Taken together, the three figures illustrate two different aspects of the
learned policies. Figure~\ref{fig:asset-selling-regret-tau01} shows that
\UCBBQRL\ remains competitive under the conventional expected-return criterion
when trained for a conservative lower-tail objective. In contrast,
Figures~\ref{fig:asset-selling-regret-tau05} and
\ref{fig:asset-selling-regret-tau09} show its advantage when policies are
evaluated according to the quantile-sensitive criterion that motivates the
method. Because the two performance measures have different definitions and
benchmarks, their numerical values should not be compared directly across
figures.

\subsection{Policy and Risk Interpretation}

Figure~\ref{fig:asset-selling-final-policy} shows the final
stage-dependent policy learned by \UCBBQRL\ for each target quantile
$\tau\in\{0.1,0.5,0.9\}$. Final policies are obtained separately for the three
held-out seeds. Because the actions are discrete, averaging these policies
across seeds would generally not produce a valid policy. We therefore display
one selected policy for each value of $\tau$: among the three held-out runs, we
choose the run with the smallest final quantile error, using the
buffered-quantile error and then the seed number as tie-breakers. This selection
rule is used only to illustrate the structure of a learned policy and
deliberately favors the best-performing held-out replicate. Accordingly,
Figure~\ref{fig:asset-selling-final-policy} should not be interpreted as an
average or typical policy across seeds; the cross-seed performance comparisons
are reported separately in the preceding subsection. The absorbing post-sale
state is omitted because no further decision is made after the asset is sold.

The selected policies exhibit a clear threshold structure: for a given
decision period, \UCBBQRL\ continues when the current offer is sufficiently low
and stops once the offer exceeds a quantile-dependent threshold. At the first
decision period, the displayed policy continues for offers $s\leq13$ when
$\tau=0.1$, for $s\leq21$ when $\tau=0.5$, and for $s\leq23$ when
$\tau=0.9$. Equivalently, the policy with $\tau=0.1$ accepts any initial offer
of at least $14$, whereas the policy with $\tau=0.9$ accepts only the maximum
offer, $24$. Thus, the continuation region expands as $\tau$ increases. This
behavior is consistent with the interpretation of the quantile objective:
a smaller value of $\tau$ emphasizes conservative lower-tail performance and
therefore favors accepting a sufficiently good current offer, whereas a larger
value of $\tau$ places greater emphasis on upper-tail outcomes and is therefore
more willing to reject the current offer in pursuit of a better future
realization.

The stopping decision also depends on the remaining horizon. For every value
of $\tau$, the stopping region expands as the end of the episode approaches,
because fewer opportunities remain to obtain a better offer after choosing
$\mathrm{Continue}$. In the final decision period, all three displayed policies
choose $\mathrm{Stop}$ for every offer state, consistent with the dominance
argument in the model formulation. Figure~\ref{fig:asset-selling-final-policy}
therefore illustrates how the learned stopping rule depends jointly on the
target quantile, the current offer, and the remaining number of decision
periods. In particular, changing $\tau$ alters not only the value assigned to a
policy but also the state-wise decisions produced by \UCBBQRL.

\begin{figure}[t]
\centering

\begin{tikzpicture}

\begin{groupplot}[
    group style={
        group size=3 by 1,
        horizontal sep=0.4cm
    },
    width=0.28\textwidth,
    height=0.35\textwidth,
    scale only axis,
    xmin=0.5,
    xmax=10.5,
    ymin=-0.5,
    ymax=24.5,
    xtick={1,...,10},
    ytick={0,...,24},
    xlabel={Horizon step},
    tick align=outside,
    tick label style={font=\scriptsize},
    label style={font=\small},
    title style={
        font=\normalsize,
        yshift=0mm
    },
    axis line style={black!75}
]

\policyplot[
    ylabel={Offer state}
]{\dataTauLow}{\tau=0.1}

\policyplot[
    yticklabels=\empty
]{\dataTauMedian}{\tau=0.5}

\policyplot[
    yticklabels=\empty
]{\dataTauHigh}{\tau=0.9}

\end{groupplot}

\matrix[
    matrix of nodes,
    nodes={
        font=\scriptsize,
        anchor=center
    },
    draw=black!55,
    fill=white,
    rounded corners=1pt,
    inner xsep=4pt,
    inner ysep=2pt,
    column sep=2pt,
    row sep=0pt,
    anchor=south
]
at ($(group c2r1.north)+(0,0.68cm)$)
{
    |[
        policy legend swatch,
        fill=black!18
    ]| {}
    &
    Stop
    &
    \hspace{7pt}
    &
    |[
        policy legend swatch,
        fill=algNavy
    ]| {}
    &
    Continue
    \\
};

\end{tikzpicture}

\caption{Selected final \UCBBQRL\ policies in the asset-selling experiment
for $\tau\in\{0.1,0.5,0.9\}$. For each target quantile, the displayed policy
is taken from the held-out run with the smallest final quantile error;
buffered-quantile error and seed number are used as tie-breakers. Dark-blue
cells indicate $\mathrm{Continue}$ and light-gray cells indicate
$\mathrm{Stop}$. The displayed policies are selected for structural
illustration and should not be interpreted as averages across seeds. The
absorbing post-sale state is omitted.}
\label{fig:asset-selling-final-policy}
\end{figure}

\section{Discussion}\label{sec:discussion}
We discuss the implications of the proposed buffered quantile learning
framework, focusing on methodological novelty, computational scope, regret
interpretation, and the root-level plateau condition.

\subsection{Toolkit Novelty}
In expectation-based finite-horizon RL, regret analyses exploit the linearity of expectation, which makes transition-estimation errors propagate additively through Bellman recursions. Quantile objectives do not have this structure: quantiles are nonlinear and can change discontinuously under small perturbations of the return distribution. Thus, standard expectation-based optimistic analyses do not directly apply.
Our key idea is to stabilize quantile learning through lower buffering. Instead of planning with the exact \(\tau\)-quantile, \UCBBQRL\space optimizes a lower-buffered criterion that averages quantiles below \(\tau\). This smoothing reduces sensitivity to transition uncertainty while maintaining a connection to the original objective as the buffer shrinks. The resulting regret bound separates the statistical error from learning the transition kernel and the approximation error from using the buffered objective, showing that quantile learning is difficult both because of transition uncertainty and local quantile instability.

\subsection{Computational Aspects}
The optimistic planning step in \UCBBQRL\space is implemented by \EVIBQ. Unlike standard value iteration, it must track full remaining-return distributions rather than scalar expected values, since quantiles depend on the entire return law. Thus, \EVIBQ\space stores finite return laws together with policy-tree labels that encode the corresponding deterministic history-dependent policies. This exact representation defines the optimistic buffered planning problem, but it is best viewed as a theoretical oracle rather than a scalable algorithm. The number of return laws can grow rapidly with the horizon, states, and actions, and optimizing over the confidence set may be computationally costly. Practical versions may use candidate models, sampling, discretization, or pruning, which would introduce planning error; sublinear regret then requires the cumulative planning error to remain sublinear.

\subsection{Comparison with Prior Work}
The regret bound for \UCBBQRL\space preserves the familiar \(\sqrt{T}\) scaling from optimism-based RL. Up to logarithmic factors, the statistical term scales as \(\frac{H^2}{\beta_{T-1}}\sqrt{SAT}\). The factor \(1/\beta_{T-1}\) captures quantile instability: a smaller buffer better approximates the exact \(\tau\)-quantile but increases sensitivity to transition-estimation error. Thus, the buffer balances exact-quantile accuracy and statistical stability. Unlike known-model QMDP planning, the online setting must both estimate the transition kernel and control quantile sensitivity. Confidence sets address transition uncertainty, while lower buffering stabilizes planning. The lower bound shows this challenge is intrinsic: small transition-probability changes can determine whether the exact quantile value is low or high, a phenomenon absent from expectation-based learning.

\subsection{Role of the Root-Level Plateau Parameter}
The root-level plateau parameter \(\rho_\tau\) captures the local structure of
the initial-state return distribution near the target quantile. If, for a fixed
policy, the return distribution has positive mass at its \(\tau\)-quantile
value, then the same value is attained for quantile levels immediately below
\(\tau\). The parameter \(\rho_\tau\) is the smallest such left-neighborhood over
the policies relevant to the regret analysis. This parameter has two roles. First, it determines when the buffered objective
recovers the exact objective: once the buffer is smaller than the plateau width,
the lower-buffered and exact initial-state quantile values coincide, so the
buffering error vanishes after a finite transient under a decreasing buffer
schedule. Second, \(\rho_\tau\) appears in the lower bound: a small plateau width
corresponds to a fragile quantile threshold, where the learner must distinguish
small transition-probability differences to identify whether the target quantile
crosses a critical return level. Thus, quantile RL depends not only on
\(S\), \(A\), \(H\), and \(T\), but also on the local geometry of the return
distribution near the target quantile, with \(\rho_\tau\) playing a role analogous to a margin or gap parameter.
\subsection{Computational Hardness and Future Research}
The hardness result in Section~\ref{sec:hardness} shows that exact point-quantile
evaluation and exact lower-buffered quantile evaluation are PP-hard even for a
fixed policy in a highly restricted finite-horizon MDP. Thus, the computational
difficulty is not only caused by the frontier representation in
Algorithm~\ref{alg:EVI-BQ}; exact quantile evaluation itself is difficult in
general. Consequently, \EVIBQ\space should be viewed as an exact planning
oracle for the regret analysis rather than a scalable generic implementation.

Future research should develop approximate planners with explicit planning-error
guarantees. Possible directions include discretizing confidence sets, sampling
candidate models, pruning nearly duplicate return laws, using distributional
value approximations, and designing deep RL methods for buffered quantile
objectives.

\section*{Code and Data Disclosure}\label{sec:Code and Data Disclosure}The code and data to support the numerical experiments in this paper can be found at \url{https://github.com/MAlipourVaezi/UCB-BQRL}. 

\newpage
\bibliographystyle{plainnat}
\bibliography{references}

\newpage
\section*{Appendix}
\subsection*{Proof of Theorem~\ref{thm:UCB-BQRL-finite}}
Starting from the definition of quantile regret, $\Reg_\tau(T) = \sum_{t=0}^{T-1} \left(V_{\tau,0}^{\pi^\star,P^\star}(\bar s) - V_{\tau,0}^{\pi^t,P^\star}(\bar s)
\right)$. By Lemma~\ref{lem:confidence-set-containment}, with probability at least
\(1-\delta\), both \(P^\star\) and \(P^t\) belong to \(\mathcal C_\delta^t\) for every episode \(t\). Therefore,
Lemma~\ref{lem:buffered-optimistic-decomposition} implies that, with probability
at least \(1-\delta\), for every \(t=0,\ldots,T-1\),
\begin{align}
\Reg_\tau(T)
\le \sum_{t=0}^{T-1}
\Bigg(2\Delta_{\beta_t} + \left(V_{\tau,0}^{\pi^t,P^t,\beta_t}(\bar s) - V_{\tau,0}^{\pi^t,P^\star,\beta_t}(\bar s) \right)\Bigg).
\label{eq:buffered-one-episode-decomp}
\end{align}
Define $L_\delta\coloneqq \log\!\frac{2SATH}{\delta}$, $w_h\coloneqq H-h-1$, and $\Lambda_H\coloneqq \sum_{h=0}^{H-1}w_h$. For each episode \(t\), let \(\mathcal F_t\) denote the sigma-field generated by
all observations before episode \(t\), together with the planned model \(P^t\),
the planned policy \(\pi^t\), and the counts \(N_h^t(s,a)\). Conditional on
\(\mathcal F_t\), the model \(P^t\), the policy \(\pi^t\), and the counts are
fixed.

Fix an episode \(t\). The two root return laws $G_{0,\bar s}^{\pi^t,P^t}$, and $G_{0,\bar s}^{\pi^t,P^\star}$ are supported on \([0,H]\), because each stage reward lies in \([0,1]\). Since
\(\beta_t\le \tau\), Definition~\ref{def:buffered-quantile} gives
\(\ell_{\beta_t}(\tau)=\beta_t\). Applying Lemma~\ref{lem:buffered-quantile-w1} gives
\begin{align}
\left|
V_{\tau,0}^{\pi^t,P^t,\beta_t}(\bar s)
-
V_{\tau,0}^{\pi^t,P^\star,\beta_t}(\bar s)
\right| \le
\frac{1}{\beta_t}
W_1\!\left(
G_{0,\bar s}^{\pi^t,P^t},
G_{0,\bar s}^{\pi^t,P^\star}
\right).
\label{eq:buffered-value-w1-bound}
\end{align}
Next, Lemma~\ref{lem:return-law-w1}, applied with \(P=P^t\) and
\(\bar P=P^\star\), yields
\begin{align}
W_1\!\left(
G_{0,\bar s}^{\pi^t,P^t},
G_{0,\bar s}^{\pi^t,P^\star}
\right)
\le 
\sum_{h=0}^{H-1}
w_h
\mathbb E\!\left[
\mathrm{TV}\!\left(
P_h^t(\cdot\mid S_h^t,A_h^t),
P_h^\star(\cdot\mid S_h^t,A_h^t)
\right)
\,\middle|\,
\mathcal F_t
\right].
\label{eq:w1-tv-trajectory-bound}
\end{align}
On the event from Lemma~\ref{lem:confidence-set-containment}, both
\(P^\star\) and \(P^t\) belong to \(\mathcal C_\delta^t\). Hence
Lemma~\ref{lem:tv-confidence-consequence} implies that, for every \(h,s,a\), $\mathrm{TV}\!\left(
P_h^t(\cdot\mid s,a),
P_h^\star(\cdot\mid s,a)
\right)
\le
f_\delta\!\big(N_h^t(s,a)\big)$. Combining this bound with \eqref{eq:buffered-value-w1-bound} and
\eqref{eq:w1-tv-trajectory-bound}, we obtain
\begin{align}
V_{\tau,0}^{\pi^t,P^t,\beta_t}(\bar s)
-
V_{\tau,0}^{\pi^t,P^\star,\beta_t}(\bar s)
\le
\frac{1}{\beta_t}
\mathbb E\!\left[
\sum_{h=0}^{H-1}
w_h f_\delta\!\big(N_h^t(S_h^t,A_h^t)\big)
\,\middle|\,
\mathcal F_t
\right].
\label{eq:buffered-model-bound}
\end{align}

The left-hand side in \eqref{eq:buffered-model-bound} may be negative, but the
inequality is valid because it follows from an upper bound on the absolute value.

Define $X_t
\coloneqq
\sum_{h=0}^{H-1}
w_h f_\delta\!\big(N_h^t(S_h^t,A_h^t)\big)$, and $\bar X_t \coloneqq \mathbb E[X_t\mid\mathcal F_t]$. Substituting \eqref{eq:buffered-model-bound} into
\eqref{eq:buffered-one-episode-decomp} gives $\Reg_\tau(T) \le 2\sum_{t=0}^{T-1}\Delta_{\beta_t} + \sum_{t=0}^{T-1}\frac{1}{\beta_t}\bar X_t$. Because \(\{\beta_t\}_{t\ge0}\) is non-increasing, $\beta_t\ge \beta_{T-1}$, for each $t=0,\ldots,T-1$. Therefore, $\Reg_\tau(T) \le\; 2\sum_{t=0}^{T-1}\Delta_{\beta_t} + \frac{1}{\beta_{T-1}} \sum_{t=0}^{T-1}\bar X_t$.

It remains to control the predictable sum \(\sum_{t=0}^{T-1}\bar X_t\). Since $f_\delta(n) = c\sqrt{\frac{L_\delta}{\max\{1,n\}}} \le c\sqrt{L_\delta}$ for every \(n\), and since \(\sum_{h=0}^{H-1}w_h=\Lambda_H\), we have $0\le X_t\le c\Lambda_H\sqrt{L_\delta}$, and $0\le \bar X_t\le c\Lambda_H\sqrt{L_\delta}$. Define $Y_t\coloneqq \bar X_t-X_t$. Then \(\{Y_t\}_{t=0}^{T-1}\) is a martingale difference sequence with respect to
the episode filtration, because $\mathbb E[Y_t\mid\mathcal F_t] = \bar X_t-\mathbb E[X_t\mid\mathcal F_t] = 0$. Moreover, $|Y_t|\le c\Lambda_H\sqrt{L_\delta}$. By the one-sided Azuma--Hoeffding inequality, with probability at least \(1-\delta\), $\sum_{t=0}^{T-1}\bar X_t \le \sum_{t=0}^{T-1}X_t + c\Lambda_H\sqrt{2T L_\delta\log\!\frac{1}{\delta}}$. 

By Lemma~\ref{lem:weighted-stage-wise-counting}, $\sum_{t=0}^{T-1}X_t = \sum_{t=0}^{T-1}\sum_{h=0}^{H-1}
w_h f_\delta\!\big(N_h^t(S_h^t,A_h^t)\big) \le 2c\Lambda_H\sqrt{SA T L_\delta}$. Therefore, with probability at least \(1-\delta\) for the martingale event, $\sum_{t=0}^{T-1}\bar X_t
\le
2c\Lambda_H\sqrt{SA T L_\delta}
+
c\Lambda_H\sqrt{2T L_\delta\log\!\frac{1}{\delta}}$. Combining this event with the confidence-containment event from Lemma~\ref{lem:confidence-set-containment}, and applying the union bound, both
events hold simultaneously with probability at least \(1-2\delta\). On this
intersection, $\Reg_\tau(T)
\le\;
2\sum_{t=0}^{T-1}\Delta_{\beta_t}
+
\frac{2c\Lambda_H}{\beta_{T-1}}
\sqrt{SA T\log\!\frac{2SATH}{\delta}}
+
\frac{c\Lambda_H}{\beta_{T-1}}
\sqrt{
2T\log\!\frac{2SATH}{\delta}
\log\!\frac{1}{\delta}
}$. As $\Lambda_H \le \frac{H^2}{2}$, $\Reg_\tau(T)
\le\;
2\sum_{t=0}^{T-1}\Delta_{\beta_t}
+
\frac{c}{\beta_{T-1}}
\sqrt{SATH^4\log\!\frac{2SATH}{\delta}}
+
\frac{c}{\beta_{T-1}}
\sqrt{
\frac{TH^4}{2}\log\!\frac{2SATH}{\delta}
\log\!\frac{1}{\delta}
} \allowbreak$. This proves the theorem.
\myqed
\subsection*{Proof of Theorem~\ref{thm:UCB-BQ-IT_lower}}
We construct a two-state family of MDPs indexed by a distinguished action \(a^\star\in[A]\). The learner does not know \(a^\star\). The only informative randomness in each episode is whether the process moves from the initial state \(s_0\) to a high-reward absorbing state \(s_1\) after the first action. Thus, identifying the optimal quantile policy is statistically equivalent to identifying the best arm in a stochastic Bernoulli bandit.

Let $\mathcal S=\{s_0,s_1\}$, $\bar s=s_0$, and $\mathcal A=[A]$. Define
\(m_\tau=\min\{\tau,1-\tau\}\), and fix a parameter \(\rho\in(0,m_\tau/8]\). Therefore, $\tau-\rho>0$, and $\tau+\rho<1$.

For a fixed distinguished action \(a^\star\in[A]\), define $p(a)\coloneqq P_0^\star(s_1\mid s_0,a)$, for each $a\in[A]$, where $p(a^\star)=1-\tau+\rho$, $p(a)=1-\tau-\rho$ for all $a\neq a^\star$. Equivalently, $P_0^\star(s_0\mid s_0,a^\star)=\tau-\rho$, and $P_0^\star(s_0\mid s_0,a)=\tau+\rho$ for all $a\neq a^\star$. These inequalities ensure that all transition probabilities are valid. The transition kernel is completed as follows.
\begin{itemize}
\item At stage \(h=0\), from \(s_0\), $P_0^\star(s_1\mid s_0,a)=p(a)$, and $P_0^\star(s_0\mid s_0,a)=1-p(a)$, $\forall a\in[A]$.
\item For every \(h\ge1\), both states are absorbing: $P_h^\star(s_0\mid s_0,a)=1$, and P$_h^\star(s_1\mid s_1,a)=1$, $\forall a\in[A]$.
\item For completeness, at \((h,s)=(0,s_1)\), set $P_0^\star(s_1\mid s_1,a)=1$, $\forall a\in[A]$.
\end{itemize}
Thus, the only randomness in an episode is the first transition from \(s_0\);
after time \(1\), the trajectory is deterministic.

The deterministic rewards are $r_0(s_0,a)=0$, $r_h(s_0,a)=0$, and $r_h(s_1,a)=1$, for $h=1,\ldots,H-1$. Therefore, if the process remains in \(s_0\) at time \(1\), the total return is \(0\). If the process moves to \(s_1\) at time \(1\), the total return is $H-1$.

Fix any deterministic history-dependent policy \(\pi\in\Pi_{\mathrm{det}}\), and
let $a\coloneqq \pi_0(s_0)$ be the first action selected at the initial history. Since all later transitions
and rewards are deterministic once \(S_1\) is realized, the total return $G_a\coloneqq \sum_{h=0}^{H-1}r_h(S_h,A_h)$ satisfies $G_a=0, \quad\text{if }S_1=s_0,$ and $G_a=H-1, \quad\text{if }S_1=s_1$. Thus, $\Prob(G_a=0)=1-p(a)$, and $\Prob(G_a=H-1)=p(a)$. For this two-point return law, the left-continuous \(\tau\)-quantile is $Q_\tau(G_a)=0,\quad \text{if }\tau\le 1-p(a)$, and $Q_\tau(G_a)=H-1,\quad \text{if }\tau>1-p(a)$.

\begin{figure}[H]
\centering
\begin{tikzpicture}[x=5cm,y=3cm,>=Latex]

    % Schematic locations on the q-axis
    \def\qleft{0.40}   % tau - rho
    \def\qtau{0.55}    % tau
    \def\qright{0.70}  % tau + rho
    \def\qmax{1.00}
    \def\ymax{1.18}
    \def\HH{1.00}

    % axes
    \draw[->,thick] (0,0) -- (\qmax,0) node[below right] {$q$};
    \draw[->,thick] (0,0) -- (0,\ymax) node[above left] {$Q_q(G_a)$};

    % origin
    \node[below left] at (0,0) {$0$};

    % high return level
    \draw[dashed,gray!70] (0,\HH) -- (0.83,\HH);
    \node[left] at (0,\HH) {$H-1$};

    % jump guides
    \draw[dashed,cyan!70!black] (\qleft,0) -- (\qleft,\HH);
    \draw[dashed,red!80!black] (\qright,0) -- (\qright,\HH);
    \draw[dashed,black!55] (\qtau,0) -- (\qtau,\HH);

    % quantile function for a^*
    % Left-continuous convention: value at the jump remains the lower value.
    \draw[cyan!70!black,very thick] (0,0) -- (\qleft,0);
    \draw[cyan!70!black,very thick] (\qleft,\HH) -- (0.95,\HH);
    \fill[cyan!70!black] (\qleft,0) circle (1.9pt);
    \filldraw[fill=white,draw=cyan!70!black,thick] (\qleft,\HH) circle (1.9pt);

    % quantile function for a \neq a^*
    \draw[red!80!black,very thick] (0,0) -- (\qright,0);
    \draw[red!80!black,very thick] (\qright,\HH) -- (0.95,\HH);
    \fill[red!80!black] (\qright,0) circle (1.9pt);
    \filldraw[fill=white,draw=red!80!black,thick] (\qright,\HH) circle (1.9pt);

    % x-axis tick labels
    \node[below,cyan!70!black] at (\qleft,0) {$\tau-\rho$};
    \node[below] at (\qtau,0) {$\tau$};
    \node[below,red!80!black] at (\qright,0) {$\tau+\rho$};

    % small ticks on x-axis
    \draw[thick] (\qleft,0.012) -- (\qleft,-0.012);
    \draw[thick] (\qtau,0.012) -- (\qtau,-0.012);
    \draw[thick] (\qright,0.012) -- (\qright,-0.012);

    % legend
    \begin{scope}[shift={(0.77,0.10)}, x=1cm, y=1cm]
        \draw[black, rounded corners=2pt, fill=white] (0,0) rectangle (2.2,0.9);

        \draw[cyan!70!black, very thick] (0.18,0.68) -- (0.62,0.68);
        \node[anchor=west] at (0.78,0.68) {$a^\star$};

        \draw[red!80!black, very thick] (0.18,0.30) -- (0.62,0.30);
        \node[anchor=west] at (0.78,0.30) {$a\neq a^\star$};
    \end{scope}

\end{tikzpicture}
\caption{Quantile functions of the root return law in the information-theoretic
lower-bound construction. For the distinguished action \(a^\star\), the return
is \(0\) with probability \(\tau-\rho\) and \(H-1\) otherwise, so its quantile
function jumps at \(\tau-\rho\). For any suboptimal action \(a\neq a^\star\), the
return is \(0\) with probability \(\tau+\rho\) and \(H-1\) otherwise, so its
quantile function jumps at \(\tau+\rho\). Therefore, at the target level
\(\tau\), the distinguished action has \(Q_\tau(G_{a^\star})=H-1\), whereas any
suboptimal action has \(Q_\tau(G_a)=0\). The open and closed circles reflect the
left-continuous quantile convention.}
\label{fig:it-lb-quantile-jumps}
\end{figure}

For the distinguished action \(a^\star\), $1-p(a^\star)=\tau-\rho<\tau$, so every policy choosing \(a^\star\) at the initial state satisfies $V_{\tau,0}^{\pi,P^\star}(s_0)=H-1$. For every action \(a\neq a^\star\), $1-p(a)=\tau+\rho\ge\tau$, so every policy choosing \(a\neq a^\star\) at the initial state satisfies $V_{\tau,0}^{\pi,P^\star}(s_0)=0$. Hence the unique optimal first action is \(a^\star\), and the per-episode exact \(\tau\)-quantile gap is $H-1$.

We now compute the root-level plateau constant of the constructed instance. For
any policy whose first action is \(a^\star\), the root return law satisfies $\Prob(G_{a^\star}=0)=\tau-\rho$, and $\Prob(G_{a^\star}=H-1)=1-\tau+\rho$. Its \(\tau\)-quantile is \(H-1\), and the left limit of the CDF at \(H-1\) is $F_{a^\star}((H-1)^-)=\tau-\rho$. Therefore, $\tau-F_{a^\star}((H-1)^-)=\rho$. For any policy whose first action is \(a\neq a^\star\), the root return law
satisfies $\Prob(G_a=0)=\tau+\rho$, and $\Prob(G_a=H-1)=1-\tau-\rho$. Its \(\tau\)-quantile is \(0\), and the left limit of the CDF at \(0\) is $F_a(0^-)=0$. Therefore, $\tau-F_a(0^-)=\tau$. Since \(\rho\le m_\tau/8\le \tau/8\), the minimum over all deterministic
policies is $\rho_\tau=\rho$.

Let $I_t\coloneqq \pi_0^t(s_0)$ be the first action selected by the learning algorithm in episode \(t\). Then,
for the MDP indexed by \(a^\star\), $V_{\tau,0}^{\pi^\star,P^\star}(s_0) - V_{\tau,0}^{\pi^t,P^\star}(s_0) = (H-1)\mathbf 1\{I_t\neq a^\star\}$. Thus,
\begin{equation}
\label{eq:it-lower-regret-count}
\Reg_\tau(T)
=
(H-1)
\sum_{t=0}^{T-1}
\mathbf 1\{I_t\neq a^\star\}.
\end{equation}

The interaction induced by this MDP family is equivalent to a stochastic
\(A\)-armed Bernoulli bandit. In episode \(t\), after selecting \(I_t=a\), the
learner observes whether \(S_1^t=s_1\), and $\Prob(S_1^t=s_1\mid I_t=a)=p(a)$. Thus the corresponding Bernoulli bandit has arm means $\mu_{a^\star}=1-\tau+\rho$, $\mu_a=1-\tau-\rho$ $(a\neq a^\star)$, so the gap between the distinguished arm and every suboptimal arm is $\mu_{a^\star}-\mu_a=2\rho$.

By the standard hard-family lower bound for stochastic Bernoulli bandits, there exists a constant \(c_{\mathrm B}>0\), depending at most on the fixed target level \(\tau\), such that for any learning algorithm there exists a choice of \(a^\star\in[A]\) in the one-good-arm Bernoulli family above for which $2\rho\,
\mathbb E\!\left[
\sum_{t=0}^{T-1}
\mathbf 1\{I_t\neq a^\star\}
\right]
\ge
c_{\mathrm B}\sqrt{AT}$. We choose the MDP corresponding to this hard \(a^\star\). Rearranging gives $\mathbb E\!\left[
\sum_{t=0}^{T-1}
\mathbf 1\{I_t\neq a^\star\}
\right]
\ge
\frac{c_{\mathrm B}}{2\rho}\sqrt{AT}$. Since \(\rho_\tau=\rho\), this is $\mathbb E\!\left[
\sum_{t=0}^{T-1}
\mathbf 1\{I_t\neq a^\star\}
\right]
\ge
\frac{c_{\mathrm B}}{2\rho_\tau}\sqrt{AT}$. Substituting this into Equation~\eqref{eq:it-lower-regret-count} yields
$\mathbb E[\Reg_\tau(T)]
=
(H-1)\,
\mathbb E\!\left[
\sum_{t=0}^{T-1}
\mathbf 1\{I_t\neq a^\star\}
\right] \ge
(H-1)\frac{c_{\mathrm B}}{2\rho_\tau}\sqrt{AT}$. Since \(H\ge2\), \(H-1\ge H/2\). Therefore, $\mathbb E[\Reg_\tau(T)]
\ge
\frac{c_{\mathrm B}}{4}
\frac{H}{\rho_\tau}\sqrt{AT}$. Setting \(c_1=c_{\mathrm B}/4\) proves the theorem.
\myqed

\subsection*{Proof of Theorem~\ref{thm:quantile-eval-hardness}}
We reduce from the threshold version of counting \(0/1\)-knapsack solutions.
Given positive integers \(w_1,\ldots,w_n\), a capacity \(K\), and an integer
\(M\), the problem is to decide whether $\#\left\{
I\subseteq[n]:
\sum_{i\in I}w_i\le K
\right\}
\ge M$. This problem is equivalent, up to the convention of ordering subset sums in increasing or decreasing order, to the Kth largest subset problem, which is PP-complete under polynomial-time Turing reductions \cite[Theorem~3]{haase2016complexity}. We therefore use it as the source
problem for the reduction. We restrict attention to nontrivial instances with \(1\le M\le 2^n\) and \(0\le K<\sum_{i=1}^n w_i\). 

Let $W\coloneqq \sum_{i=1}^n w_i$. Construct a finite-horizon MDP with state space $\mathcal S=\{0,1\}$,
one action, and horizon \(H=n+1\). Since there is only one action, there is only one deterministic policy, so evaluation and planning coincide. For each
\(h=0,\ldots,n-1\) and each state \(s\in\{0,1\}\), let $P_h(1\mid s,a)=\frac12$, and $P_h(0\mid s,a)=\frac12$. The final transition is arbitrary and does not affect the total reward. Define deterministic rewards by $r_0(s,a)=0$, and for \(h=1,\ldots,n\), $r_h(1,a)=\frac{w_h}{W}$, $r_h(0,a)=0$. All rewards lie in \([0,1]\), and all parameters in the constructed instance
have polynomial encoding length.

Let \(X_h\coloneqq \mathbf 1\{S_h=1\}\) for \(h=1,\ldots,n\). Then
\(X_1,\ldots,X_n\) are independent Bernoulli\((1/2)\) random variables, and the
total return under the unique policy is $R = \sum_{h=1}^{n}\frac{w_h}{W}X_h$. Therefore, $\mathbb P\!\left(R\le \frac{K}{W}\right)
=
2^{-n}
\#\left\{
I\subseteq[n]:
\sum_{i\in I}w_i\le K
\right\}$. Set $c\coloneqq \frac{K}{W}$, and $\tau\coloneqq \frac{M-\frac12}{2^n} = \frac{2M-1}{2^{n+1}}$. Then \(\tau\in(0,1)\).

First, consider the point-quantile objective. If the knapsack count is at least
\(M\), then $\mathbb P(R\le c)\ge \frac{M}{2^n}>\tau$, and hence \(Q_\tau(R)\le c\). Conversely, if the knapsack count is less than \(M\), then the count is at most \(M-1\), so $\mathbb P(R\le c)\le\frac{M-1}{2^n}<\tau$. Thus \(Q_\tau(R)>c\). Therefore, deciding whether \(Q_\tau(R)\le c\) decides the threshold counting-knapsack problem. Exact QMDP evaluation is therefore PP-hard.

Now consider the lower-buffered objective. Set $\beta\coloneqq 2^{-(n+1)}$. Then \(\beta\in(0,\tau]\) and $\tau-\beta = \frac{M-1}{2^n}$. If the knapsack count is at least \(M\), then $\mathbb P(R\le c)\ge\frac{M}{2^n}>\tau$, so \(Q_u(R)\le c\) for every \(u\le\tau\). Hence $Q_\tau^\beta(R)\le c$. Conversely, if the knapsack count is less than \(M\), then $\mathbb P(R\le c)\le\frac{M-1}{2^n}=\tau-\beta$. Therefore, for every \(u\in(\tau-\beta,\tau]\), the CDF of \(R\) has not reached
level \(u\) at \(c\), and hence \(Q_u(R)>c\). Since the support of \(R\) lies on the lattice \(\{0,1/W,\ldots,1\}\), this implies $Q_u(R)\ge c+\frac1W$ for every $u\in(\tau-\beta,\tau]$. The endpoint \(u=\tau-\beta\) has Lebesgue measure zero, so it does not affect
the lower-buffered integral. Consequently, $Q_\tau^\beta(R)>c$. Thus deciding whether \(Q_\tau^\beta(R)\le c\) also decides the threshold
counting-knapsack problem. Exact lower-buffered quantile evaluation is therefore PP-hard.

Finally, the corresponding planning problems contain evaluation as a special
case: in the constructed MDP, there is only one action and hence only one policy.
Therefore, exact point-quantile QMDP planning and exact lower-buffered quantile
planning are PP-hard.
\myqed
\subsection*{Proof of Proposition~\ref{prop:evibq}}
Fix \(P\in\mathcal C_\delta^t\). We first show by backward induction that, for
every \(h=0,\ldots,H\) and \(s\in\mathcal S\), the frontier
\(\mathfrak D_h^P(s)\) contains exactly the finite return laws achievable from
state \(s\) at stage \(h\) under deterministic continuation policies and model
\(P\), together with labels for policies that generate those laws.

At \(h=H\), no reward remains. Hence the only achievable return law is the point
mass \(\delta_0\), which is exactly how the algorithm initializes
\(\mathfrak D_H^P(s)\).

Now suppose the statement holds at stage \(h+1\). Fix a state \(s\) and a
deterministic continuation policy starting at \((h,s)\). Such a policy first
chooses an action \(a\in\mathcal A\). After the next state \(S_{h+1}=s_i\) is
observed, the policy follows a deterministic continuation policy from
\((h+1,s_i)\). By the induction hypothesis, the return law of this continuation
is some \(D_i\in\mathfrak D_{h+1}^P(s_i)\). Therefore, the total return law from
\((h,s)\) is $\operatorname{Merge}
\left(
\bigcup_{i=1}^{S}
\left\{
\left(
P_h(s_i\mid s,a)\rho,\,
r_h(s,a)+x
\right)
:
(\rho,x)\in D_i
\right\}
\right)$, which Algorithm~\ref{alg:EVI-BQ} adds to \(\mathfrak D_h^P(s)\).

Conversely, every law added to \(\mathfrak D_h^P(s)\) is generated by a
deterministic continuation policy: select the stored action \(a\) at state \(s\)
and stage \(h\), and after observing \(S_{h+1}=s_i\), follow the continuation
policy encoded by the stored label \(L_i\). Hence the frontier
\(\mathfrak D_h^P(s)\) contains exactly the achievable finite return laws. This
closes the induction.

Applying the result at \(h=0\) and \(s=\bar s\), the algorithm maximizes
\(Q_\tau^{\beta_t}(D)\) over exactly all root return laws achievable under
deterministic policies for the fixed model \(P\). Therefore, $\widehat V_{\tau,0}^{P,\beta_t}(\bar s)
=
\max_{\pi\in\Pi_{\mathrm{det}}}
V_{\tau,0}^{\pi,P,\beta_t}(\bar s)$. The final maximization over \(P\in\mathcal C_\delta^t\) gives $V_{\tau,0}^{\pi^t,P^t,\beta_t}(\bar s)
=
\max_{P\in\mathcal C_\delta^t}
\max_{\pi\in\Pi_{\mathrm{det}}}
V_{\tau,0}^{\pi,P,\beta_t}(\bar s)$. Since the maximization is over \(P\in\mathcal C_\delta^t\), the returned model
satisfies \(P^t\in\mathcal C_\delta^t\). This proves the proposition.
\myqed
\subsection*{Proof of Corollary~\ref{cor:buffered-UCB-BQrl-log-rate}}
By Theorem~\ref{thm:UCB-BQRL-finite}, $\Reg_\tau(T)
\le\;
2\sum_{t=0}^{T-1}\Delta_{\beta_t}
+
\frac{2c\Lambda_H}{\beta_{T-1}}
\sqrt{SA T\log\!\frac{2SATH}{\delta}}+
\frac{c\Lambda_H}{\beta_{T-1}}
\sqrt{
2T\log\!\frac{2SATH}{\delta}
\log\!\frac{1}{\delta}
}$. For the logarithmic schedule, $\beta_{T-1}
=
\frac{\tau}{\log(\mathrm e+T-1)}$. Therefore, $\frac{1}{\beta_{T-1}}
=
\frac{\log(\mathrm e+T-1)}{\tau}$. Moreover, by Lemma~\ref{lem:constant-buffering-cost}, $2\sum_{t=0}^{T-1}\Delta_{\beta_t}
\le
2H K_\tau$. Substituting these two bounds into Theorem~\ref{thm:UCB-BQRL-finite} completes the proof.
\myqed
\subsection*{Technical Lemmas}
\begin{lemma}[High-probability confidence-set containment]
\label{lem:confidence-set-containment}
Let \(f_\delta\) be defined as in Equation~\eqref{eq:confset}. Suppose that, in
each episode \(t\), the planning step (Line~\ref{alg:UCB--BQRL-L10} in Algorithm~\ref{alg:UCB--BQRL} and equivalently, Algorithm~\ref{alg:EVI-BQ}) returns a model
\(P^t\in\mathcal C_\delta^t\). Then, with probability at least \(1-\delta\),
both the true model and the returned optimistic model belong to the confidence
set $P^\star\in\mathcal C_\delta^t$ and $P^t\in\mathcal C_\delta^t$, for all $t=0,\ldots,T-1$.
\end{lemma}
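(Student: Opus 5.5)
The plan has two parts, matching the two memberships in the claim. The statement \(P^t\in\mathcal C_\delta^t\) holds deterministically: by hypothesis (equivalently, by Proposition~\ref{prop:evibq}) the planner maximizes over \(\mathcal C_\delta^t\), so the returned model lies in it for every \(t\). Hence all the probabilistic content lies in showing \(\Prob(\mathcal E_\delta)\ge 1-\delta\). By the definition of \(\mathcal C_\delta^t\) in Equation~\eqref{eq:confset}, the event \(\mathcal E_\delta\) is exactly the event that \(P^\star\in\mathcal C_\delta^t\) for all \(t\).

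To bound \(\Prob(\mathcal E_\delta^c)\), I would use the standard ``stack of samples'' construction to remove the dependence between counts and adaptively chosen actions. For each \((h,s,a)\), let \(Y_1^{h,s,a},Y_2^{h,s,a},\ldots\) be the successive next states observed after the \(1\)st, \(2\)nd, \(\ldots\) visits to \((s,a)\) at stage \(h\). By the strong Markov property, or equivalently by pre-drawing i.i.d.\ samples from \(P_h^\star(\cdot\mid s,a)\) and consuming them in order, these are i.i.d.\ with law \(P_h^\star(\cdot\mid s,a)\). Moreover, \(\widehat P_h^t(\cdot\mid s,a)\) equals the empirical law \(\widehat P_{h,s,a,n}\) of the first \(n=N_h^t(s,a)\) samples. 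It therefore suffices to show that, simultaneously for all \(h,s,a\) and all \(n\in\{1,\ldots,T\}\),
\[
\|\widehat P_{h,s,a,n}-P_h^\star(\cdot\mid s,a)\|_1\le f_\delta(n).
\]
The case \(n=0\) is trivial, since \(f_\delta(0)=c\sqrt{\log(2SATH/\delta)}\ge 2\) by the choice \(c\ge 2/\sqrt{\log(2SATH/\delta)}\), while every \(\ell_1\) distance between distributions is at most \(2\).

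For fixed \((h,s,a,n)\), I would apply the Weissman et al.\ \(\ell_1\) inequality. Write \(\|\widehat p-p\|_1=2\max_{B\subseteq\mathcal S}(\widehat p(B)-p(B))\), take a union over the \(2^S-2\) nontrivial subsets, and apply Hoeffding to each. This gives
\[
\Prob\big(\|\widehat p_n-p\|_1\ge \epsilon\big)\le (2^S-2)\,e^{-n\epsilon^2/2}.
\]
Taking \(\epsilon=f_\delta(n)\), the lower bound on \(c\) yields \(n f_\delta(n)^2/2=c^2\log(2SATH/\delta)/2\ge \log(SATH(2^S-2)/\delta)\). Each failure probability is therefore at most \(\delta/(SATH)\). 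A union bound over \(S\cdot A\cdot H\) triples and \(n\le T\) gives total failure probability at most \(\delta\).

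The main obstacle is the adaptivity step: \(N_h^t(s,a)\) is random and depends on past data, so concentration cannot be applied directly at the random count. The stack-of-samples coupling together with the union bound over all deterministic \(n\le T\) handles this. This is exactly why the constant \(c\) carries both the \(2^S-2\) factor and the \(SATH\) factor inside its logarithms.
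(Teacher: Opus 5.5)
Your proof is correct. Its skeleton matches the paper's: $P^t\in\mathcal C_\delta^t$ holds by construction of the planner, the case $n=0$ follows from $f_\delta(0)\ge 2$, you use Weissman's bound, you check that the choice of $c$ gives failure probability $\delta/(SATH)$ per event, and you finish with a union bound over $SATH$ events. (The paper proves Weissman's bound with sign vectors $v\in\{-1,+1\}^S\setminus\{\pm\mathbf 1\}$, which is equivalent to your nontrivial subsets.)

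The genuine difference is how the random count is handled. The paper fixes $(t,h,s,a)$, conditions on $N_h^t(s,a)=n$, and asserts that the observed next states are then i.i.d.\ from $P_h^\star(\cdot\mid s,a)$. It then union-bounds over $(t,h,s,a)$. That conditional claim does not hold in general here. The optimistic policies depend on earlier outcomes at $(h,s,a)$, so whether the learner returns to that triple depends on what it saw there. Conditioning on the visit count can therefore bias the samples.

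Your construction avoids this. You draw a tape of i.i.d.\ samples for each $(h,s,a)$ and take the union bound over deterministic sample sizes $n$ instead of over episodes. The number of events is still $SAH\cdot T$; in fact $n\le T-1$ suffices, since $N_h^t(s,a)\le t$. So the change costs nothing in the constant $c$.

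In short, you obtain the same statement with a rigorous treatment of adaptive sampling. The paper's argument is shorter, but its per-$(t,h,s,a)$ conditioning step is only justified once the union bound is reorganized the way you did it.
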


\begin{proof}{Proof of Lemma~\ref{lem:confidence-set-containment}}
Define the event $\mathcal E_\delta
\coloneqq
\big\{
\big\|P_h^\star(\cdot\mid s,a)-\widehat P_h^t(\cdot\mid s,a)\big\|_1
\le
f_\delta\!\big(N_h^t(s,a)\big),
\ \forall t,h,s,a
\big\} \allowbreak$. We first show that \(\mathbb P(\mathcal E_\delta)\ge 1-\delta\). Let $L_\delta\coloneqq \log\!\frac{2SATH}{\delta}$. Fix a tuple \((t,h,s,a)\), and condition on \(N_h^t(s,a)=n\). If \(n=0\), then
by the choice of \(c\), $f_\delta(0)=c\sqrt{L_\delta}\ge 2$. Because the \(\ell_1\)-distance between two probability vectors is at most \(2\), the confidence inequality holds automatically when \(n=0\).

Now suppose \(n\ge1\). Conditional on \(N_h^t(s,a)=n\), the observed next states
from visits to \((h,s,a)\) are i.i.d. draws from
\(P_h^\star(\cdot\mid s,a)\). By Lemma~\ref{lem:weissman}, $\mathbb P\!\big(
\big\|
P_h^\star(\cdot\mid s,a)-\widehat P_h^t(\cdot\mid s,a)
\big\|_1
>
f_\delta(n)\,|\,
N_h^t(s,a)=n
\big) \le (2^S-2)\exp\!\big(-\frac{n f_\delta(n)^2}{2}\big)$. For \(n\ge1\), the definition of \(f_\delta\) gives $n f_\delta(n)^2
=
c^2 L_\delta$. Therefore, $(2^S-2)\exp\!\big(-\frac{n f_\delta(n)^2}{2}\big)=(2^S-2)\exp\!\big(-\frac{c^2L_\delta}{2}\big)$. The stated lower bound on \(c\) ensures $(2^S-2)\exp\!\left(-\frac{c^2L_\delta}{2}\right)
\le
\frac{\delta}{SATH}$. Taking a union bound over all \(t=0,\ldots,T-1\),
\(h=0,\ldots,H-1\), \(s\in\mathcal S\), and \(a\in\mathcal A\), we obtain $\mathbb P(\mathcal E_\delta^c)\le \delta$. Thus, $\mathbb P(\mathcal E_\delta)\ge 1-\delta$.

On \(\mathcal E_\delta\), the true kernel satisfies, for every \(t,h,s,a\), $\left\|
P_h^\star(\cdot\mid s,a)-\widehat P_h^t(\cdot\mid s,a)
\right\|_1
\le
f_\delta\!\big(N_h^t(s,a)\big)$. By the definition of \(\mathcal C_\delta^t\), this means $P^\star\in\mathcal C_\delta^t$, for all $t=0,\ldots,T-1$. The returned model satisfies $P^t\in\mathcal C_\delta^t$, for all $t=0,\ldots,T-1$, by the stated construction of the planning step. Hence, on \(\mathcal E_\delta\),
both \(P^\star\) and \(P^t\) belong to \(\mathcal C_\delta^t\) for every episode
\(t\). Since \(\mathbb P(\mathcal E_\delta)\ge1-\delta\), the result follows.
\end{proof}

\begin{lemma}[Weissman’s $\ell_1$ concentration]\label{lem:weissman}
Let $X_1,\dots,X_n$ be i.i.d.\ on $[S]\coloneqq\{1,\dots,S\}$ with $\Prob(X_1=i)=p_i$. Define the empirical distribution
$\widehat p_i \coloneqq \tfrac1n\sum_{t=1}^n \mathbf 1\{X_t=i\}$. Then $\Prob\!\bigl(\|\widehat p-p\|_1\ge \eps\bigr)\ \le\ (2^S-2)\,\exp\!\left(-\frac{n\eps^2}{2}\right)$.
\end{lemma}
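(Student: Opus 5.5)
The statement to prove is Weissman's inequality. I would derive it from the variational characterization of the $\ell_1$ distance, followed by a union bound over subsets and Hoeffding's inequality for each subset.

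First, for probability vectors $p,\widehat p$ on $[S]$, I would use
\[
\|\widehat p-p\|_1=2\max_{B\subseteq[S]}\bigl(\widehat p(B)-p(B)\bigr),
\]
where $\widehat p(B)\coloneqq\sum_{i\in B}\widehat p_i$ and $p(B)$ is defined likewise. This holds because the positive and negative parts of $\widehat p-p$ have equal total mass, since both vectors sum to one. The maximum is attained at $B^+=\{i:\widehat p_i>p_i\}$. Moreover, $B=\emptyset$ and $B=[S]$ both give the value $0$. Hence, whenever $\eps>0$, the event $\|\widehat p-p\|_1\ge\eps$ is contained in the union over the $2^S-2$ nontrivial subsets $B$ of the events
\[
\widehat p(B)-p(B)\ge \eps/2.
\]
This accounts for the factor $2^S-2$ in the bound.

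Next, fix a nontrivial $B$. Then $\widehat p(B)=\frac1n\sum_{t=1}^n\mathbf 1\{X_t\in B\}$ is an average of i.i.d.\ Bernoulli$(p(B))$ variables. The one-sided Hoeffding inequality gives
\[
\Prob\bigl(\widehat p(B)-p(B)\ge\eps/2\bigr)\le\exp\bigl(-2n(\eps/2)^2\bigr)=\exp\bigl(-n\eps^2/2\bigr).
\]
Summing this bound over the $2^S-2$ subsets yields the claim.

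The only point requiring care is the exclusion of the trivial subsets, which is what gives $2^S-2$ rather than $2^S$. I would also handle the degenerate case $\eps\le0$ separately: there the bound is trivial whenever $2^S-2\ge1$, i.e.\ $S\ge2$. For $S=1$ the deviation is identically zero, so the probability vanishes for every $\eps>0$.
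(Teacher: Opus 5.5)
Your proof is correct and is essentially the paper's argument: the paper writes $\|x\|_1=\max_{v\in\{-1,+1\}^S\setminus\{\pm\mathbf 1\}}v^\top x$ and applies Hoeffding to $\pm1$-valued variables at deviation $\eps$. Under the bijection $B=\{i:v_i=+1\}$ (so $v^\top x=2x(B)$ when $\sum_i x_i=0$), this is exactly your subset/Bernoulli argument at deviation $\eps/2$, with the same $2^S-2$ count. One small slip: your aside that the bound is trivial for all $\eps\le 0$ is wrong for $\eps<0$, since the right-hand side $(2^S-2)e^{-n\eps^2/2}$ can be far below $1$ and the statement is false there; the lemma should simply be read with $\eps>0$, as the paper implicitly does.
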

\begin{proof}{Proof of Lemma \ref{lem:weissman}}
For any $x\in\mathbb R^S$, $\|x\|_1 \;=\; \max_{v\in\{-1,+1\}^S} v^\top x$. If, in addition, $\sum_i x_i=0$, then the maximizers cannot be $v=\mathbf 1$ or $v=-\mathbf 1$ (since $v^\top x=0$ for those two), where $\mathbf 1$ is an all-one vector. Hence
\begin{equation}\label{eq:l1-hypercube}
\|x\|_1 \;=\; \max_{v\in\mathcal V} v^\top x,
\qquad
\mathcal V \;\coloneqq\; \{-1,+1\}^S\setminus\{\mathbf 1,-\mathbf 1\},
\end{equation}
and consequently $\bigl\{x:\|x\|_1 \ge \eps\bigr\}\ \subseteq\ \bigcup_{v\in\mathcal V}\ \bigl\{x:v^\top x \ge \eps\bigr\}$.

Fix $v\in\mathcal V$ and define $Y_t^{(v)}\coloneqq v_{X_t}\in\{-1,+1\}$. Then $v^\top \widehat p \;=\; \sum_{i=1}^S v_i \widehat p_i
\;=\; \frac1n \sum_{t=1}^n v_{X_t}
\;=\; \frac1n \sum_{t=1}^n Y_t^{(v)}$, and $\mathbb E\,Y_t^{(v)} \;=\; \sum_{i=1}^S p_i v_i \;=\; v^\top p$.
Hence $v^\top(\widehat p - p) \;=\; \frac1n \sum_{t=1}^n \bigl(Y_t^{(v)} - \mathbb E Y_t^{(v)}\bigr)$, a mean of i.i.d.\ centered random variables taking values in $[-1,1]$. By Hoeffding’s inequality,
\begin{align}\label{eq:hoeffding}
\Prob\!\left(v^\top(\widehat p - p) \ge \eps\right) \le \exp\!\Bigl(-\frac{2 n \eps^2}{(1-(-1))^2}\Bigr) = \exp\!\left(-\frac{n\eps^2}{2}\right).
\end{align}

Combining Equations \eqref{eq:l1-hypercube}, the union bound, and Equation \eqref{eq:hoeffding}, $\Prob\!\bigl(\|\widehat p - p\|_1 \ge \eps\bigr)
\;\le\; \sum_{v\in\mathcal V} \Prob\!\bigl(v^\top(\widehat p - p) \ge \eps\bigr)
\;\le\; |\mathcal V| \, \exp\!\left(-\frac{n\eps^2}{2}\right)$. Since $|\mathcal V|=2^S-2$, the stated bound follows.
\end{proof}

\begin{lemma}[Buffered optimistic decomposition]
\label{lem:buffered-optimistic-decomposition}
Fix an episode \(t\). Suppose \(P^\star\in\mathcal C_\delta^t\), and suppose that
\((P^t,\pi^t)\) is returned by the exact buffered optimistic oracle in
Algorithm~\ref{alg:EVI-BQ}. Then $V_{\tau,0}^{\pi^\star,P^\star}(\bar s)
-
V_{\tau,0}^{\pi^t,P^\star}(\bar s)
\le
2\Delta_{\beta_t}
+
\Big(
V_{\tau,0}^{\pi^t,P^t,\beta_t}(\bar s)
-
V_{\tau,0}^{\pi^t,P^\star,\beta_t}(\bar s)
\Big)$.

\end{lemma}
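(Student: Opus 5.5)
The plan is a standard optimism sandwich. We pass from the exact quantile to the buffered quantile under the true model, use the optimism of the oracle, and then pass back. Every conversion between exact and buffered values at the root under \(P^\star\) costs at most \(\Delta_{\beta_t}\), because \(\Delta_{\beta_t}\) is defined as a supremum over all \(\pi\in\Pi_{\mathrm{det}}\). That supremum applies both to \(\pi^\star\) and to the random policy \(\pi^t\).

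\textbf{Step 1.} Since \(\pi^\star\in\Pi_{\mathrm{det}}\), the definition of \(\Delta_{\beta_t}\) gives
\[
V_{\tau,0}^{\pi^\star,P^\star}(\bar s)\le V_{\tau,0}^{\pi^\star,P^\star,\beta_t}(\bar s)+\Delta_{\beta_t}.
\]

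\textbf{Step 2.} This is the optimism step. By hypothesis \(P^\star\in\mathcal C_\delta^t\). Proposition~\ref{prop:evibq} says the oracle output attains \(\max_{P\in\mathcal C_\delta^t}\max_{\pi}V_{\tau,0}^{\pi,P,\beta_t}(\bar s)\). The pair \((P^\star,\pi^\star)\) is feasible for this maximization, so
\[
V_{\tau,0}^{\pi^\star,P^\star,\beta_t}(\bar s)\le V_{\tau,0}^{\pi^t,P^t,\beta_t}(\bar s).
\]

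\textbf{Step 3.} Add and subtract \(V_{\tau,0}^{\pi^t,P^\star,\beta_t}(\bar s)\). Then apply the definition of \(\Delta_{\beta_t}\) again, this time to \(\pi^t\):
\[
-V_{\tau,0}^{\pi^t,P^\star}(\bar s)\le -V_{\tau,0}^{\pi^t,P^\star,\beta_t}(\bar s)+\Delta_{\beta_t}.
\]
Chaining Steps 1--3 gives
\[
V_{\tau,0}^{\pi^\star,P^\star}(\bar s)-V_{\tau,0}^{\pi^t,P^\star}(\bar s)\le 2\Delta_{\beta_t}+\bigl(V_{\tau,0}^{\pi^t,P^t,\beta_t}(\bar s)-V_{\tau,0}^{\pi^t,P^\star,\beta_t}(\bar s)\bigr),
\]
which is the claim.

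The argument is short, so the main points to check are side conditions rather than a real obstacle. First, \(\beta_t\in(0,\tau]\), so all buffered values are well defined through \(\ell_{\beta_t}(\tau)=\beta_t\). Second, \(\pi^t\) is a deterministic history-dependent policy, which holds because it is read off a recursive label. This ensures the supremum defining \(\Delta_{\beta_t}\) covers \(\pi^t\). Third, Proposition~\ref{prop:evibq} requires the maximizers in Algorithm~\ref{alg:EVI-BQ} to exist; this is assumed by the phrase ``exact buffered optimistic oracle.'' No Lipschitz or concentration argument is needed here, since those enter only later when the last bracket is bounded.
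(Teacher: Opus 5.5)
Your proof is correct and is essentially the paper's argument. The paper writes the same telescoping sum as four brackets, bounds the first and fourth by \(\Delta_{\beta_t}\) using its supremum definition over \(\Pi_{\mathrm{det}}\), and shows the middle bracket is nonpositive by optimism: \((P^\star,\pi^\star)\) is feasible because \(P^\star\in\mathcal C_\delta^t\), and Proposition~\ref{prop:evibq} gives optimality of \((P^t,\pi^t)\).
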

\begin{proof}{Proof of Lemma~\ref{lem:buffered-optimistic-decomposition}}
Fix episode \(t\). Add and subtract
\(V_{\tau,0}^{\pi^\star,P^\star,\beta_t}(\bar s)\),
\(V_{\tau,0}^{\pi^t,P^t,\beta_t}(\bar s)\), and
\(V_{\tau,0}^{\pi^t,P^\star,\beta_t}(\bar s)\). This gives 
$V_{\tau,0}^{\pi^\star,P^\star}(\bar s)
-
V_{\tau,0}^{\pi^t,P^\star}(\bar s)
=
\left[
V_{\tau,0}^{\pi^\star,P^\star}(\bar s)
-
V_{\tau,0}^{\pi^\star,P^\star,\beta_t}(\bar s)
\right] +
\left[
V_{\tau,0}^{\pi^\star,P^\star,\beta_t}(\bar s)
-
V_{\tau,0}^{\pi^t,P^t,\beta_t}(\bar s)
\right]
+
\left[
V_{\tau,0}^{\pi^t,P^t,\beta_t}(\bar s)
-
V_{\tau,0}^{\pi^t,P^\star,\beta_t}(\bar s)
\right]
+
\left[
V_{\tau,0}^{\pi^t,P^\star,\beta_t}(\bar s)
-
V_{\tau,0}^{\pi^t,P^\star}(\bar s)
\right]$. By the definition of \(\Delta_{\beta_t}\), the first and fourth bracketed terms
are each bounded above by \(\Delta_{\beta_t}\). Thus, $V_{\tau,0}^{\pi^\star,P^\star}(\bar s)
-
V_{\tau,0}^{\pi^t,P^\star}(\bar s)
\le
2\Delta_{\beta_t}
+
\left[
V_{\tau,0}^{\pi^\star,P^\star,\beta_t}(\bar s)
-
V_{\tau,0}^{\pi^t,P^t,\beta_t}(\bar s)
\right]
+
\left[
V_{\tau,0}^{\pi^t,P^t,\beta_t}(\bar s)
-
V_{\tau,0}^{\pi^t,P^\star,\beta_t}(\bar s)
\right]$. It remains to control the middle bracket. Since
\(P^\star\in\mathcal C_\delta^t\), the pair \((P^\star,\pi^\star)\) is feasible for the exact buffered optimistic planning problem. Therefore, $V_{\tau,0}^{\pi^\star,P^\star,\beta_t}(\bar s)
\le
\max_{P\in\mathcal C_\delta^t}
\max_{\pi\in\Pi_{\mathrm{det}}}
V_{\tau,0}^{\pi,P,\beta_t}(\bar s)$. By Proposition~\ref{prop:evibq}, Algorithm~\ref{alg:EVI-BQ} returns a pair \((P^t,\pi^t)\) satisfying $V_{\tau,0}^{\pi^t,P^t,\beta_t}(\bar s)
=\max_{P\in\mathcal C_\delta^t}
\max_{\pi\in\Pi_{\mathrm{det}}}
V_{\tau,0}^{\pi,P,\beta_t}(\bar s)$. Hence $V_{\tau,0}^{\pi^\star,P^\star,\beta_t}(\bar s)
-V_{\tau,0}^{\pi^t,P^t,\beta_t}(\bar s)
\le 0$. Substituting this into the previous display proves the lemma.
\end{proof}

\begin{lemma}[Buffered quantile sensitivity]
\label{lem:buffered-quantile-w1}
Let \(\mu\) and \(\nu\) be probability measures supported on \([0,H]\). For
\(u\in(0,1]\), $\left|
Q_q^\beta(\mu)-Q_q^\beta(\nu)
\right|
\le
\frac{1}{\ell_\beta(q)}W_1(\mu,\nu)$.
\end{lemma}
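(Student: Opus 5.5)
The plan is to write both buffered quantiles as averages of ordinary quantile functions and use the quantile-function representation of $W_1$ on the real line. Fix $q\in(0,1]$ (the statement's ``$u\in(0,1]$'' should read $q$) and write $\ell=\ell_\beta(q)=\min\{\beta,q\}>0$. By Definition~\ref{def:buffered-quantile},
\[
Q_q^\beta(\mu)-Q_q^\beta(\nu)=\frac{1}{\ell}\int_{q-\ell}^{q}\bigl(Q_u(\mu)-Q_u(\nu)\bigr)\,du .
\]
Note that $q-\ell\ge 0$, so the integration interval lies inside $[0,1]$. Moving the absolute value inside the integral gives
\[
\bigl|Q_q^\beta(\mu)-Q_q^\beta(\nu)\bigr|\le\frac{1}{\ell}\int_{q-\ell}^{q}\bigl|Q_u(\mu)-Q_u(\nu)\bigr|\,du .
\]

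The key step is then to enlarge the domain of integration to $(0,1)$, which is legitimate because the integrand is nonnegative. We then invoke the classical one-dimensional identity
\[
W_1(\mu,\nu)=\int_0^1\bigl|Q_u(\mu)-Q_u(\nu)\bigr|\,du ,
\]
valid for probability measures on $\mathbb R$ with finite first moment. I will cite this as a standard fact. If a self-contained justification is wanted, I would sketch it in two steps: the comonotone coupling $(Q_U(\mu),Q_U(\nu))$ with $U\sim\mathrm{Unif}(0,1)$ is optimal for convex costs on the line, or equivalently one can use $\int_0^1|F_\mu^{-1}-F_\nu^{-1}|\,du=\int_{\mathbb R}|F_\mu-F_\nu|\,dx$, which follows from Fubini applied to the region between the two graphs. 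Both measures are supported on $[0,H]$, so every moment is finite and the quantile functions are bounded on $(0,1)$. Hence all integrals are finite and Fubini applies. Combining the two displays yields the claim.

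The remaining details are minor. The left-continuous quantile may differ from other versions only on a countable set of levels, which has Lebesgue measure zero, so the choice of version does not affect the integrals. The endpoint conventions in Definition~\ref{def:buffered-quantile} only concern $q=0$ or the essential supremum at level $1$, and these are excluded or also of measure zero. The only real obstacle is justifying the quantile representation of $W_1$, and I expect to cite it directly, for example from Villani's monograph or any standard optimal-transport reference.
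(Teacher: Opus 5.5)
Your proposal is correct and follows the paper's proof essentially step for step: express $Q_q^\beta(\mu)-Q_q^\beta(\nu)$ as $\frac{1}{\ell_\beta(q)}\int_{q-\ell_\beta(q)}^{q}\bigl(Q_u(\mu)-Q_u(\nu)\bigr)\,du$, move the absolute value inside, enlarge the interval to $[0,1]$, and invoke the quantile representation of $W_1$ (the paper's Lemma~\ref{lem:w1-quantile-representation}). Only the direction $\int_0^1|Q_u(\mu)-Q_u(\nu)|\,du\le W_1(\mu,\nu)$, i.e.\ optimality of the comonotone coupling, is actually used here.
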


\begin{proof}{Proof of Lemma~\ref{lem:buffered-quantile-w1}}
Because \(\mu\) and \(\nu\) are supported on the bounded interval \([0,H]\), their
quantile functions are bounded and therefore integrable on \((0,1)\). Lemma~\ref{lem:w1-quantile-representation} gives $W_1(\mu,\nu)
=
\int_0^1
\left|
Q_u(\mu)-Q_u(\nu)
\right|\,du$. This identity holds for probability measures on the real line with finite first
moments, and the finite first-moment condition is automatic here because the
support is contained in \([0,H]\). By the definition of the lower-buffered quantile (Definition~\ref{def:buffered-quantile}), $Q_q^\beta(\mu)-Q_q^\beta(\nu)
 =
\frac{1}{\ell_\beta(q)}
\int_{q-\ell_\beta(q)}^{q}
Q_u(\mu)\,du
- \frac{1}{\ell_\beta(q)}
\int_{q-\ell_\beta(q)}^{q}
Q_u(\nu)\,du
 = \frac{1}{\ell_\beta(q)}
\int_{q-\ell_\beta(q)}^{q}
\Big(Q_u(\mu)-Q_u(\nu)\Big)\,du$.
Taking absolute values and applying the triangle inequality for integrals yields $\left|
Q_q^\beta(\mu)-Q_q^\beta(\nu)
\right|
\le
\frac{1}{\ell_\beta(q)}
\int_{q-\ell_\beta(q)}^{q}
\left|
Q_u(\mu)-Q_u(\nu)
\right|\,du$. The interval \([q-\ell_\beta(q),q]\) is a subinterval of \([0,1]\). Therefore $\int_{q-\ell_\beta(q)}^{q}
\left|
Q_u(\mu)-Q_u(\nu)
\right|\,du
\le
\int_0^1
\left|
Q_u(\mu)-Q_u(\nu)
\right|\,du
=
W_1(\mu,\nu)$. Substituting this bound into the preceding display proves $\left|
Q_q^\beta(\mu)-Q_q^\beta(\nu)
\right|
\le
\frac{1}{\ell_\beta(q)}W_1(\mu,\nu)$. 
\end{proof}

\begin{lemma}[Quantile representation of \(W_1\) on the real line]
\label{lem:w1-quantile-representation}
Let \(\mu\) and \(\nu\) be probability measures on \(\mathbb R\) with finite first
moments. Let $F_\mu(x)\coloneqq \mu((-\infty,x])$, and $F_\nu(x)\coloneqq \nu((-\infty,x])$ be their CDFs, and define their left-continuous generalized inverses by $F_\mu^{-1}(u)\coloneqq \inf\{x\in\mathbb R:F_\mu(x)\ge u\}$, $F_\nu^{-1}(u)\coloneqq \inf\{x\in\mathbb R:F_\nu(x)\ge u\}$, for all $u\in(0,1)$. Equivalently, \(F_\mu^{-1}(u)=Q_u(\mu)\) and
\(F_\nu^{-1}(u)=Q_u(\nu)\). Then $W_1(\mu,\nu)
=
\int_0^1
\left|
F_\mu^{-1}(u)-F_\nu^{-1}(u)
\right|\,du
=
\int_0^1
\left|
Q_u(\mu)-Q_u(\nu)
\right|\,du$.
\end{lemma}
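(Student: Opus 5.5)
The plan is the classical coupling/quantile-transform argument. It shows that $\int_0^1|F_\mu^{-1}(u)-F_\nu^{-1}(u)|\,du$ is both an upper bound and a lower bound for $W_1(\mu,\nu)$. Throughout, I use the coupling definition
\[
W_1(\mu,\nu)=\inf_{\gamma}\int|x-y|\,d\gamma(x,y),
\]
with the infimum taken over all couplings $\gamma$ of $\mu$ and $\nu$.

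\textbf{Upper bound.} Let $U$ be uniform on $(0,1)$ and set $X=F_\mu^{-1}(U)$ and $Y=F_\nu^{-1}(U)$. The first step is the Galois-type equivalence
\[
F_\mu^{-1}(u)\le x \iff u\le F_\mu(x),
\]
which follows from right-continuity of $F_\mu$ and the infimum definition. It gives $\Prob(X\le x)=\Prob(U\le F_\mu(x))=F_\mu(x)$, so $X\sim\mu$, and likewise $Y\sim\nu$. Hence $(X,Y)$ is a coupling, and therefore
\[
W_1(\mu,\nu)\le \mathbb E|X-Y|=\int_0^1|F_\mu^{-1}(u)-F_\nu^{-1}(u)|\,du .
\]
This integral is finite because the finite first moments give $\int_0^1|F_\mu^{-1}|\,du=\int|x|\,d\mu<\infty$, by the same change of variables.

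\textbf{Lower bound.} The cleanest route is an intermediate CDF identity,
\[
\int_0^1|F_\mu^{-1}(u)-F_\nu^{-1}(u)|\,du=\int_{\mathbb R}|F_\mu(x)-F_\nu(x)|\,dx .
\]
Both sides equal the Lebesgue measure of the region in $(0,1)\times\mathbb R$ lying between the two graphs. Indeed, by the Galois equivalence, a point $(u,x)$ lies between $F_\mu^{-1}(u)$ and $F_\nu^{-1}(u)$ exactly when $u$ lies between $F_\mu(x)$ and $F_\nu(x)$, up to boundary sets of measure zero. Tonelli's theorem then gives the identity.

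Next, for any coupling $\gamma$ with $(X,Y)\sim\gamma$, write
\[
|X-Y|=\int_{\mathbb R}\bigl|\mathbf 1\{X\le x\}-\mathbf 1\{Y\le x\}\bigr|\,dx .
\]
Taking expectations and using Tonelli together with Jensen's inequality $\mathbb E|Z|\ge|\mathbb E Z|$ gives
\[
\mathbb E|X-Y|\ge\int_{\mathbb R}|F_\mu(x)-F_\nu(x)|\,dx .
\]
Taking the infimum over couplings yields $W_1(\mu,\nu)\ge\int_{\mathbb R}|F_\mu-F_\nu|\,dx$, which equals the quantile integral by the identity above. Combining the two bounds proves the lemma.

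\textbf{Main obstacle.} I expect the delicate step to be the measure-theoretic verification of the region identity. The generalized inverse can have jumps and flat pieces, so one must check that the sets on which the strict and non-strict inequalities disagree are null. I would handle this by showing the discrepancy set is contained in the union of graphs of monotone functions, which are null in the plane. Alternatively, I would sidestep the issue by citing the standard result (e.g.\ Villani, or Bobkov--Ledoux), since the paper only needs the identity as a tool.
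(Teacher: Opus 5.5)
Your proof is correct. Its upper-bound half, the quantile coupling $X=F_\mu^{-1}(U)$, $Y=F_\nu^{-1}(U)$, is the same as the paper's, and you additionally justify the inverse-transform step.

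The routes differ in the reverse inequality. The paper asserts, with a pointer to Villani, that the comonotone coupling generated by a common uniform $U$ is optimal for the cost $|x-y|$. That assertion is essentially the whole content of the nontrivial direction. You instead prove it: you pass through $\int_{\mathbb R}|F_\mu-F_\nu|\,dx$ and use the layer-cake identity $|X-Y|=\int_{\mathbb R}|\mathbf 1\{X\le x\}-\mathbf 1\{Y\le x\}|\,dx$ together with $\mathbb E|Z|\ge|\mathbb E Z|$.

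The paper's version is shorter but not self-contained. Yours is complete and gives the CDF representation $W_1(\mu,\nu)=\int_{\mathbb R}|F_\mu-F_\nu|\,dx$ as a by-product. The inequality you establish, $\int_0^1|Q_u(\mu)-Q_u(\nu)|\,du\le \mathbb E|X-Y|$ for every coupling, is exactly the direction Lemma~\ref{lem:buffered-quantile-w1} needs downstream.

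The step you flag as the main obstacle is not delicate. Your Galois equivalence $F^{-1}(u)\le x\iff u\le F(x)$ holds for every $u\in(0,1)$ and every $x\in\mathbb R$. Hence the integrands $|\mathbf 1\{F_\mu^{-1}(u)\le x\}-\mathbf 1\{F_\nu^{-1}(u)\le x\}|$ and $|\mathbf 1\{u\le F_\mu(x)\}-\mathbf 1\{u\le F_\nu(x)\}|$ coincide pointwise on $(0,1)\times\mathbb R$. Integrating the first in $x$ gives $|F_\mu^{-1}(u)-F_\nu^{-1}(u)|$, and integrating the second in $u$ gives $|F_\mu(x)-F_\nu(x)|$. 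Tonelli then gives the identity directly, with no boundary or null sets to discard.
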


\begin{proof}{Proof of Lemma~\ref{lem:w1-quantile-representation}}
We use the standard primal definition of the \(1\)-Wasserstein distance on the
real line. For probability measures \(\mu\) and \(\nu\) with finite first moments, $W_1(\mu,\nu)
\coloneqq
\inf_{\gamma\in\Pi(\mu,\nu)}
\int_{\mathbb R\times\mathbb R}|x-y|\,d\gamma(x,y)$,
where \(\Pi(\mu,\nu)\) is the set of all couplings of \(\mu\) and \(\nu\), that is,
all probability measures on \(\mathbb R\times\mathbb R\) whose first marginal is
\(\mu\) and whose second marginal is \(\nu\). The quantile representation below
is a standard one-dimensional characterization of \(W_1\); see, for example,
\cite[Chapter 2]{villani2009ot}. For completeness, we give the argument.

Let \(U\sim\mathrm{Unif}(0,1)\), and define $X\coloneqq F_\mu^{-1}(U)$, and $Y\coloneqq F_\nu^{-1}(U)$. By the generalized inverse transform, \(X\sim\mu\) and \(Y\sim\nu\). Therefore, the joint law of \((X,Y)\) is a coupling of \(\mu\) and \(\nu\). Hence, by the
definition of \(W_1\), $W_1(\mu,\nu)
\le
\mathbb E|X-Y|
=
\int_0^1
\left|
F_\mu^{-1}(u)-F_\nu^{-1}(u)
\right|\,du$. Conversely, in one dimension, the monotone coupling generated by the common
uniform random variable \(U\) is optimal for the absolute-value transport cost
\(|x-y|\). Therefore,
$W_1(\mu,\nu)
=
\mathbb E\left|
F_\mu^{-1}(U)-F_\nu^{-1}(U)
\right|
=
\int_0^1
\left|
F_\mu^{-1}(u)-F_\nu^{-1}(u)
\right|\,du$. Finally, since the left-continuous quantile operator satisfies $Q_u(\mu)=F_\mu^{-1}(u)$, and $Q_u(\nu)=F_\nu^{-1}(u)$, we obtain $W_1(\mu,\nu)
=
\int_0^1
\left|
Q_u(\mu)-Q_u(\nu)
\right|\,du$. This proves the claim.
\end{proof}

\begin{lemma}[Return-law perturbation for a fixed induced policy]
\label{lem:return-law-w1}
Fix an induced policy \(\pi=\pi[\mu,\Gamma]\), initialized at \(q_0=\tau\), and
fix two transition kernels \(P\) and \(\bar P\). Let
\(G_{0,\bar s}^{\pi,P}\) and \(G_{0,\bar s}^{\pi,\bar P}\) be the
corresponding root return laws. Then $W_1\!\left(
G_{0,\bar s}^{\pi,P},
G_{0,\bar s}^{\pi,\bar P}
\right)
\le
\sum_{h=0}^{H-1}(H-h-1)\,
\mathbb E_{\pi,\bar P}
\left[
\mathrm{TV}\!\left(
P_h(\cdot\mid \bar S_h, \bar A_h),
\bar P_h(\cdot\mid \bar S_h,\bar A_h)
\right)
\right]$, where, for two probability vectors \(p,p'\in\Delta^S\), $\mathrm{TV}(p,p')\coloneqq \frac12\|p-p'\|_1$.

\end{lemma}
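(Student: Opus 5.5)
We prove the bound by coupling the two return processes stage by stage and using the Kantorovich--Rubinstein (dual) form of $W_1$, namely $W_1(\mu,\nu)=\sup_{\mathrm{Lip}(f)\le1}\big(\int f\,d\mu-\int f\,d\nu\big)$. This is consistent with Lemma~\ref{lem:w1-quantile-representation} on the real line. Fix a $1$-Lipschitz $f:\mathbb R\to\mathbb R$. Since the policy $\pi$ is history dependent, we work with histories rather than states. Define, for a kernel $K$ and a history $\eta_h$ ending in state $s$ with accumulated reward $g_h=\sum_{k<h}r_k$, the continuation value
\[
U_h^{K}(\eta_h)\coloneqq \mathbb E_{\pi,K}\Big[f\Big(g_h+\sum_{k=h}^{H-1}r_k(S_k,A_k)\Big)\,\Big|\,\eta_h\Big].
\]
Then $\int f\,dG_{0,\bar s}^{\pi,K}=U_0^K(\bar s)$, since $g_0=0$. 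The terminal values agree: $U_H^P=U_H^{\bar P}=f(g_H)$.

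\textbf{Telescoping (simulation lemma).} We use the standard hybrid argument: follow $\bar P$ for the first $h$ transitions and $P$ afterwards. This gives
\[
U_0^P(\bar s)-U_0^{\bar P}(\bar s)=\sum_{h=0}^{H-1}\mathbb E_{\pi,\bar P}\Big[\sum_{s'}\big(P_h(s'\mid \bar S_h,\bar A_h)-\bar P_h(s'\mid \bar S_h,\bar A_h)\big)\,U_{h+1}^P(\bar\eta_h,\bar A_h,s')\Big].
\]
Each term is the difference of two expectations of the same function $s'\mapsto U_{h+1}^P(\bar\eta_h,\bar A_h,s')$ under two distributions on $\mathcal S$. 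It is therefore bounded by $\mathrm{TV}(P_h,\bar P_h)$ times the oscillation of that function over $s'$.

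\textbf{Oscillation bound (main step).} For fixed $(\bar\eta_h,\bar A_h)$, the accumulated reward $g_{h+1}=g_h+r_h(\bar S_h,\bar A_h)$ does not depend on $s'$. The remaining reward $\sum_{k=h+1}^{H-1}r_k$ lies in $[0,H-h-1]$, because each stage reward lies in $[0,1]$. Since $f$ is $1$-Lipschitz, $U_{h+1}^P(\cdot,s')$ takes values in an interval $[f\text{-range over }g_{h+1}+[0,H-h-1]]$ of length at most $H-h-1$. So for any two next states the values differ by at most $w_h=H-h-1$. Hence
\[
\Big|\sum_{s'}(P_h-\bar P_h)(s')\,U_{h+1}^P(\cdot,s')\Big|\le (H-h-1)\,\mathrm{TV}(P_h,\bar P_h).
\]
Here we use $\sum_{s'}(p-p')(s')=0$: subtract a midpoint constant $m$ and bound $\sum_{s'}|p-p'|\cdot|U-m|\le \|p-p'\|_1\cdot\frac{H-h-1}{2}$. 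This constant-centering step is the only delicate point. It is what produces the factor $(H-h-1)$ (not $H-h$) and the TV normalization $\tfrac12\|\cdot\|_1$. It works because the stage-$h$ reward is collected before the transition, so it does not vary with $s'$.

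\textbf{Conclusion.} Summing over $h$ and taking the supremum over $1$-Lipschitz $f$ gives the stated bound on $W_1(G_{0,\bar s}^{\pi,P},G_{0,\bar s}^{\pi,\bar P})$. The expectation is under $(\pi,\bar P)$, as required. The induced representation $\pi[\mu,\Gamma]$ plays no special role, because any deterministic history-dependent policy is handled by the history-indexed values above.
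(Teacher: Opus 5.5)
Your argument is correct, but it works on the dual side, whereas the paper's proof is a primal coupling argument.

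\textbf{The paper's route.} It runs the $P$- and $\bar P$-trajectories jointly, keeping them together by a stage-wise maximal coupling until they first disagree. On the first-mismatch event $\mathcal M_h$, the rewards through stage $h$ coincide, so the two returns differ by at most $H-h-1$. It then concludes from $W_1\le\mathbb E|R-\bar R|\le\sum_h(H-h-1)\,\mathbb P(\mathcal M_h)$ together with $\mathbb P(\mathcal M_h)\le\mathbb E_{\pi,\bar P}[\mathrm{TV}(\cdot)]$.

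\textbf{Your route.} You replace the joint process by three ingredients: Kantorovich--Rubinstein duality, an exact hybrid telescoping identity, and an oscillation bound on $s'\mapsto U_{h+1}^P(\bar\eta_h,\bar A_h,s')$. Your observation that $r_h(\bar S_h,\bar A_h)$ is collected before the transition is the exact counterpart of the paper's remark that rewards agree through stage $h$ on $\mathcal M_h$. Your midpoint centering is the dual counterpart of a one-stage maximal coupling; both give $|\mathbb E_p g-\mathbb E_{p'}g|\le\mathrm{TV}(p,p')\,\mathrm{osc}(g)$.

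\textbf{What the coupling buys and costs.} It needs only the definition of $W_1$ as an infimum over couplings, and it gives a pathwise statement. However, it requires a genuine joint process with the correct marginals. In particular, it needs a rule for continuing the trajectories after the first mismatch, which the paper leaves implicit. It also phrases the synchronization through the internal levels of $\pi[\mu,\Gamma]$.

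\textbf{What your approach buys and costs.} It needs no joint construction and applies verbatim to any deterministic history-dependent policy. That generality is actually what the regret proof uses, since $\pi^t$ is produced by \EVIBQ\ as a recursive label rather than a scalar $(\mu,\Gamma)$ representation. The price is that you rely on the nontrivial direction $W_1(\mu,\nu)\le\sup_{\mathrm{Lip}(f)\le1}\bigl(\int f\,d\mu-\int f\,d\nu\bigr)$ of the duality. On $[0,H]$ this is elementary: choosing $f'=\mathrm{sign}(F_\nu-F_\mu)$ gives $\int f\,d\mu-\int f\,d\nu=\int|F_\mu-F_\nu|\,dx=W_1(\mu,\nu)$. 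It is worth including that line or a citation.
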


\begin{proof}{Proof of Lemma~\ref{lem:return-law-w1}}
We couple two trajectories. The trajectory
\((\bar S_h,\bar A_h,\bar q_h)_{h=0}^{H-1}\) evolves under \(\bar P\), and the
trajectory \((S_h,A_h,q_h)_{h=0}^{H-1}\) evolves under \(P\). Both start from $S_0=\bar S_0=\bar s$, and $q_0=\bar q_0=\tau$. As long as the two physical histories are identical, the two internal quantile levels are also identical, because both are obtained by applying the same
deterministic update maps \(\Gamma\) to the same observed history. Therefore, as long as no mismatch has occurred, $S_h=\bar S_h$, $q_h=\bar q_h$,and $A_h=\bar A_h$. At stage \(h\), conditional on no previous mismatch and on the common state--action pair \((\bar S_h,\bar A_h)\), define $p_i\coloneqq P_h(s_i\mid \bar S_h,\bar A_h)$, $
\bar p_i\coloneqq \bar P_h(s_i\mid \bar S_h,\bar A_h)$,
for all $i=1,\ldots,S$. We couple \(S_{h+1}\sim p\) and \(\bar S_{h+1}\sim \bar p\) by a maximal coupling. To construct this coupling explicitly, define the common mass $m_i\coloneqq \min\{p_i,\bar p_i\}$, and $m\coloneqq \sum_{i=1}^{S}m_i$. The quantity \(m_i\) is the amount of probability mass that both distributions
assign to the same next state \(s_i\). Hence, if the two coupled variables are
to be equal and take value \(s_i\), the probability of this event cannot exceed
either marginal probability \(p_i\) or \(\bar p_i\). Therefore, for any coupling, $\mathbb P(S_{h+1}=\bar S_{h+1}=s_i) \le \min\{p_i,\bar p_i\} = m_i$. Summing over \(i=1,\ldots,S\), every coupling must satisfy $\mathbb P(S_{h+1}=\bar S_{h+1})=\sum_{i=1}^{S}\mathbb P(S_{h+1}=\bar S_{h+1}=s_i)
\le
\sum_{i=1}^{S}m_i
=
m$. Thus, no coupling can make the two next states equal with probability larger
than \(m\).

The maximal coupling attains this upper bound. With probability \(m\), draw a common next state \(s_i\) with probability \(m_i/m\), and set $S_{h+1}=\bar S_{h+1}=s_i$. With the remaining probability \(1-m\), draw \(S_{h+1}\) and \(\bar S_{h+1}\) from the residual distributions $\frac{p_i-m_i}{1-m}$, and $\frac{\bar p_i-m_i}{1-m}$, respectively. These residual distributions have disjoint supports: for each state \(s_i\), at most one of \(p_i-m_i\) and \(\bar p_i-m_i\) is positive. Consequently, on the residual event, the two next states are different almost surely.

This construction has the correct marginals. Indeed, for each \(i\), $\mathbb P(S_{h+1}=s_i)=m_i+(p_i-m_i)=p_i$, and $\mathbb P(\bar S_{h+1}=s_i)=m_i+(\bar p_i-m_i)=\bar p_i$. Moreover, it attains the largest possible equality probability: $\mathbb P\!\left(S_{h+1}=\bar S_{h+1}\,\middle|\,\text{no previous mismatch},\bar S_h,\bar A_h\right)=m=\sum_{i=1}^{S}\min\{p_i,\bar p_i\}$. Therefore, $\mathbb P\!\left(S_{h+1}\neq \bar S_{h+1}\,\middle|\,\text{no previous mismatch},\bar S_h,\bar A_h\right) =1-\sum_{i=1}^{S}\min\{p_i,\bar p_i\}$. Finally, using the identity $|p_i-\bar p_i|=p_i+\bar p_i-2\min\{p_i,\bar p_i\}$, and summing over \(i=1,\ldots,S\), we obtain $\frac12\sum_{i=1}^{S}|p_i-\bar p_i|=\frac12\left(\sum_{i=1}^{S}p_i+\sum_{i=1}^{S}\bar p_i-2\sum_{i=1}^{S}\min\{p_i,\bar p_i\}\right)$. Since \(p\) and \(\bar p\) are probability vectors, this becomes $\frac12\sum_{i=1}^{S}|p_i-\bar p_i|=1-\sum_{i=1}^{S}\min\{p_i,\bar p_i\}$. Hence, $1-\sum_{i=1}^{S}\min\{p_i,\bar p_i\}=\mathrm{TV}(p,\bar p)$, we obtain $\mathbb P\!\left(S_{h+1}\neq \bar S_{h+1}\,\middle|\,\text{no previous mismatch},\bar S_h,\bar A_h\right)=\mathrm{TV}(p,\bar p)$. Equivalently, $\mathbb P\!\left(S_{h+1}\neq \bar S_{h+1}\,\middle|\,\text{no previous mismatch},\bar S_h,\bar A_h \right)=\mathrm{TV}\!\left(P_h(\cdot\mid \bar S_h,\bar A_h),\bar P_h(\cdot\mid \bar S_h,\bar A_h)\right)$.

Let \(\mathcal M_h\) be the event that the first mismatch occurs in the transition from stage \(h\) to stage \(h+1\). These events are mutually exclusive, because at most one transition can be the first transition at which the two coupled trajectories separate. If no mismatch ever occurs, then the two trajectories are identical at every stage, and the two cumulative returns are equal. Now suppose that \(\mathcal M_h\) occurs. By the definition of the first mismatch, the two trajectories are identical up to and including stage \(h\). In particular, $S_k=\bar S_k$,\qquad $q_k=\bar q_k$,and $A_k=\bar A_k$, for all $k=0,\ldots,h$. Since the reward is deterministic and depends only on the state--action pair, we have $r_k(S_k,A_k)=r_k(\bar S_k,\bar A_k)$, for all $k=0,\ldots,h$. Thus, the cumulative rewards can differ only from stages \(h+1\) through \(H-1\). Therefore, on \(\mathcal M_h\), $\left|\sum_{k=0}^{H-1} r_k(S_k,A_k) - \sum_{k=0}^{H-1} r_k(\bar S_k,\bar A_k)\right|=\left|\sum_{k=h+1}^{H-1}\Big(r_k(S_k,A_k)-r_k(\bar S_k,\bar A_k)\Big)\right| \le \sum_{k=h+1}^{H-1}\left|r_k(S_k,A_k)-r_k(\bar S_k,\bar A_k) \right|$. Because each reward lies in \([0,1]\), each absolute difference in the last sum is at most \(1\). There are exactly \(H-h-1\) terms in the sum \(k=h+1,\ldots,H-1\). Hence, on \(\mathcal M_h\), $\left|\sum_{k=0}^{H-1} r_k(S_k,A_k)-\sum_{k=0}^{H-1} r_k(\bar S_k,\bar A_k)\right|\le H-h-1$. Combining the cases over the mutually exclusive events \(\mathcal M_0,\ldots,\mathcal M_{H-1}\), and noting that the left-hand side is zero if no mismatch occurs, we obtain the path-wise bound $\left|\sum_{k=0}^{H-1} r_k(S_k,A_k)-\sum_{k=0}^{H-1} r_k(\bar S_k,\bar A_k)\right|\le \sum_{h=0}^{H-1}(H-h-1)\mathbf 1\{\mathcal M_h\}$. Taking expectations gives $W_1\!\left(G_{0,\bar s}^{\pi,P},G_{0,\bar s}^{\pi,\bar P} \right) \le \sum_{h=0}^{H-1}(H-h-1)\mathbb P(\mathcal M_h)$.

It remains to upper-bound \(\mathbb P(\mathcal M_h)\). By the maximal coupling construction, $\mathbb P(\mathcal M_h) \le \mathbb E_{\pi,\bar P} \left[\mathrm{TV}\!\left(P_h(\cdot\mid \bar S_h, \bar A_h), \bar P_h(\cdot\mid \bar S_h,\bar A_h) \right) \right]$. The expectation on the right is under the \(\bar P\)-trajectory because, before the first mismatch, the common trajectory has the same distribution as the trajectory generated by \(\pi\) under \(\bar P\). Substituting this bound into the preceding display proves the lemma.
\end{proof}

\begin{lemma}[Total-variation consequence of the confidence set]
\label{lem:tv-confidence-consequence}
Fix an episode \(t\). Suppose that \(P^\star\in\mathcal C_\delta^t\) and \(P^t\in\mathcal C_\delta^t\). Then, for every \(h\in\{0,\ldots,H-1\}\) and \((s,a)\in\mathcal S\times\mathcal A\), $\mathrm{TV}\!\left(P_h^t(\cdot\mid s,a), P_h^\star(\cdot\mid s,a) \right) \le f_\delta\!\big(N_h^t(s,a)\big)$.
\end{lemma}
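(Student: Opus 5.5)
The plan is a one-line triangle-inequality argument through the empirical kernel \(\widehat P^t\), which both models are close to by the definition of the confidence set in Equation~\eqref{eq:confset}. Fix \(h\in\{0,\ldots,H-1\}\) and \((s,a)\in\mathcal S\times\mathcal A\), and abbreviate \(p^t\coloneqq P_h^t(\cdot\mid s,a)\), \(p^\star\coloneqq P_h^\star(\cdot\mid s,a)\), \(\widehat p\coloneqq \widehat P_h^t(\cdot\mid s,a)\), and \(n\coloneqq N_h^t(s,a)\).

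First I will unpack the hypotheses. Since \(P^t\in\mathcal C_\delta^t\), the defining constraint of \(\mathcal C_\delta^t\) at the triple \((h,s,a)\) gives \(\|p^t-\widehat p\|_1\le f_\delta(n)\). Since \(P^\star\in\mathcal C_\delta^t\), the same constraint gives \(\|p^\star-\widehat p\|_1\le f_\delta(n)\). The constraint is imposed for every \((h,s,a)\) with the same empirical kernel and the same counts. So no further conditioning or probabilistic argument is needed; the probabilistic content has already been absorbed into the assumption, which is guaranteed on the event of Lemma~\ref{lem:confidence-set-containment}.

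Next I will apply the triangle inequality for the \(\ell_1\)-norm:
\[
\|p^t-p^\star\|_1\le \|p^t-\widehat p\|_1+\|\widehat p-p^\star\|_1\le 2f_\delta(n).
\]
Dividing by two and using the definition \(\mathrm{TV}(p,p')=\tfrac12\|p-p'\|_1\) from Lemma~\ref{lem:return-law-w1} yields \(\mathrm{TV}(p^t,p^\star)\le f_\delta(n)\). This is exactly the claim.

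The only point needing care is the unvisited case \(n=0\). There \(\widehat p\) is either the uniform initialization or the empirical vector with \(\max\{1,n\}\) in the denominator, and it might not be a probability vector. The argument above does not need it to be one, because the triangle inequality holds for arbitrary vectors. In any case, \(f_\delta(0)=c\sqrt{L_\delta}\ge 2>1\ge \mathrm{TV}(p^t,p^\star)\) by the choice of \(c\), so this case is trivial anyway. I do not expect a genuine obstacle. The main thing to get right is the factor of two, so that the \(\ell_1\)-radius \(2f_\delta\) converts into the TV bound \(f_\delta\) with the normalization convention used in Lemma~\ref{lem:return-law-w1}.
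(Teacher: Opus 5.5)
Your proof is correct and follows the paper's argument exactly: the $\ell_1$ triangle inequality through $\widehat P_h^t(\cdot\mid s,a)$ gives the bound $2f_\delta\big(N_h^t(s,a)\big)$, and the factor $\tfrac12$ in the TV normalization converts this to $f_\delta\big(N_h^t(s,a)\big)$. Your side remark on the $n=0$ case is accurate. For $t\ge1$ the empirical row is then the zero vector, not a probability vector. The paper does not mention this, and as you say it is harmless, because the triangle inequality holds for arbitrary vectors.
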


\begin{proof}{Proof of Lemma~\ref{lem:tv-confidence-consequence}}
Fix \(h,s,a\). Since \(P^\star\in\mathcal C_\delta^t\), the definition of the confidence set gives $\left\|P_h^\star(\cdot\mid s,a)-\widehat P_h^t(\cdot\mid s,a) \right\|_1 \le f_\delta\!\big(N_h^t(s,a)\big)$. Similarly, since \(P^t\in\mathcal C_\delta^t\), we also have $\left\|
P_h^t(\cdot\mid s,a) - \widehat P_h^t(\cdot\mid s,a) \right\|_1 \le f_\delta\!\big(N_h^t(s,a)\big)$. Applying the triangle inequality in \(\ell_1\) gives $\left\|P_h^t(\cdot\mid s,a)-P_h^\star(\cdot\mid s,a)\right\|_1\le \left\|P_h^t(\cdot\mid s,a)-\widehat P_h^t(\cdot\mid s,a)\right\|_1  +\left\|\widehat P_h^t(\cdot\mid s,a)-P_h^\star(\cdot\mid s,a)\right\|_1 \le 2f_\delta\!\big(N_h^t(s,a)\big)$. By the definition of total variation for finite probability vectors, $\mathrm{TV}\!\big(P_h^t(\cdot\mid s,a),P_h^\star(\cdot\mid s,a)\big)=\frac12\big\|P_h^t(\cdot\mid s,a) - P_h^\star(\cdot\mid s,a) \big\|_1$. Substituting the previous \(\ell_1\)-bound yields $\mathrm{TV}\!\left(P_h^t(\cdot\mid s,a), P_h^\star(\cdot\mid s,a) \right) \le f_\delta\!\big(N_h^t(s,a)\big)$. Because \(h,s,a\) were arbitrary, the result holds for all triples.
\end{proof}

\begin{lemma}[Weighted stage-wise counting bound]
\label{lem:weighted-stage-wise-counting}
Let \(w_h\coloneqq H-h-1\), \(\Lambda_H\coloneqq \sum_{h=0}^{H-1}w_h=\frac{H(H-1)}{2}. \) For any realized sequence generated by the algorithm, $\sum_{t=0}^{T-1}\sum_{h=0}^{H-1} \frac{w_h}{\sqrt{\max\{1,N_h^t(S_h^t,A_h^t)\}}} \le 2\Lambda_H\sqrt{SA T}$. Consequently, with \(f_\delta(n)\) defined in Equation~\eqref{eq:confset}, $\sum_{t=0}^{T-1}\sum_{h=0}^{H-1} w_h f_\delta\!\big(N_h^t(S_h^t,A_h^t)\big) \le 2c\Lambda_H \sqrt{SA T\log\!\frac{2SATH}{\delta}}$.
\end{lemma}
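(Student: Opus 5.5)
The plan is to pull the horizon weights out of the sum, so that only a standard unweighted pigeonhole bound remains. Since $w_h\le \max_{h'}w_{h'}$, one could bound crudely, but that would give $H\cdot H\sqrt{SAT}$ rather than $\Lambda_H\sqrt{SAT}$. So we keep the weights and work stage by stage:
\[
\sum_{t=0}^{T-1}\sum_{h=0}^{H-1}\frac{w_h}{\sqrt{\max\{1,N_h^t(S_h^t,A_h^t)\}}}
=\sum_{h=0}^{H-1} w_h\, Z_h,
\qquad
Z_h\coloneqq\sum_{t=0}^{T-1}\frac{1}{\sqrt{\max\{1,N_h^t(S_h^t,A_h^t)\}}}.
\]
It then suffices to show $Z_h\le 2\sqrt{SAT}$ for every fixed $h$, pathwise. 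Summing against $w_h$ gives $2\Lambda_H\sqrt{SAT}$.

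Fix $h$. The counts $N_h^t(s,a)$ are per-stage: each episode contributes exactly one visit at stage $h$, namely to $(S_h^t,A_h^t)$, and $N_h^t$ counts visits strictly before episode $t$. Group the episodes by the pair $(s,a)$ visited at stage $h$, and let $n_{s,a}\coloneqq N_h^T(s,a)$, so that $\sum_{s,a}n_{s,a}=T$. The $j$-th visit to $(s,a)$ (with $j=1,\ldots,n_{s,a}$) occurs when the prior count equals $j-1$, so it contributes $1/\sqrt{\max\{1,j-1\}}$. Hence
\[
Z_h=\sum_{s,a}\Big(\mathbf 1\{n_{s,a}\ge1\}+\sum_{m=1}^{n_{s,a}-1}\frac{1}{\sqrt m}\Big)\le \sum_{s,a}\Big(\mathbf 1\{n_{s,a}\ge1\}+2\sqrt{n_{s,a}-1}\Big).
\]
This uses the integral comparison $\sum_{m=1}^{n}m^{-1/2}\le 2\sqrt n$.

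The main (minor) obstacle is absorbing the extra term from the $\max\{1,\cdot\}$ convention, i.e. the first visit, into the constant $2$. We use $1+2\sqrt{n-1}\le 2\sqrt n$ for $n\ge1$, which holds since $(2\sqrt n-1)^2=4n-4\sqrt n+1\ge 4(n-1)$ iff $4\sqrt n\le 5$ — which fails for large $n$. So this route needs a different split. Instead we bound directly: for $n\ge1$, $1+\sum_{m=1}^{n-1}m^{-1/2}\le 1+\int_0^{n-1}x^{-1/2}dx$, which is not quite tight either. Cleanest is to note $\frac{1}{\sqrt{\max\{1,j-1\}}}\le\frac{\sqrt2}{\sqrt j}$ for all $j\ge1$. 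That gives $Z_h\le 2\sqrt2\sum_{s,a}\sqrt{n_{s,a}}$, which loses a constant. Sharper still, we use $\frac{1}{\sqrt{\max\{1,j-1\}}}\le \frac{1}{\sqrt j}+\big(\frac{1}{\sqrt{\max\{1,j-1\}}}-\frac{1}{\sqrt j}\big)$ and telescope. The telescoped correction sums to at most $1+\sum_{j\ge2}\big(\tfrac{1}{\sqrt{j-1}}-\tfrac1{\sqrt j}\big)\le 2$ per pair, which yields $Z_h\le 2\sum_{s,a}\sqrt{n_{s,a}}+O(SA)$. If the exact constant $2$ cannot be recovered, we accept a slightly larger universal constant, or we absorb the lower-order $SA$ term using $SA\le\sqrt{SAT}$ when $T\ge SA$.

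We finish with Cauchy--Schwarz: $\sum_{s,a}\sqrt{n_{s,a}}\le\sqrt{SA\sum_{s,a}n_{s,a}}=\sqrt{SAT}$, giving $Z_h\le 2\sqrt{SAT}$ up to the constant issue above. The consequence for $f_\delta$ is immediate. By definition $f_\delta(n)=c\sqrt{\log\frac{2SATH}{\delta}}\cdot\frac{1}{\sqrt{\max\{1,n\}}}$, so multiplying the first bound by $c\sqrt{\log(2SATH/\delta)}$ yields the second.
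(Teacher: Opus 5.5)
Your overall structure matches the paper's proof: split by stage, group the stage-$h$ episodes by the visited pair $(s,a)$, bound each pair's contribution by $2\sqrt{n_{s,a}}$, apply Cauchy--Schwarz with $\sum_{s,a}n_{s,a}=T$, and multiply by $c\sqrt{\log(2SATH/\delta)}$.

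The gap is in the step you flag yourself. You never establish the per-pair bound $1+\sum_{m=1}^{n-1}m^{-1/2}\le 2\sqrt n$. Instead you end with either $2\sqrt2\sum_{s,a}\sqrt{n_{s,a}}$, or $2\sum_{s,a}\sqrt{n_{s,a}}$ plus an additive $O(SA)$ term. The latter can only be folded into $\sqrt{SAT}$ with a constant larger than $2$, and in your phrasing only under the extra assumption $T\ge SA$. Either way you prove a weaker statement than the lemma. The constant matters downstream: $2\Lambda_H\le H^2$ is what turns the statistical term of Theorem~\ref{thm:UCB-BQRL-finite} into exactly $\frac{c}{\beta_{T-1}}\sqrt{SATH^4\log\frac{2SATH}{\delta}}$.

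The loss comes from the comparison $\sum_{m=1}^{n-1}m^{-1/2}\le\int_0^{n-1}x^{-1/2}\,dx=2\sqrt{n-1}$. It wastes exactly the unit you need to pay for the first visit. Compare against the integral from $1$ instead. Since $m^{-1/2}\le\int_{m-1}^{m}x^{-1/2}\,dx$ for $m\ge2$, you get $\sum_{m=1}^{k}m^{-1/2}\le 1+\int_1^k x^{-1/2}\,dx=2\sqrt k-1$ for every $k\ge1$. Hence for $n\ge2$, $1+\sum_{m=1}^{n-1}m^{-1/2}\le 2\sqrt{n-1}\le 2\sqrt n$, and for $n=1$ the contribution is $1\le2$. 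This is the inequality the paper asserts without justification. With it, the rest of your argument goes through verbatim with constant $2$.

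Your telescoping route also closes once you use the sharp bound. The correction $\sum_{j=1}^{n}\bigl((\max\{1,j-1\})^{-1/2}-j^{-1/2}\bigr)$ equals exactly $1-n^{-1/2}$, not "up to $2$", because the $j=1$ term vanishes. Adding it to $\sum_{j=1}^{n}j^{-1/2}\le 2\sqrt n-1$ gives $2\sqrt n-n^{-1/2}\le 2\sqrt n$.

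A minor point: restrict to visited pairs before writing $2\sqrt{n_{s,a}-1}$, which is undefined when $n_{s,a}=0$.
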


\begin{proof}{Proof of Lemma~\ref{lem:weighted-stage-wise-counting}}
Fix a stage \(h\). For a state--action pair \((s,a)\), let $n_{h,s,a} \coloneqq \sum_{t=0}^{T-1} \mathbf 1\{S_h^t=s,\ A_h^t=a\}$ be the total number of visits to \((s,a)\) at stage \(h\) over the \(T\) episodes. On the first visit to \((h,s,a)\), the pre-episode count is \(0\). On the second visit, the pre-episode count is \(1\), and so on. Therefore, the contribution of this fixed triple \((h,s,a)\) to the unweighted inverse-square-root sum is at most $\sum_{j=0}^{n_{h,s,a}-1}\frac{1}{\sqrt{\max\{1,j\}}}$. If \(n_{h,s,a}=0\), this contribution is zero. If \(n_{h,s,a}\ge1\), then $\sum_{j=0}^{n_{h,s,a}-1}\frac{1}{\sqrt{\max\{1,j\}}} = 1+\sum_{j=1}^{n_{h,s,a}-1}\frac{1}{\sqrt j} \le 2\sqrt{n_{h,s,a}}$. Thus, for the fixed stage \(h\), $\sum_{t=0}^{T-1} \frac{1}{\sqrt{\max\{1,N_h^t(S_h^t,A_h^t)\}}} \le 2\sum_{s,a}\sqrt{n_{h,s,a}}$. By Cauchy--Schwarz, $\sum_{s,a}\sqrt{n_{h,s,a}} \le \sqrt{SA\sum_{s,a}n_{h,s,a} }$. At each fixed stage \(h\), exactly one state--action pair is visited in each episode, so $\sum_{s,a}n_{h,s,a}=T$. Therefore, $\sum_{t=0}^{T-1}\frac{1}{\sqrt{\max\{1,N_h^t(S_h^t,A_h^t)\}}} \le 2\sqrt{SA T}$. Multiplying by \(w_h\) and summing over \(h\) gives $\sum_{t=0}^{T-1}\sum_{h=0}^{H-1} \frac{w_h}{\sqrt{\max\{1,N_h^t(S_h^t,A_h^t)\}}} \le 2\sqrt{SA T}\sum_{h=0}^{H-1}w_h = 2\Lambda_H\sqrt{SA T}$. Finally, multiplying by \(c\sqrt{\log(2SATH/\delta)}\) gives the second claim.
\end{proof}

\begin{lemma}[Constant buffering cost under the logarithmic buffer]
\label{lem:constant-buffering-cost}
Let \(\rho_\tau\) be the root-level left-plateau threshold in Definition~\ref{def:root-plateau-threshold}. for each $t$ choose \(\beta_t = \frac{\tau} {\log(\mathrm e+t)},\) and define \(K_\tau \coloneqq \left\lceil \exp\!\left(\frac{\tau}{\rho_\tau}\right)-\mathrm e \right\rceil.\) Then $\Delta_{\beta_t}=0 \qquad \forall\,t\ge K_\tau$. Moreover, for every \(T\ge1\), $\sum_{t=0}^{T-1}\Delta_{\beta_t} \le \sum_{t=0}^{K_\tau-1}\Delta_{\beta_t} \le H K_\tau$.
\end{lemma}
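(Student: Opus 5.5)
The plan has two parts. First, show that the buffered and exact root values coincide for every deterministic policy as soon as $\beta\le\rho_\tau$. Second, convert this into the stated index threshold and the crude bound on the initial segment.

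For the first part, fix $\pi\in\Pi_{\mathrm{det}}$. Write $X=G_{0,\bar s}^{\pi,P^\star}$, $v=V_{\tau,0}^{\pi,P^\star}(\bar s)=Q_\tau(X)$, and $F(v^-)=\lim_{y\uparrow v}\mathbb P(X\le y)$.

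By Definition~\ref{def:root-plateau-threshold}, $\tau-F(v^-)\ge\rho_\tau$. Take any $u\in(\tau-\rho_\tau,\tau]$. Then $u\le\tau\le F(v)$, so $Q_u(X)\le v$. Also $u>F(v^-)$, so $F(y)\le F(v^-)<u$ for every $y<v$, which gives $Q_u(X)\ge v$. Hence $Q_u(X)=v$ on the whole interval $(\tau-\rho_\tau,\tau]$.

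Now let $\beta\le\rho_\tau$. Since $\beta\le\tau$, we have $\ell_\beta(\tau)=\beta$, and the buffer interval $(\tau-\beta,\tau]$ lies inside $(\tau-\rho_\tau,\tau]$. Definition~\ref{def:buffered-quantile} therefore gives $Q_\tau^\beta(X)=v$. Taking the supremum over $\pi$ yields $\Delta_\beta=0$.

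It is worth noting that $\rho_\tau>0$, since $\Pi_{\mathrm{det}}$ is finite and each law is finitely supported. This positivity is exactly what makes the plateau nondegenerate, although the argument above does not use it explicitly.

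For the second part, we need $\beta_t=\tau/\log(\mathrm e+t)\le\rho_\tau$. This is equivalent to $\log(\mathrm e+t)\ge\tau/\rho_\tau$, i.e. $t\ge\exp(\tau/\rho_\tau)-\mathrm e$. Every integer $t\ge K_\tau$ satisfies this by the definition of the ceiling. If $\exp(\tau/\rho_\tau)-\mathrm e<0$, then $K_\tau\le 0$ and the condition holds for all $t\ge0$, so I will read $K_\tau$ as $\max\{K_\tau,0\}$ in the sums.

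Combining with the first part gives $\Delta_{\beta_t}=0$ for $t\ge K_\tau$. Therefore $\sum_{t=0}^{T-1}\Delta_{\beta_t}\le\sum_{t=0}^{K_\tau-1}\Delta_{\beta_t}$, using that each $\Delta_{\beta_t}\ge0$; if $T<K_\tau$ we simply add nonnegative terms.

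For the final bound, all root return laws are supported on $[0,H]$. So both $Q_\tau$ and $Q_\tau^\beta$, the latter being an average of quantiles, lie in $[0,H]$. Hence $\Delta_\beta\le H$, and the initial segment sums to at most $HK_\tau$.

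The main obstacle is the left-continuity bookkeeping in the first part: showing that $Q_u$ equals $v$ on the entire half-open interval and not merely at $u=\tau$. I would handle it directly via the characterization $Q_u(X)\le y\iff F(y)\ge u$ used above. The endpoint $u=\tau-\beta$ has measure zero and does not affect the integral.
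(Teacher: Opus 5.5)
Your proposal is correct and follows the paper's proof: the paper cites Lemma~\ref{lem:buffer-gap} to get $\Delta_\beta=0$ for $\beta\le\rho_\tau$, converts $\beta_t\le\rho_\tau$ into $t\ge K_\tau$, and bounds each $\Delta_{\beta_t}$ by $H$; you reproduce the buffer-gap argument inline, and working on the half-open interval $(\tau-\rho_\tau,\tau]$ is slightly more careful than the paper's closed-interval claim, which fails harmlessly at the measure-zero endpoint. Your $\max\{K_\tau,0\}$ caveat is unnecessary: $\rho_\tau\le\tau$ gives $\exp(\tau/\rho_\tau)\ge\mathrm e$, so $K_\tau\ge0$ automatically.
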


\begin{proof}{Proof of Lemma~\ref{lem:constant-buffering-cost}}
By Definition~\ref{def:root-plateau-threshold}, for every \(\pi\in\Pi_{\mathrm{det}}\), $\rho_\tau \le \tau - F_{\tau,0,\bar s}^{\pi,P^\star}\!\left(\left(V_{\tau,0}^{\pi,P^\star}(\bar s)\right)^- \right)$. Moreover, \(\rho_\tau>0\) because \(V_{\tau,0}^{\pi,P^\star}(\bar s)\) is the left-continuous \(\tau\)-quantile of \(G_{0,\bar s}^{\pi,P^\star}\), and the minimum is taken over the finite class \(\Pi_{\mathrm{det}}\). By Lemma~\ref{lem:buffer-gap}, $\Delta_\beta=0$, $\forall\,0<\beta\le\rho_\tau$. Therefore, under the logarithmic buffer schedule, \(\Delta_{\beta_t}=0\) whenever $\frac{\tau}{\log(\mathrm e+t)} \le \rho_\tau$. Since \(\log(\mathrm e+t)>0\), this condition is equivalent to $\log(\mathrm e+t)\ge \frac{\tau}{\rho_\tau}$. Exponentiating both sides gives $\mathrm e+t \ge \exp\!\left(\frac{\tau}{\rho_\tau}\right)$, or equivalently, $t \ge \exp\!\left(\frac{\tau}{\rho_\tau}\right)-\mathrm e$. Thus the first integer episode from which the condition \(\beta_t\le\rho_\tau\) is guaranteed to hold is $K_\tau = \left\lceil \exp\!\left(\frac{\tau}{\rho_\tau}\right)-\mathrm e \right\rceil$. Hence $\Delta_{\beta_t}=0$, $\forall\,t\ge K_\tau$. It follows that, for every \(T\ge1\), $\sum_{t=0}^{T-1}\Delta_{\beta_t} = \sum_{t=0}^{\min\{T,K_\tau\}-1}\Delta_{\beta_t} \le \sum_{t=0}^{K_\tau-1}\Delta_{\beta_t}$. Finally, since all cumulative rewards lie in \([0,H]\), both \(V_{\tau,0}^{\pi,P^\star}(\bar s)\) and \(V_{\tau,0}^{\pi,P^\star,\beta_t}(\bar s)\) lie in \([0,H]\). Therefore, $0\le \Delta_{\beta_t}\le H$. Consequently, $\sum_{t=0}^{K_\tau-1}\Delta_{\beta_t} \le H K_\tau$. Multiplying by \(2\) gives $2\sum_{t=0}^{T-1}\Delta_{\beta_t} \le 2H K_\tau$. This proves the lemma.
\end{proof}

\begin{lemma}[Vanishing root-level buffering gap]
\label{lem:buffer-gap}
Suppose \(\mathcal S\), \(\mathcal A\), and \(H\) are finite and rewards are deterministic and bounded in \([0,1]\). Then there exists \(\rho_\tau>0\) such that $\Delta_\beta=0$, $\forall\,0<\beta\le \rho_\tau$.
\end{lemma}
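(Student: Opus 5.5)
The plan is to take $\rho_\tau$ to be the root-level left-plateau threshold of Definition~\ref{def:root-plateau-threshold}. The proof then has two parts: show that this quantity is strictly positive, and show that for each fixed deterministic policy the buffered and exact root values coincide once $\beta\le\rho_\tau$. Since $\Delta_\beta$ is a supremum over the finite class $\Pi_{\mathrm{det}}$, it suffices to prove $V_{\tau,0}^{\pi,P^\star,\beta}(\bar s)=V_{\tau,0}^{\pi,P^\star}(\bar s)$ for every $\pi\in\Pi_{\mathrm{det}}$ and every $0<\beta\le\rho_\tau$.

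\textbf{Positivity.} Fix $\pi$ and write $G=G_{0,\bar s}^{\pi,P^\star}$, with CDF $F$ and $v=Q_\tau(G)$. Because $\mathcal S,\mathcal A,H$ are finite and rewards are deterministic, $G$ is supported on finitely many values. By the definition of the left-continuous quantile, $F(y)<\tau$ for every $y<v$. Since the support is finite, $F(v^-)=F(y_0)$, where $y_0$ is the largest support point below $v$ (and $F(v^-)=0$ if no such point exists). Hence $F(v^-)<\tau$, so the per-policy gap $\tau-F(v^-)$ is strictly positive. Taking the minimum over the finite set $\Pi_{\mathrm{det}}$ gives $\rho_\tau>0$.

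\textbf{Plateau and buffering.} Fix $\beta\le\rho_\tau$. Since $\beta\le\rho_\tau\le\tau$, we have $\ell_\beta(\tau)=\beta$. For $u\in(\tau-\beta,\tau]$ we have $u>\tau-\rho_\tau\ge F(v^-)$.
\begin{itemize}
\item For every $y<v$, $F(y)\le F(v^-)<u$, so $Q_u(G)\ge v$.
\item Since $u\le\tau\le F(v)$, we have $Q_u(G)\le v$.
\end{itemize}
Thus $Q_u(G)=v$ on $(\tau-\beta,\tau]$. The endpoint $u=\tau-\beta$ has measure zero, so
\[
Q_\tau^\beta(G)=\frac1\beta\int_{\tau-\beta}^{\tau}Q_u(G)\,du=v .
\]
This gives $\Delta_\beta=0$.

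The main obstacle is the step $F(v^-)<\tau$ together with the use of the left limit. This needs care with the left-continuous convention: one must use the finite support to turn the strict inequality $F(y)<\tau$ for all $y<v$ into a strict inequality at the left limit. Everything else is a direct unwinding of the definitions.
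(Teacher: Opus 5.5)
Your proof is correct and follows the paper's argument: take $\rho_\tau$ as the minimum over the finite class $\Pi_{\mathrm{det}}$ of the per-policy widths $\tau-F(v^-)$, show this minimum is positive, and show $Q_u(G)=Q_\tau(G)$ across the buffer interval, so the buffered average equals the exact quantile. Your version is slightly more careful than the paper's in two places. The paper asserts $F_\pi(x_\pi^-)<u$ on the closed interval $[\tau-\beta,\tau]$, which fails at $u=\tau-\beta$ when $\beta=\rho_\pi$, and it derives $F_\pi(x_\pi^-)<\tau$ from left-continuity alone; you instead invoke finite support for the strict inequality and discard the measure-zero endpoint.
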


\begin{proof}{Proof of Lemma~\ref{lem:buffer-gap}}
Fix \(\pi\in\Pi_{\mathrm{det}}\), and let \(F_\pi\) be the CDF of \(G_{0,\bar s}^{\pi,P^\star}\). Set $x_\pi\coloneqq Q_\tau\!\left(G_{0,\bar s}^{\pi,P^\star}\right)$, and $\rho_\pi\coloneqq \tau-F_\pi(x_\pi^-)$. Since \(x_\pi\) is the left-continuous \(\tau\)-quantile, $F_\pi(x_\pi^-)<\tau\le F_\pi(x_\pi)$, and hence \(\rho_\pi>0\). Because \(\Pi_{\mathrm{det}}\) is finite, the quantity $\rho_\tau\coloneqq \min_{\pi\in\Pi_{\mathrm{det}}}\rho_\pi$ is strictly positive. Now fix \(0<\beta\le\rho_\tau\). For every \(\pi\in\Pi_{\mathrm{det}}\) and every \(u\in[\tau-\beta,\tau]\), we have $F_\pi(x_\pi^-)<u\le F_\pi(x_\pi)$, and therefore $Q_u\!\left(G_{0,\bar s}^{\pi,P^\star}\right)
=
Q_\tau\!\left(G_{0,\bar s}^{\pi,P^\star}\right)$. It follows that $V_{\tau,0}^{\pi,P^\star,\beta}(\bar s)
=
\frac{1}{\beta}
\int_{\tau-\beta}^{\tau}
Q_u\!\left(G_{0,\bar s}^{\pi,P^\star}\right)\,du
=
V_{\tau,0}^{\pi,P^\star}(\bar s)$ for every \(\pi\in\Pi_{\mathrm{det}}\). Hence \(\Delta_\beta=0\) for every \(0<\beta\le\rho_\tau\).
This proves the claim.
\end{proof}

\end{document}